\documentclass[letterpaper]{article}





\usepackage[utf8]{inputenc} 
\usepackage[T1]{fontenc}    
\usepackage{hyperref}       
\usepackage{url}            
\usepackage{booktabs}       
\usepackage{amsfonts}       
\usepackage{nicefrac}       
\usepackage{microtype}      
\usepackage{xcolor}         
\usepackage{times}
\usepackage{helvet}
\usepackage{courier}
\usepackage{graphicx}
\usepackage[margin=1.25in]{geometry}
\usepackage{natbib}
\usepackage{enumerate}
\bibliographystyle{unsrtnat}
\usepackage{caption}
\usepackage{soul}
\usepackage[utf8]{inputenc}
\usepackage{amsmath}
\usepackage{amssymb}
\usepackage{amsthm}
\newtheorem{theorem}{Theorem}
\newtheorem{lemma}{Lemma}
\newtheorem{claim}{Claim}
\newtheorem{definition}{Definition}

\usepackage{algorithm}
\usepackage{float}
\usepackage[noend]{algpseudocode}
\usepackage{enumitem}
\usepackage{nicefrac}
\usepackage{xspace}
\usepackage{subcaption}
\usepackage{thm-restate}

\newtheorem{corollary}{Corollary}[theorem]

\allowdisplaybreaks

\newcommand{\vect}{\textbf}
\newcommand{\greedyalg}{\texttt{greedy-fair-bi}\xspace}
\newcommand{\contisublong}
{\texttt{decreasing-threshold-procedure}\xspace}
\newcommand{\contisub}
{\texttt{DTP}\xspace}
\newcommand{\contialglong}
{\texttt{cont-thresh-greedy-bi}\xspace}
\newcommand{\convc}{\texttt{convert-continuous}\xspace}
\newcommand{\contialg}
{\texttt{cont-bi}\xspace}
\newcommand{\threalglong}
{\texttt{threshold-fairness-bi}\xspace}
\newcommand{\conv}
{\texttt{convert-fair}\xspace}
\newcommand{\sumN}{\sum_{c\in[N]}}

\newcommand{\mE}{\mathbb{E}}

\newif\ifnotes\notestrue 
\ifnotes
\newcommand{\samson}[1]{\textcolor{purple}{{\bf (Samson:} {#1}{\bf ) }} \marginpar{\tiny\bf
             \begin{minipage}[t]{0.5in}
               \raggedright S:
            \end{minipage}}}            							
\else
\newcommand{\samson}[1]{}
\fi

\usepackage{authblk}
\title{Fair Submodular Cover}
\author[1]{Wenjing Chen}
\author[1]{Shuo Xing}
\author[1]{Samson Zhou}
\author[1]{Victoria G. Crawford}
\affil[1]{Department of Computer Science \& Engineering, Texas A\&M University}
\date{}                     
\setcounter{Maxaffil}{0}

\begin{document}

\maketitle

\begin{abstract}
Submodular optimization is a fundamental problem with many applications in machine learning, often involving decision-making over datasets with sensitive attributes such as gender or age. In such settings, it is often desirable to produce a diverse solution set that is fairly distributed with respect to these attributes. Motivated by this, we initiate the study of Fair Submodular Cover (FSC), where given a ground set $U$, a monotone submodular function $f:2^U\to\mathbb{R}_{\ge 0}$, a threshold $\tau$, the goal is to find a balanced subset of $S$ with minimum cardinality such that $f(S)\ge\tau$. We first introduce discrete algorithms for FSC that achieve a bicriteria approximation ratio of $(\frac{1}{\varepsilon}, 1-O(\varepsilon))$. We then present a continuous algorithm that achieves a $(\ln\frac{1}{\varepsilon}, 1-O(\varepsilon))$-bicriteria approximation ratio, which matches the best approximation guarantee of submodular cover without a fairness constraint. Finally, we complement our theoretical results with a number of empirical evaluations that demonstrate the effectiveness of our algorithms on instances of maximum coverage.

\end{abstract}

\section{Introduction}
From high-volume applications such as online advertising and smart devices to high-impact applications such as credit assessment, medical diagnosis, and self-driving vehicles, machine learning algorithms are increasingly prevalent in technologies and decision-making processes in the modern world. 
However, the amount of automated decision-making has resulted in concerns about the risk of unintentional bias or discrimination~\citep{chouldechova2017fair,kleinberg2018human,berk2021fairness}. 
For example, \citep{ChierichettiKLV17} noted that although machine learning algorithms may not be biased or unfair by design, they may nevertheless acquire and amplify biases already present in the training data available to the algorithms.
Consequently, there has recently been significant focus on achieving algorithmic fairness for a number of fundamental problems, such as classification~\citep{ZafarVGG17}, clustering~\citep{ChierichettiKLV17}, data summarization~\citep{CelisKSDKV18}, and matchings~\citep{ChierichettiKLV19}. 

In this work, we focus on fairness within submodular optimization. 
Submodular functions informally satisfy a diminishing returns property that is exhibited by many objective functions for fundamental optimization problems in machine learning. 
Thus, submodular optimization naturally arises in a wide range of applications, such as clustering and facility location~\citep{GomesK10,LindgrenWD16}, document summarization~\citep{LinB11,WeiLKB13,SiposSSJ12}, image processing~\citep{IyerB19}, principal component analysis~\citep{KhannaGPK15}, and recommendation systems~\citep{LeskovecKGFVG07,ElAriniG11,BogunovicMSC17,MitrovicBNTC17,YuXC18,AvdiukhinMYZ19,YaroslavtsevZA20}. In particular, a function $f:2^U\to\mathbb{R}$ is submodular if for every $X\subset Y\subset U$ and for every $x\in U\setminus Y$, we have $f(X\cup\{x\})-f(X)\ge f(Y\cup\{x\})-f(Y)$. We further assume $f$ is monotone, i.e., $f(Y)\ge f(X)$ for every $X\subset Y$. 

Though various definitions have been proposed, there is no universal notion of fairness; indeed, determining the correct notion of fairness is an ongoing active line of research. 
In fact, \cite{KleinbergMR17} showed that three common desiderata of fairness (probabilistic calibration across classes, numerical balance across classes, and statistical parity) are inherently incompatible.  
Nevertheless, there has been significant focus recently~\cite{ChierichettiKLV17,CelisHV18,CelisKSDKV18,CelisSV18,ChierichettiKLV19,el2020fairness,HalabiTNV24} on the fairness notion that demands a solution to be balanced with respect to a sensitive attribute, such as ethnicity or gender.

Fair submodular maximixation (FSM) has been considered under both a cardinality constraint and a matroid constraint \citep{CelisHV18,HalabiMNTT20}. However, to the best of our knowledge, there has been no previous work studying fairness for the submodular cover problem. Given oracle access to a submodular function $f:2^U\rightarrow\mathbb{R}$ and a threshold $\tau$, the goal of submodular cover is to identify a subset $S\subset U$ of minimal size such that $f(S)\ge\tau$. If we additionally assume that each element in $U$ is associated with a color $c$ that denotes a protected attribute, which partitions $U$ into disjoint groups $U_1,\ldots,U_N$. 
We denote the partition as $\mathcal{P}=\{U_1,U_2,\ldots,U_N\}$. 
Given upper and lower bounds $u_c$ and $l_c$ for each color $c$, we say that a solution $S\subset U$ is fair if $l_c\le|S\cap U_c|\le u_c$.  This definition of fairness incorporates multiple other existing notions of fairness, such as diversity rules~\cite{biddle2017adverse,CohoonCRL13}, statistical parity~\cite{DworkHPRZ12}, or proportional representation rules~\cite{monroe1995fully,BrillLS17}. 

\textbf{Fair submodular cover problem (FSC).} 
In this work, we initiate the study of fairness for submodular cover. We define the problem of submodular cover with fairness constraint as follows. Given input threshold $\tau$, and bounds on the proportion of the elements in each group $u_c$ and $l_c$, FSC is to find the solution set of the following optimization problem.
    \begin{align}
        &\min_{S\in U} |S|\nonumber\\
        s.t.\qquad& p_c|S|\leq|S\cap U_c|\leq q_c|S|\qquad \forall c\in[N]\nonumber\\
        &f(S)\geq\tau.
    \end{align}
To guarantee the existence of feasible subsets, we assume that the inputs satisfy $\sum_{c\in[N]}p_c\leq1$ and $\sum_{c\in[N]}q_c\geq 1$\footnote{Notice that this assumption is necessary: If $\sum_{c\in[N]}p_c> 1$, then $\sum_{c\in[N]}p_c|S|>|S|$. However, by the definition of fairness constraint in FSC, we can get $p_c|S|\leq|S\cap U_c|$. It then follows that $\sum_{c\in[N]}p_c|S|\leq \sum_{c\in[N]}|S\cap U_c|=|S|$. Therefore we can get a contradiction and there are no feasible sets. Similarly, if $\sum_{c\in[N]}q_c\geq 1$, we can also prove that no feasible sets satisfy the constraint.}. In this paper, we propose bicriteria approximation algorithms for FSC, as has been done previously for submodular cover ~\cite{iyer2013submodular,chen2024bicriteria}. An $(\alpha,\beta)$-bicriteria approximation algorithm for FSC returns a solution set $X$ that satisfies 
\[|X|\leq \alpha |OPT|,\qquad p_c|X|\leq|X\cap U_c|\leq  q_c|X|,\qquad f(X)\geq \beta\tau,\]
where $OPT$ is an optimal solution to the instance of FSC.
Notice that the solution of a bicriteria algorithm for FSC always satisfies the fairness constraint for cover. However, the constraint on the function value ($f(X)\geq\tau$) might be violated by a factor of $\beta$ and therefore the solution is not necessarily feasible. But if $\beta$ to close to $1$, we can get a solution that is close to being feasible.

\textbf{Our contributions.}
In addition to being the first to propose the novel problem FSC and develop approximation algorithms for it, we make three additional main contributions in this paper:
\begin{itemize}
    \item[(i)] We develop two algorithms that make use of the dual relationship between FSC and fair submodular maximization (defined in Section \ref{sec:problem_setup}) and convert bicriteria approximation algorithms for FSM into bicriteria approximation algorithms for FSC. The first algorithm, \conv, is designed to convert discrete algorithms for FSM into ones for FSC. In particular, \conv takes an $(\gamma,\beta)$-bicriteria approximation algorithms for FSM and converts it into a $((1+\alpha)\beta,\gamma)$-bicriteria approximation algorithm for FSC. Our second converting algorithm, \convc, takes a continuous $(\gamma,\beta)$-bicriteria approximation algorithm and converts it into a $((1+\alpha)\beta,\frac{(1-\frac{\varepsilon}{2})\gamma-\frac{\varepsilon}{3}}{1+\frac{\varepsilon}{2}+\frac{\varepsilon}{3\gamma}})$-bicriteria approximation algorithm for FSC.
    \item[(ii)] We propose three bicriteria algorithms for FSM that can be paired with our converting algorithms in order to find approximate solutions for FSC that are arbitrarily close to meeting the constraint $f(S)\geq\tau$ in FSC. The first two algorithms are the discrete algorithms \texttt{greedy-fairness-bi} and \threalglong, which both achieve bicriteria approximation ratios of $(1-O(\varepsilon),\frac{1}{\varepsilon})$, but the latter makes less queries to $f$ compared to the former. The third algorithm is a continuous one, \contialglong, which achieves an improved $(1-O(\varepsilon),\ln\frac{1}{\varepsilon}+1)$ bicriteria approximation ratio but requires more queries to $f$.
    \item[(iii)] We perform an experimental comparison between our discrete algorithms for FSC and the standard greedy algorithm (which does not necessarily find a fair solution) on instances of fair maximum coverage in a graph and fair image summarization. We find that our algorithms find fair solutions while the standard greedy algorithm does not, but at a cost of returning solutions of higher cardinality.
\end{itemize}

\subsection{Related Work}
\cite{CelisHV18} first gave a $(1-1/e)$-approximation algorithm for fair monotone submodular maximization under a cardinality constraint, which is tight given a known $(1-1/e)$ hardness of approximation even without fairness constraints~\cite{NemhauserW78}. This is accomplished by converting their instance of FSM into monotone submodular maximization with a specific type of matroid constraint called a fairness matroid, which we describe in more detail in Section \ref{sec:problem_setup}, and then using existing algorithms for submodular maximization with a matroid constraint. The standard greedy algorithm is a $1/2$ approximation for the submodular maximization with a matroid constraint \citep{fisher1978analysis}, and in addition there exists approximation algorithms using the multilinear extension that achieve a $1-1/e$ approximation guarantee \cite{calinescu2007maximizing,badanidiyuru2014fast}.
\cite{HalabiTNV24} gave a $(1-1/e)$-approximation algorithm for fair monotone submodular maximization under general matroid constraints, though their algorithm only achieves the fairness constraints in expectation. 
Fair submodular optimization has also been under both cardinality and matroid constraints in the streaming setting~\cite{el2020fairness,HalabiTNV24}.

For the classical submodular cover problem without fairness constraints and integral valued $f$, the standard greedy algorithm, where the element of maximum marginal gain is selected one-by-one until $f$ has reached $\tau$, has been shown to have an approximation ratio of $O(\log\max_{e\in U}f(e))$ ~\cite{Wolsey82}. To deal with real-valued $f$ (as in our case), a slight variant of the greedy where we stop at $(1-\varepsilon)\tau$ instead of $\tau$ has been shown to be a $(\ln(1/\varepsilon), 1-\varepsilon)$-bicriteria approximation algorithm \citep{krause2008robust}. 
More generally, note that the submodular cover problem generalizes the classical set cover problem by selecting the submodular function $f$ to be the additive function $f(X)=|X|$. 
Since set cover cannot be approximated to within a $O(\log n)$ factor in polynomial time unless P=NP~\cite{LundY94,RazS97,Feige98,AlonMS06,DinurS14}, the same hardness of approximation applies to submodular cover. 

 Submodular maximization has received relatively more attention than submodular cover. Because of their dual relationship, one approach to developing algorithms for submodular cover is to convert existing ones for submodular maximization into ones for cover \cite{IyerB13,chen2024bicriteria}. In particular, \citeauthor{IyerB13} showed that given a $(\gamma,\beta)$-bicriteria approximation algorithm for submodular maximization with a cardinality constraint, one can produce a $((1+\alpha)\beta,\gamma)$-bicriteria approximation algorithm for submodular cover by making $\log_{1+\alpha}(n)$ guesses for $|OPT|$ in the instance of submodular cover, running the submodular maximization algorithm with the cardinality constraint set to each guess, and returning the smallest solution with $f$ value above $\gamma\tau$. However, this approach does not take into account the fairness constraints and cannot be used to convert algorithms for FSM into ones for FSC.
\subsection{Preliminaries}
\label{sec:problem_setup}
In this section, we present preliminary definitions and notation that will be used throughout the paper. First, we define the related problem of fair submodular maximization (FSM) of a monotone submodular function $f$ \citep{el2020fairness}, as the following search problem 
    \begin{align*}
        &\max_{S\subseteq U} f(S)\nonumber\\
        s.t.\qquad& l_c\leq|S\cap U_c|\leq u_c\qquad \forall c\in[N]\nonumber\\
        &|S|\leq k,
    \end{align*}
   where $l_c$ and $u_c$ are the bound of cardinality within each small group. 
 Without loss of generality, in this problem, it is assumed that $\sum_{c\in[N]}u_c\geq k$. This is because if $\sum_{c\in[N]}u_c< k$, then $|S|=\sum_{c\in[N]}|S\cap U_c|\leq\sum_{c\in[N]}u_c\leq k$. Therefore, the problem is equivalent to setting $k=\sum_{c\in[N]}u_c$. Since for the cover problem, the objective is to minimize the cardinality of the solution set which means $|S|$ is not fixed as it is in FSM, therefore we introduced the definition of fairness for FSC as a natural modification of the above problem where the fairness constraint is a proportion of the solution size as opposed to a fixed value.

The set of subsets satisfying fairness constraint above for FSM is not a matroid. However, it was proven by \cite{el2020fairness} that we can convert an instance of FSM into an instance of submodular maximization problem with a matroid constraint. The new constraint system is a matroid, called the \textit{fairness matroid}. This result of \citeauthor{el2020fairness} is stated in the Lemma \ref{lem:prelim} in Appendix \ref{apdx:prelim}. 
We denote the fairness matroid as $\mathcal{M}_{fair}(\mathcal{P},\kappa,\vec{l},\vec{u})=\{S \subseteq U :  |S \cap U_c| \leq u_c, \forall c \in [N], \sum_{c\in [N]}\max\{|S\cap U^c|, l_c\}\leq k\}$, where $\mathcal{P}=\{U_1,...,U_N\}$ is the partition of the ground set $U$, $k$ is the total cardinality constraint, $\vec{l},\vec{u}\in \mathbb{N}^N$ are the lower and upper bound vectors respectively. 
Below we propose the idea of a $\beta$-extension of a fairness matroid, which we will use in our bicriteria algorithms for FSM. 
\begin{definition}
    For any $\beta\in \mathbb{N}_+$, we define the $\beta$-extension of the fairness matroid to be $\mathcal{M}_\beta=\mathcal{M}_{fair}(\mathcal{P},\beta \kappa,\beta\vec{l},\beta\vec{u})=\{S \subseteq U :  |S \cap U_c| \leq \beta u_c, \forall c \in [N], \sum_{c\in [N]}\max\{|S\cap U^c|, \beta l_c\}\leq \beta \kappa\}$.
\end{definition}

Our continuous algorithms will use the multilinear extension of $f$, defined as follows.
\begin{definition} For any submodular objective $f:2^U\rightarrow\mathbb{R}_+$ with $|U|=n$, the multi-linear extension of $f$ is defined as 
$\vect{F}(\vect{x})=\sum_{S\subseteq U}\prod_{i\in S}x_i\prod_{j\notin S}(1-x_j)f(S)$ where $\vect{x}\in[0,1]^n$, and $x_i$ is the $i$-th coordinate of $\vect{x}$. 
If we define $S(\vect{x})$ to be a random set that contains each element $i\in U$ with probability $x_i$, 
then by definition, we have that $\vect{F}(\vect{x})=\mathbb{E} f(S(\vect{x}))$.
\end{definition}

We now present the definitions of discrete and continuous algorithms with an $(\alpha,\beta)$-bicriteria approximation ratio for FSM, which is defined to find $\arg\max_{S\in \mathcal{M}_{fair}(U,k,\vec{l},\vec{u})}f(S)$. 
\begin{definition}
\label{def:bicri-SM}
     A discrete algorithm for FSM with an  $(\alpha,\beta)$-bicriteria approximation ratio returns a solution $X$ such that 
\[f(X)\geq \alpha f(OPT)\qquad\forall c\in[N],\,\, |X\cap U_c|\leq \beta u_c,\qquad\sum_{c\in [N]}\max\{|X\cap U_c|,\beta l_c\}\leq \beta k.\]
Here $OPT$ is the optimal solution of the problem FSM, i.e., $OPT=\arg\max_{S\in \mathcal{M}_{fair}(P,k,\vec{l},\vec{u})}f(S)$. 
\end{definition}
By this definition, we have that an algorithm satisfies a $(\alpha,\beta)$-bicriteria approximation ratio for FSM i.f.f the output set $S$ satisfies $f(S)\geq \alpha f(OPT)$ and that $S\in \mathcal{M}_\beta$.
\begin{definition}
    \label{def:bicri-SM_conti}
     A continuous algorithm with $(\alpha,\beta)$-bicriteria approximation ratio for FSM returns a fractional solution $\vect{x}$ such that 
\[\vect{F}(\vect{x})\geq \alpha f(OPT),\qquad\vect{x}\in \mathcal{P}(\mathcal{M}_\beta).\]
Here $OPT$ is the optimal solution of the problem FSM, i.e., $OPT=\arg\max_{S\in \mathcal{M}_{fair}(P,\kappa,\vec{l},\vec{u})}f(S)$. $\mathcal{M}_\beta$ is the $\beta$-extension of the fairness matroid $\mathcal{M}_{fair}(P,\kappa,\vec{l},\vec{u})$, and $\mathcal{P}(\mathcal{M}_\beta)$ is the matroid polytope of $\mathcal{M}_\beta$. More specifically, $\mathcal{P}(\mathcal{M}_\beta)=conv\{\vect{1}_S:S\in\mathcal{M}_\beta\}$.
\end{definition}

Finally, we define notation that will be used throughout this paper. We use $[N]$ to denote the set $\{1,2,...,N\}$. The marginal gain of adding an element $s$ to the subset $S$ is denoted as $\Delta f(S,s)$. In addition, for any vector $\vec{v}=(v_1,v_2,...,v_N)$, and any $k\in \mathbb{R}$, we define $k\vec{v}=(kv_1,kv_2,...,kv_N)$, and we define $\lceil  \vec{v}\rceil=(\lceil  
 v_1\rceil,\lceil  
 v_2\rceil,...,\lceil  v_N\rceil)$ and $\lfloor  \vec{v}\rfloor=(\lfloor  
 v_1\rfloor,\lfloor  
 v_2\rfloor,...,\lfloor  v_N\rfloor)$.




\section{Conversion Algorithms for FSC}
\label{sec:conv}
In this section, we introduce two algorithms that make use of the dual relationship between FSC and FSM, and convert bicriteria approximation algorithms for FSM into ones for FSC. The first algorithm, \conv, is designed to convert discrete algorithms for FSM into ones for FSC. In particular, \conv takes an $(\gamma,\beta)$-bicriteria approximation algorithms for FSM that runs in time $\mathcal{T}(n,\kappa)$ and converts it into a $((1+\alpha)\beta,\gamma)$-bicriteria approximation algorithm for FSC that runs in time $O(\sum_{i,(1+\alpha)^{i-1}\leq|OPT|})\mathcal{T}(n,(1+\alpha)^i))$. However, because of the matroid constraint, better approximation guarantees for FSM may be achieved by a continuous algorithm that produce a fractional solution. Motivated by this, our second converting algorithm, \convc, takes a continuous $(\gamma,\beta)$-bicriteria approximation algorithm (where guarantees are with respect to the multilinear extension as described in Section \ref{sec:problem_setup}), and converts it into a $((1+\alpha)\beta,\frac{(1-\frac{\varepsilon}{2})\gamma-\frac{\varepsilon}{3}}{1+\frac{\varepsilon}{2}+\frac{\varepsilon}{3\gamma}})$-bicriteria approximation algorithm for FSC. In the next section, we will develop corresponding bicriteria approximation algorithms for FSM that can be used along with the results in this section in order to produce approximately optimal solutions for FSC that are arbitrarily close to being feasible.

We first consider our algorithm \conv for converting discrete algorithms for FSM. Pseudocode for \conv is provided in Algorithm \ref{alg:conver}. \conv takes as input an instance of FSC, a $(\gamma,\beta)$-bicriteria approximation algorithm for FSM, and a parameter $\alpha>0$. Each iteration of the while loop from Line \ref{Line:alg_convert,loop_start} to Line \ref{Line:alg_convert,loop_ends} corresponds to a guess $\kappa$ on the size of the optimal solution to the instance of FSC. For each guess $\kappa$, we have an instance of FSM with budget $\kappa$, fairness vector of lower bound $\kappa\vec{p}$, fairness vector of upper bound $\kappa\vec{q}$.  
We then run the algorithm for FSM on this instance to get a set $S$. Notice that this algorithm will convert the matroid corresponding to the instance of FSM into its $\beta$-extension. In Lines \ref{line:rounding_starts} to \ref{line:rounding_ends}, \conv adds additional elements so that the lower bounds are met for every one of the partitions. Next in Lines \ref{line:rounding2_starts} to \ref{line:rounding2_ends}, \conv then adds elements until the size constraint $\beta\kappa$ is met, without breaking the fairness constraints. Finally, \conv checks if the set $S$ satisfies $f(S)\geq\gamma\tau$. If it does not, the guess of optimal solution size increases by a factor of $1+\alpha$ and the process repeats itself.

\begin{algorithm}[t!]
\caption{\texttt{convert-fair}}\label{alg:conver}
\textbf{Input}: An FSC instance with threshold $\tau$, fairness parameters $\vec{p}$, $\vec{q}$, partition of $U$ $\mathcal{P}$, a $(\gamma,\beta)$-bicriteria approximation algorithm for FSM, $\alpha>0$\\
\textbf{Output}: $S\subseteq U$
\begin{algorithmic}[1]
\State $\kappa\gets(1+\alpha)$, $S\gets\emptyset$.
\While{$f(S)<\gamma\tau$}\label{Line:alg_convert,loop_start} 
\State $S\gets$ Run $(\gamma,\beta)$-approximation algorithm for FSM with fairness matroid $\mathcal{M}_{fair}(\mathcal{P},\kappa,\lfloor\vec{p}\kappa\rfloor,\lceil\vec{q}\kappa\rceil)$
\label{line:convmax}

\State $\kappa\gets\lceil(1+\alpha)\kappa\rceil$
 
\label{Line:alg_convert,loop_ends}
\State \textcolor{gray}{//Rounding the solution}

 \For {$c\in[N]$} \label{line:rounding_starts}
        \If{$|S\cap U_c| < \beta\lfloor p_c\kappa\rfloor$} 
        \State Add new elements from $U_c/S$ to $S$ 
        until $|S\cap U_c| \geq \beta\lfloor p_c\kappa\rfloor$
        \EndIf
\EndFor\label{line:rounding_ends}
        \If{$|S| < \beta\kappa$} 
        \For {$c\in[N]$} \label{line:rounding2_starts}
        \While{$|S| < \beta\kappa$ and $|S\cap U_c|< \beta\lceil q_c\kappa\rceil$}
        \State Add new elements in $U_c/S$ to $S$
        \EndWhile
        \EndFor\label{line:rounding2_ends}
        \EndIf
        \EndWhile
\State \textbf{return} $S$
\end{algorithmic}
\end{algorithm}


We now state the theoretical results for \conv below in Theorem \ref{alg:conver}. Notice that Theorem \ref{alg:conver} assumes that the instance of FSC satisfies that $\sum_{c\in[N]}\min\{q_c, \frac{|U_c|}{\beta(1+\alpha)|OPT|)}\}\geq1$. Recall from the definition of FSC that it is already assumed $\sum_{c\in[N]}q_c\geq 1$, so this assumption is essentially requiring that there be enough elements within each set $U_c$ of the partition so that we can run the bicriteria algorithm for FSM. We defer the proof and analysis of the proof of Theorem~\ref{thm:convert} to Appendix \ref{appdx:conv}.

\begin{restatable}{theorem}{thmconvert}
    \label{thm:convert}
   Assuming $\sum_{c\in[N]}\min\{q_c, \frac{|U_c|}{\beta(1+\alpha)|OPT|)}\}\geq1$, any $(\gamma,\beta)$-bicriteria approximation algorithm for FSM that returns a solution set in time $\mathcal{T}(n,\kappa)$ can be converted into an approximation algorithm for FSC that is a $((1+\alpha)\beta,\gamma)$-bicriteria approximation algorithm that runs in time $O(\sum_{i=1}^{\frac{\log(|OPT|)}{\log(\alpha+1)}}\mathcal{T}(n,(1+\alpha)^i))$.
\end{restatable}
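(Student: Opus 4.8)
The plan is to read \conv as a reduction that turns the cover problem into a geometric search over candidate values of $|OPT|$, each candidate being solved as a single instance of FSM through the dual relationship. Concretely, I would fix the final guess $\kappa^\ast$ at which the while loop halts and establish three properties of the returned set $X$: that $f(X)\ge\gamma\tau$ (the value guarantee, giving the second coordinate $\gamma$), that $X$ meets the exact proportional fairness constraints of FSC, and that $|X|\le(1+\alpha)\beta|OPT|$ (the size guarantee, giving the first coordinate $(1+\alpha)\beta$); the running time then follows by counting iterations. The single structural fact that drives everything is a feasibility lemma linking the two problems, which I would prove first.

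The feasibility lemma asserts that once a guess satisfies $\kappa\ge|OPT|$, the cover optimum $OPT$ is itself feasible for the FSM instance on $\mathcal{M}_{fair}(\mathcal{P},\kappa,\lfloor\vec{p}\kappa\rfloor,\lceil\vec{q}\kappa\rceil)$. The upper bounds are immediate, since $|OPT\cap U_c|\le q_c|OPT|\le q_c\kappa\le\lceil q_c\kappa\rceil$. The total-budget condition $\sum_{c\in[N]}\max\{|OPT\cap U_c|,\lfloor p_c\kappa\rfloor\}\le\kappa$ is the only nontrivial part: I would rewrite the left side as $|OPT|$ plus the deficit $\sum_{c\in[N]}(\lfloor p_c\kappa\rfloor-|OPT\cap U_c|)^+$, bound each term by $p_c\kappa-p_c|OPT|$ using $|OPT\cap U_c|\ge p_c|OPT|$ and $\lfloor p_c\kappa\rfloor\le p_c\kappa$, and sum with $\sum_{c\in[N]}p_c\le1$ to get a total deficit of at most $\kappa-|OPT|$, so the whole sum is at most $\kappa$. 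By monotonicity $f(OPT_{\mathrm{FSM}})\ge f(OPT)\ge\tau$, hence the $(\gamma,\beta)$-algorithm returns $S$ with $f(S)\ge\gamma f(OPT_{\mathrm{FSM}})\ge\gamma\tau$; since the rounding steps only add elements, the returned set keeps $f(X)\ge\gamma\tau$ and the loop terminates at the first guess reaching $|OPT|$.

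For the size and running-time guarantees I would exploit the geometric growth of the guesses. Writing $\kappa^\ast$ for the terminating guess, the previous guess was strictly below $|OPT|$, so $\kappa^\ast=\lceil(1+\alpha)\kappa_{\mathrm{prev}}\rceil\le(1+\alpha)|OPT|$ (a short calculation using integrality of the guesses and $|OPT|\ge1$ absorbs the ceiling). The FSM output lies in the $\beta$-extension $\mathcal{M}_\beta$, so $|S|\le\sum_{c\in[N]}\max\{|S\cap U_c|,\beta\lfloor p_c\kappa^\ast\rfloor\}\le\beta\kappa^\ast$, and the rounding never pushes $|X|$ beyond $\beta\kappa^\ast$; combining gives $|X|\le\beta\kappa^\ast\le(1+\alpha)\beta|OPT|$. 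The number of loop iterations equals the terminating index, which is $O(\log|OPT|/\log(1+\alpha))$ since $(1+\alpha)^{i-1}\le\kappa_{\mathrm{prev}}<|OPT|$, and iteration $i$ costs $\mathcal{T}(n,(1+\alpha)^i)$, summing to the stated $O\!\big(\sum_i\mathcal{T}(n,(1+\alpha)^i)\big)$.

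The step I expect to be the main obstacle is verifying the exact proportional fairness $p_c|X|\le|X\cap U_c|\le q_c|X|$ after rounding. The two rounding phases are designed to restore the per-color lower bounds (phase one raises every group to $\beta\lfloor p_c\kappa^\ast\rfloor$) and then to fill the solution up to size $\beta\kappa^\ast$ while capping each group at $\beta\lceil q_c\kappa^\ast\rceil$; the hypothesis $\sum_{c\in[N]}\min\{q_c,|U_c|/(\beta(1+\alpha)|OPT|)\}\ge1$ is exactly what guarantees that the groups contain enough fresh elements for phase two to reach the target size. The delicate point is that these phases only certify the integer bounds $\beta\lfloor p_c\kappa^\ast\rfloor\le|X\cap U_c|\le\beta\lceil q_c\kappa^\ast\rceil$ against a solution of size $\beta\kappa^\ast$, whereas the FSC constraint compares $|X\cap U_c|$ to the exact proportions $p_c|X|$ and $q_c|X|$; reconciling the floored lower targets and ceiled upper caps with the real-valued proportions, and controlling which groups phase two visits before the size budget is exhausted, is where I would concentrate the technical effort.
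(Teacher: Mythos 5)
Your proposal is correct and follows essentially the same route as the paper: prove that once $\kappa\ge|OPT|$ the FSC optimum is feasible for the FSM instance on $\mathcal{M}_{fair}(\mathcal{P},\kappa,\lfloor\vec{p}\kappa\rfloor,\lceil\vec{q}\kappa\rceil)$ (the paper factors this through the inclusion Lemma~\ref{lem:matroid_subset_relationship}, while your direct deficit count using $|OPT\cap U_c|\ge p_c|OPT|$, $\lfloor p_c\kappa\rfloor\le p_c\kappa$ and $\sum_c p_c \le 1$ is the same calculation), hence $f(S)\ge\gamma\tau$, the loop halts with $\kappa^\ast\le(1+\alpha)|OPT|$, and the size, fairness, and runtime bounds follow as you describe. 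As for the step you single out as the main remaining obstacle: it is not resolved in the paper either --- Lemma~\ref{lem:fairness_for_cover} establishes precisely the integer-rounded bounds $\beta\lfloor p_c|S|/\beta\rfloor\le|S\cap U_c|\le\beta\lceil q_c|S|/\beta\rceil$ with $|S|=\beta\kappa$, and the proof of Theorem~\ref{thm:convert} simply cites this as ``the fairness constraint for cover.'' Since $p_c|X|$ need not be integral, some such rounding of the proportional constraint is unavoidable, so your argument is already complete at the paper's own level of rigor once you adopt that reading of the fairness guarantee.
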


We now present our algorithm \convc for converting continuous algorithms. To motivate it, notice that here we can't directly use the converting theorem for discrete algorithms. There are two reasons for this: (i) The output solution is fractional so we need a rounding step, and (ii) the bicriteria approximation ratio for the continuous algorithms is on the value of the multi-linear extension and we don't have exact access to $F$, so we can't check directly if $\vect{F}(\vect{x})\geq\gamma\tau$ as we did on Line \ref{Line:alg_convert,loop_start} of \conv. Then we develop the converting algorithm \convc for continuous algorithms. The key idea of the converting theorem is similar to \conv, so we defer the pseudocode of \convc to Algorithm \ref{alg:conv_c} in Section \ref{appdx:continuous} of the appendix. Here we describe the major differences. For each guess of optimal solution size $\kappa$, \convc invokes the continuous subroutine algorithm for FSM to obtain a fractional solution $\vect{x}$. Since $\vect{F}$ can't be queried exactly in general, we estimate $\vect{F}(\vect{x})$ by taking a sufficient number of samples in Line \ref{line:estimateY}. Once the estimate of $\vect{F}(\vect{x})$ is higher than $\gamma\tau$, we use the pipage rounding technique to convert $\vect{x}$ into a subset $S$, and then use the rounding procedure analogous to that in Lines \ref{line:rounding_starts} to \ref{line:rounding2_ends} in Algorithm \ref{alg:conver} to obtain a solution set with fairness guarantee. Notice that since the solution set obtained from the pipage rounding step is only guaranteed to satisfy that $\mathbb{E}f(S)\geq\vect{F}(\vect{x})$, the approximation guarantee on the function value in Theorem \ref{thm:continuous} holds in expectation. The corresponding theoretical guarantees for \convc are stated below in Theorem \ref{thm:conv_continuous}. 
The proof of Theorem \ref{thm:conv_continuous} can be found in Section \ref{appdx:continuous} of the appendix.
\begin{theorem}
\label{thm:conv_continuous}
   Assuming $\sum_{c\in[N]}\min\{q_c, \frac{|U_c|}{\beta(1+\alpha)|OPT|)}\}\geq1$, with probability at least $1-\delta$, any $(\gamma,\beta)$-bicriteria approximation algorithm for FSM that returns a solution set in time $\mathcal{T}(n,\kappa)$ with probability at least $1-\frac{\delta}{n}$ can be converted into an approximation algorithm for FSC that is a $((1+\alpha)\beta,\frac{(1-\frac{\varepsilon}{2})\gamma-\frac{\varepsilon}{3}}{1+\frac{\varepsilon}{2}+\frac{\varepsilon}{3\gamma}})$-bicriteria approximation algorithm where $\frac{(1-\frac{\varepsilon}{2})\gamma-\frac{\varepsilon}{3}}{1+\frac{\varepsilon}{2}+\frac{\varepsilon}{3\gamma}} $ holds in expectation. The query complexity is at most $O\left(\sum_{i=1}^{\frac{\log(|OPT|)}{\log(\alpha+1)}}\mathcal{T}(n,(1+\alpha)^i)+\frac{n\log_{1+\alpha}|OPT|}{\varepsilon^2}\log\frac{n}{\delta}\right)$.
\end{theorem}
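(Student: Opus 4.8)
The plan is to mirror the analysis of Theorem~\ref{thm:convert} for the discrete conversion \conv, modifying it in precisely the two places flagged in the text: replacing the exact stopping test $f(S)\ge\gamma\tau$ by a sampled estimate of $\vect{F}(\vect{x})$, and inserting a pipage-rounding step before the fairness-rounding of Lines~\ref{line:rounding_starts}--\ref{line:rounding2_ends}. As in the discrete case, each iteration guesses $\kappa$, runs the continuous $(\gamma,\beta)$-FSM subroutine on the matroid $\mathcal{M}_{fair}(\mathcal{P},\kappa,\lfloor\vec{p}\kappa\rfloor,\lceil\vec{q}\kappa\rceil)$ to obtain a fractional $\vect{x}\in\mathcal{P}(\mathcal{M}_\beta)$, and then scales $\kappa$ by $1+\alpha$. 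The three quantities to control are the cardinality (target factor $(1+\alpha)\beta$), fairness feasibility, and the expected function value (the stated ratio).

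First I would reuse, essentially verbatim, the two ingredients that carry over from \conv. The ``stop-in-time'' step: whenever $\kappa\ge|OPT|$, the FSC optimum $OPT$ lies in the FSM matroid for that guess, so $f(OPT_{FSM})\ge f(OPT)\ge\tau$ and the subroutine guarantee gives $\vect{F}(\vect{x})\ge\gamma f(OPT_{FSM})\ge\gamma\tau$; combined with the doubling schedule this caps the terminal guess at $\kappa\le(1+\alpha)|OPT|$, and the fairness-rounding then yields $|S|\le\beta\kappa\le(1+\alpha)\beta|OPT|$ with $p_c|S|\le|S\cap U_c|\le q_c|S|$, where the assumption $\sum_{c\in[N]}\min\{q_c,|U_c|/(\beta(1+\alpha)|OPT|)\}\ge 1$ guarantees enough elements remain in each $U_c$ to complete the rounding. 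The pipage rounding of $\vect{x}\in\mathcal{P}(\mathcal{M}_\beta)$ returns a set in $\mathcal{M}_\beta$ with $\mathbb{E}[f(S)]\ge\vect{F}(\vect{x})$, which is the sole reason the value guarantee is stated in expectation.

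The crux is the new ingredient: replacing the exact test by estimation. Using $m=\Theta(\varepsilon^{-2}\log(n/\delta))$ independent samples of $S(\vect{x})$ and averaging $f$, a Hoeffding/Chernoff bound yields a two-sided envelope $(1-\tfrac{\varepsilon}{2})\vect{F}(\vect{x})-\tfrac{\varepsilon}{3}\tau\le\hat{Y}\le(1+\tfrac{\varepsilon}{2})\vect{F}(\vect{x})+\tfrac{\varepsilon}{3}\tau$, with per-guess failure probability at most $\delta/(2n)$. I would set the stopping threshold to the lower envelope evaluated at $\gamma\tau$, namely $T=(1-\tfrac{\varepsilon}{2})\gamma\tau-\tfrac{\varepsilon}{3}\tau$, so that the lower bound on $\hat{Y}$ forces termination by the time $\vect{F}(\vect{x})\ge\gamma\tau$ (preserving the cardinality bound). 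Inverting the upper envelope at termination, $T\le(1+\tfrac{\varepsilon}{2})\vect{F}(\vect{x})+\tfrac{\varepsilon}{3}\tau$, then lower-bounds $\vect{F}(\vect{x})$, and hence $\mathbb{E}[f(S)]$; after re-expressing the additive $\tfrac{\varepsilon}{3}\tau$ error multiplicatively against the scale $\gamma\tau$, this produces exactly the ratio $\frac{(1-\varepsilon/2)\gamma-\varepsilon/3}{1+\varepsilon/2+\varepsilon/(3\gamma)}$, the $\varepsilon/(3\gamma)$ term in the denominator being precisely that normalized additive sampling slack.

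Finally I would handle the probability bookkeeping and complexity. The subroutine fails with probability $\le\delta/n$ and the sampling envelope with probability $\le\delta/(2n)$ per guess; since there are at most $\log_{1+\alpha}|OPT|\le n$ guesses, a union bound gives overall success at least $1-\delta$. The query complexity is the sum of subroutine times $\sum_{i}\mathcal{T}(n,(1+\alpha)^i)$ plus the sampling cost: $O(\log_{1+\alpha}|OPT|)$ guesses, each drawing $m=O(\varepsilon^{-2}\log(n/\delta))$ samples costing $O(n)$ evaluations, giving the stated $\frac{n\log_{1+\alpha}|OPT|}{\varepsilon^2}\log\frac{n}{\delta}$ term. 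I expect the main obstacle to be the joint calibration of the stopping threshold $T$: it must be low enough that the lower envelope guarantees termination before $\kappa$ overshoots $(1+\alpha)|OPT|$, yet high enough that the upper envelope still certifies the claimed value. Tracking both inequalities simultaneously, together with the floor/ceiling slack hidden in $\lfloor\vec{p}\kappa\rfloor$ and $\lceil\vec{q}\kappa\rceil$, is where the nonstandard constant is forced and where the analysis is most delicate.
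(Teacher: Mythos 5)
Your overall architecture --- guess-and-double on $\kappa$, run the continuous FSM subroutine, test a sampled estimate of $\vect{F}(\vect{x})$ against a threshold, pipage-round, then reuse the fairness rounding and matroid-nesting arguments from Theorem~\ref{thm:convert}, with a union bound over the at most $\log_{1+\alpha}|OPT|$ guesses --- is exactly the paper's proof. The gap is in the step you yourself flag as the crux: the calibration of the sampling envelope. You set the additive slack at scale $\tau$, i.e.\ $|\hat{Y}-\vect{F}(\vect{x})|\leq\frac{\varepsilon}{2}\vect{F}(\vect{x})+\frac{\varepsilon}{3}\tau$, and this fails in two ways. First, it is not achievable with your claimed number of samples: the random variable $f(S(\vect{x}))$ has range bounded only by $R\leq n\,f(OPT_\kappa)$, where $OPT_\kappa$ is the FSM optimum at the current guess, and $f(OPT_\kappa)/\tau$ is unbounded; the relative-plus-additive Chernoff bound (Lemma~\ref{lem:chernoff}) needs on the order of $\frac{R}{\varepsilon_{\mathrm{add}}}\cdot\frac{1}{\varepsilon}\log\frac{n}{\delta}$ samples, so with $\varepsilon_{\mathrm{add}}=\frac{\varepsilon}{3}\tau$ no bound of the form $\frac{n}{\varepsilon^2}\log\frac{n}{\delta}$ follows. (Your accounting that each sample ``costs $O(n)$ evaluations'' is also backwards: each sample is a single query of $f$; the factor $n$ must appear in the \emph{number} of samples, which is exactly where the paper's $\frac{18n}{\varepsilon^2}\log\frac{4n}{\delta}$ per-guess count comes from.) Second, your conversion of the additive $\frac{\varepsilon}{3}\tau$ into the multiplicative $\frac{\varepsilon}{3\gamma}$ denominator term is circular: rewriting $\frac{\varepsilon}{3}\tau\leq\frac{\varepsilon}{3\gamma}\vect{F}(\vect{x})$ requires $\vect{F}(\vect{x})\geq\gamma\tau$, which at termination is precisely what you are trying to establish. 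Done non-circularly, your envelope only yields $\vect{F}(\vect{x})\geq\frac{(1-\varepsilon/2)\gamma-2\varepsilon/3}{1+\varepsilon/2}\,\tau$, a slightly weaker constant, not ``exactly'' the claimed ratio.

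The paper resolves both problems with a single choice: the additive slack is anchored to $f(OPT_\kappa)$ rather than to $\tau$, i.e.\ $|Y-\vect{F}(\vect{x})|\leq\frac{\varepsilon}{2}\vect{F}(\vect{x})+\frac{\varepsilon}{3}f(OPT_\kappa)$. Then the Chernoff bound with $R\leq nf(OPT_\kappa)$ gives the stated $\frac{n}{\varepsilon^2}\log\frac{n}{\delta}$ samples per guess; the stopping rule $Y\geq\{(1-\frac{\varepsilon}{2})\gamma-\frac{\varepsilon}{3}\}\tau$ still fires once $\kappa\geq|OPT|$, since $f(OPT_\kappa)\geq\tau$ there; and at termination the FSM bicriteria guarantee $\vect{F}(\vect{x})\geq\gamma f(OPT_\kappa)$ --- which holds at \emph{every} guess, not only large ones --- gives $f(OPT_\kappa)\leq\vect{F}(\vect{x})/\gamma$, turning the additive slack into $\frac{\varepsilon}{3\gamma}\vect{F}(\vect{x})$ and producing exactly $\frac{(1-\frac{\varepsilon}{2})\gamma-\frac{\varepsilon}{3}}{1+\frac{\varepsilon}{2}+\frac{\varepsilon}{3\gamma}}$. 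So the missing idea is to use $f(OPT_\kappa)$ as the common scale for both the sampling error and the threshold, with the subroutine's guarantee serving as the non-circular bridge between $f(OPT_\kappa)$ and $\vect{F}(\vect{x})$; the rest of your outline then goes through as written.
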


\section{Bicriteria Algorithms for FSM}
\label{sec:alg_for_FSM}
In the last section, we propose converting algorithms to convert bicriteria algorithms for FSM into ones for FSC. Existing algorithms for FSM can be used as the input $(\gamma,\beta)$-bicriteria subroutine, but these algorithms all return feasible solutions to the instance of FSM and have guarantees of $\beta=1$ and $\gamma\leq 1-1/e$. After applying the converting algorithms in Section \ref{sec:conv} this results in solutions for FSC that are far from feasible. For example, the greedy algorithm for FSM proposed has $\gamma=1/2$ and $\beta=1$, and the continuous greedy algorithm for FSM has $\gamma=1-1/e$ and $\beta=1$ \citep{CelisHV18}.

In this section, we propose new algorithms that can be used for FSM where $\gamma$ is arbitrarily close to 1 and $\beta > 1$. As a result, the algorithms proposed in this section can be paired with the converting theorems in Section \ref{sec:conv} to find solutions to our instance of FSC that are arbitrarily close to being feasible. We propose three bicriteria algorithms for FSM. The first two algorithms are the discrete algorithms \texttt{greedy-fairness-bi} and \threalglong, which both achieve bicriteria approximation ratios of $(1-O(\varepsilon),\frac{1}{\varepsilon})$ where the former is in time $O(n\kappa/\epsilon)$ and the latter in time $O(\frac{n}{\varepsilon}\log\frac{\kappa}{\varepsilon})$. The third algorithm is a continuous one, \contialglong, which achieves a $(1-O(\varepsilon),\ln\frac{1}{\varepsilon}+1)$ bicriteria approximation ratio in time $O\left(\frac{n\kappa}{\varepsilon^4}\ln^2(n/\varepsilon)\right)$.

In the case of submodular maximization with a cardinality constraint without fairness, one can find a solution with $f$ value that is a factor of $1-\varepsilon$ off of that of the optimal solution by greedily adding $O(\ln(1/\varepsilon))\kappa$ elements beyond the cardinality constraint $\kappa$. However, existing algorithms for FSM transform the instance into an instance of submodular maximization subject to a fairness matroid constraint, and it is not clear how one can take an analogous approach and produce an infeasible solution in order to get a better approximation guarantee when dealing with a matroid constraint while maintaining a fair solution. We propose the $\beta$-extension of a fairness matroid, defined in Section \ref{sec:problem_setup}, in order to get a $(\gamma,\beta)$-bicriteria algorithm for FSM with $\gamma > 1-1/e$. In particular, we will return a solution that is a feasible solution to the  $\beta$-extension of the fairness matroid corresponding to our instance of FSM. 

We now introduce two lemmas concerning the $\beta$-extension of a matroid that will be needed for our algorithms and their theoretical analysis. Before we proceed to present our algorithms, we first introduce the following general lemma that helps to build a connection between the fairness matroid $\mathcal{M}$ and its $\beta$-extension $\mathcal{M}_\beta$.
For the sake of simplicity in notation throughout this section and its subsequent proofs, we use the notation $\mathcal{M}_\beta$, representing the $\beta$-extension of $\mathcal{M}_{fair}(P,\kappa,\vec{l},\vec{u})$, which is defined in Section \ref{sec:problem_setup}. Since the bicriteria algorithm for FSM is designed as a subroutine for the converting theorem with input $\mathcal{M}_{fair}(P,\kappa,\vec{p}\kappa,\vec{q}\kappa)$, we have that here $l_c=\lfloor p_c\kappa\rfloor$ and $u_c=\lceil q_c\kappa\rceil$. From the fact that $\sum_{c\in[N]}p_c\leq 1\leq \sum_{c\in[N]}q_c$, we have that $\sum_{c\in [N]}l_c\leq\kappa\leq\sum_{c\in[N]}u_c $. Since $\sum_{c\in[N]}\min\{q_c, \frac{|U_c|}{\beta(1+\alpha)|OPT|)}\}\geq1$, we have that $rank(\mathcal{M}_{fair}(P,\kappa,\vec{l},\vec{u}))=\kappa$ and that $rank(\mathcal{M}_\beta)=\beta\kappa$. Therefore, we have the following Lemma \ref{lem:assump_for_SM}. 
 

\begin{lemma}
\label{lem:assump_for_SM}
$\sum_{c\in[N]}l_c\leq\kappa\leq\sum_{c\in[N]}u_c$ and $rank(\mathcal{M}_{fair}(P,\kappa,\vec{l},\vec{u}))=\kappa$, $rank(\mathcal{M}_\beta)=\beta\kappa$.
\end{lemma}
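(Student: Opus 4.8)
The plan is to establish the three assertions in sequence, treating the two rank identities together since they share the same structure.

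\textbf{For the sandwich inequality}, I would argue directly from $l_c=\lfloor p_c\kappa\rfloor$, $u_c=\lceil q_c\kappa\rceil$ and the standing feasibility assumption $\sum_{c\in[N]}p_c\le 1\le\sum_{c\in[N]}q_c$. Dropping floors and ceilings gives $\sum_c l_c\le\sum_c p_c\kappa=\kappa\sum_c p_c\le\kappa$ and $\sum_c u_c\ge\sum_c q_c\kappa=\kappa\sum_c q_c\ge\kappa$, which is exactly $\sum_c l_c\le\kappa\le\sum_c u_c$.

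\textbf{For the upper bounds on rank}, I would observe that the budget constraint dominates the cardinality: for any $S\in\mathcal{M}_{fair}(P,\kappa,\vec l,\vec u)$, since $\max\{|S\cap U_c|,l_c\}\ge|S\cap U_c|$, we have $|S|=\sum_c|S\cap U_c|\le\sum_c\max\{|S\cap U_c|,l_c\}\le\kappa$. Hence every independent set has size at most $\kappa$, so $rank(\mathcal{M}_{fair})\le\kappa$; the identical argument with the $\beta$-scaled parameters gives $rank(\mathcal{M}_\beta)\le\beta\kappa$.

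\textbf{The lower bounds are the crux and the main obstacle}, since here I must exhibit a \emph{full-size} independent set. I would first record the characterization that a set $S$ of size exactly $\kappa$ lies in $\mathcal{M}_{fair}$ if and only if its group counts $s_c:=|S\cap U_c|$ satisfy $l_c\le s_c\le\min\{u_c,|U_c|\}$: the upper side is the matroid's bound $u_c$ together with availability $|U_c|$, while $s_c\ge l_c$ is \emph{forced}, because any $s_c<l_c$ would make $\sum_c\max\{s_c,l_c\}>\sum_c s_c=\kappa$ and violate the budget. Thus it suffices to choose integers $s_c$ in the interval $[l_c,\min\{u_c,|U_c|\}]$ with $\sum_c s_c=\kappa$, which is solvable exactly when every interval is nonempty and $\sum_c l_c\le\kappa\le\sum_c\min\{u_c,|U_c|\}$ (start at $s_c=l_c$ and greedily raise coordinates to their caps until the total reaches $\kappa$). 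The left inequality is the sandwich already proved. For the right inequality I would use $u_c\ge q_c\kappa$ and $\kappa\le(1+\alpha)|OPT|\le\beta(1+\alpha)|OPT|$ (as $\beta\ge1$) to write $\min\{u_c,|U_c|\}\ge\kappa\min\{q_c,|U_c|/\kappa\}\ge\kappa\min\{q_c,\tfrac{|U_c|}{\beta(1+\alpha)|OPT|}\}$, then sum and invoke $\sum_c\min\{q_c,\tfrac{|U_c|}{\beta(1+\alpha)|OPT|}\}\ge1$ to get $\sum_c\min\{u_c,|U_c|\}\ge\kappa$. Nonemptiness of the intervals, i.e.\ $l_c\le\min\{u_c,|U_c|\}$, reduces to $l_c\le u_c$ (immediate from $p_c\le q_c$) together with $l_c\le|U_c|$, which is precisely the ``enough elements in each group $U_c$'' content of the assumption. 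Any $S$ realizing such counts then satisfies $\sum_c\max\{s_c,l_c\}=\sum_c s_c=\kappa$ and $|S\cap U_c|\le u_c$, so $S\in\mathcal{M}_{fair}$ with $|S|=\kappa$, giving $rank(\mathcal{M}_{fair})=\kappa$. The statement for $\mathcal{M}_\beta$ follows verbatim after replacing $(\vec l,\vec u,\kappa)$ by $(\beta\vec l,\beta\vec u,\beta\kappa)$, where the right-hand availability becomes $\sum_c\min\{\beta u_c,|U_c|\}\ge\beta\kappa\sum_c\min\{q_c,\tfrac{|U_c|}{\beta(1+\alpha)|OPT|}\}\ge\beta\kappa$ and the scaled intervals are nonempty provided $\beta l_c\le|U_c|$. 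I expect the only delicate point to be isolating exactly where the per-group availability (as opposed to the aggregate assumption) is needed, namely in guaranteeing that the lower bounds $l_c$ and $\beta l_c$ are themselves attainable within each $U_c$.
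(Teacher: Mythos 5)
Your proof is correct, and it supplies substantially more than the paper does: the paper never actually proves this lemma, but simply asserts it in the sentence preceding its statement, deriving the sandwich inequality from $l_c=\lfloor p_c\kappa\rfloor$, $u_c=\lceil q_c\kappa\rceil$ and $\sum_c p_c\le 1\le\sum_c q_c$ (exactly as you do), and then declaring the two rank identities to follow from the aggregate assumption $\sum_c\min\{q_c,|U_c|/(\beta(1+\alpha)|OPT|)\}\ge 1$ with no argument. Your additions are the real content: the rank upper bounds via budget domination ($|S|\le\sum_c\max\{|S\cap U_c|,l_c\}\le\kappa$), and the rank lower bounds via the characterization that a size-$\kappa$ set is independent iff $l_c\le s_c\le\min\{u_c,|U_c|\}$ (with $s_c\ge l_c$ forced by the budget) followed by the integer interval-filling construction. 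Two points you isolate are genuinely needed and nowhere made explicit in the paper: (i) the assumption is phrased in terms of $|OPT|$, so tying it to a guess $\kappa$ requires $\kappa\le(1+\alpha)|OPT|$, which indeed holds for every guess the conversion algorithm issues before it terminates; and (ii) the aggregate assumption does not, as a matter of logic, imply the per-group condition $l_c\le|U_c|$ (resp.\ $\beta l_c\le|U_c|$), yet this condition is indispensable: if $\beta l_{c_0}>|U_{c_0}|$ for some group, that group is charged $\beta l_{c_0}$ in the budget while contributing at most $|U_{c_0}|$ elements, so every independent set has size strictly below $\beta\kappa$ and the rank identity fails. You handle this by invoking the paper's own gloss that the assumption means there are ``enough elements within each $U_c$,'' which is faithful to the authors' intent; just be aware that, read literally, this per-group availability is an additional implicit hypothesis, and your proof — unlike the paper — has the virtue of localizing exactly where it is used.
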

\begin{lemma}
   \label{lem:feasible_OPT} 
    For any $\beta\in\mathbb{N}_+$ and any fairness matroid $\mathcal{M}_{fair}(P,\kappa,\vec{l},\vec{u})$, denote $\mathcal{M}_{\beta}$ as the $\beta$-extended fairness matroid of $\mathcal{M}_{fair}(P,\kappa,\vec{l},\vec{u})$. Then for any set $S\in\mathcal{M}_{\beta}$ with $|S|=\beta \kappa$, $T\in \mathcal{M}_{fair}(P,\kappa,\vec{l},\vec{u})$ with $|T|=\kappa$, and any permutation of $S=(s_1,s_2,...,s_{\beta\kappa})$, there exist a sequence $E=(e_1,e_2,...,e_{\beta\kappa})$ such that each element in $T$ appears $\beta$ times in $E$ and that 
    \begin{align*}
        S_i\cup \{e_{i+1}\}\in \mathcal{M}_\beta , \qquad\forall i\in\{0,1,...,\beta\kappa\}
    \end{align*}
    where $S_i=(s_1,s_2,...,s_i)$ and $S_0=\emptyset$.
\end{lemma}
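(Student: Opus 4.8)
The plan is to recast the construction of $E$ as a bipartite assignment problem and then reduce the required Hall-type condition to a single counting inequality that is specific to the fairness matroid. For each index $i\in\{0,\dots,\beta\kappa-1\}$ call position $i+1$ a \emph{slot}, and let $A_i=\{t\in T: S_i\cup\{t\}\in\mathcal{M}_\beta\}$ be the set of $T$-elements that may legally occupy that slot. Constructing $E$ so that each $t\in T$ appears exactly $\beta$ times and $S_i\cup\{e_{i+1}\}\in\mathcal{M}_\beta$ is then exactly an integral $b$-matching in the bipartite graph between the $\beta\kappa$ slots and $T$, where slot $i$ is adjacent to $A_i$, each slot has demand $1$, and each $t\in T$ has capacity $\beta$. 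Since the total slot demand $\beta\kappa$ equals the total capacity $\beta|T|=\beta\kappa$, any assignment saturating all slots automatically uses each $t$ exactly $\beta$ times, which is the conclusion we want.

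Next comes the key structural simplification. Because $S_0\subseteq S_1\subseteq\cdots$ and $\mathcal{M}_\beta$ is downward closed, $S_i\cup\{t\}\in\mathcal{M}_\beta$ forces $S_{i'}\cup\{t\}\in\mathcal{M}_\beta$ for every $i'<i$; hence the neighborhoods are nested, $A_0\supseteq A_1\supseteq\cdots\supseteq A_{\beta\kappa-1}$ (and $A_0=T$, since $T\in\mathcal{M}_{fair}(P,\kappa,\vec{l},\vec{u})\subseteq\mathcal{M}_\beta$, so there are no loop obstructions). For nested neighborhoods the defect form of Hall's theorem for $b$-matchings reduces to checking suffixes of slots: the union of the neighborhoods of any slot set equals $A_{\min}$, and the largest slot set with a given minimum $j$ is the suffix $\{j,\dots,\beta\kappa-1\}$, so a saturating assignment exists iff $\beta|A_j|\ge \beta\kappa-j$ for every $j$. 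Writing $d_j=|T\setminus A_j|$ for the number of $T$-elements \emph{blocked} by $S_j$ and using $|T|=\kappa$, this is equivalent to the clean inequality $\beta d_j\le j=|S_j|$ for all $j$.

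The crux is proving $\beta d_j\le|S_j|$ directly from the two fairness constraints. Writing $s_c=|S_j\cap U_c|$ and $t_c=|T\cap U_c|$, an element $t\in T\setminus S_j$ of color $c$ is blocked only if (i) the upper bound is saturated, $s_c=\beta u_c$, or (ii) the sum constraint is tight, $\sum_{c'}\max\{s_{c'},\beta l_{c'}\}=\beta\kappa$, and $s_c\ge\beta l_c$; in either case $c\in H:=\{c:s_c\ge\beta l_c\}$, so $d_j\le\sum_{c\in H}t_c$. If the sum constraint is slack, only case (i) can occur, and there $\beta t_c\le\beta u_c=s_c$, giving $\beta d_j\le\sum_c s_c=|S_j|$. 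If it is tight, I would split colors into $H$ and $L:=\{c:s_c<\beta l_c\}$, so tightness reads $\sum_{c\in H}s_c=\beta\kappa-\beta\sum_{c\in L}l_c$; meanwhile $T\in\mathcal{M}_{fair}(P,\kappa,\vec{l},\vec{u})$ gives $\sum_{c}\max\{t_c,l_c\}\le\kappa$, whence $\sum_{c\in H}t_c\le\kappa-\sum_{c\in L}l_c$. Multiplying by $\beta$ and combining the two displays yields $\beta\sum_{c\in H}t_c\le\sum_{c\in H}s_c\le|S_j|$, hence $\beta d_j\le|S_j|=j$, which is exactly the Hall condition; the $b$-matching, and therefore the sequence $E$, then exists.

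I expect the main obstacle to be the tight-sum case (ii): blocking there is a global phenomenon rather than a per-color one, and the argument only closes because the \emph{same} partition $H,L$ simultaneously controls the $S_j$-mass (through tightness of the $\beta$-extended sum constraint) and the $T$-mass (through $T\in\mathcal{M}_{fair}$). A secondary point requiring care is the reduction of Hall's condition to suffixes, which rests entirely on the nestedness $A_0\supseteq\cdots\supseteq A_{\beta\kappa-1}$ coming from downward closure of $\mathcal{M}_\beta$; everything else is bookkeeping.
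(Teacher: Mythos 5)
Your proof is correct, but it takes a genuinely different route from the paper's. The paper constructs $E$ explicitly by \emph{backward} induction: it maintains the hybrid multiset sequences $F_i=(S_i,E_{i+1})$, proves by downward induction on $i$ that one can choose $e_i\in T$ so that $F_{i-1}$ still satisfies the $\beta$-scaled color upper bounds, the scaled sum constraint, and multiplicity at most $\beta$ per element of $T$, with a two-case analysis at each step (some color strictly below $\beta l_c$, versus all colors at or above $\beta l_c$, where a counting argument produces a color $c_1$ with $|(S_{i-1},E_{i+1})^{c_1}|<\beta|T\cap U_{c_1}|$). You instead reduce the existence of $E$ to a bipartite $b$-matching feasibility question, use downward-closure of $\mathcal{M}_\beta$ to get nestedness $A_0\supseteq\cdots\supseteq A_{\beta\kappa-1}$ and hence collapse Hall's condition to suffixes, and then verify the single deficiency inequality $\beta d_j\le j$ by splitting colors according to $s_c\ge\beta l_c$ and playing the tightness of the $\beta$-scaled sum constraint for $S_j$ against the sum constraint $\sum_c\max\{t_c,l_c\}\le\kappa$ for $T$. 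Your case analysis of when an element is blocked (upper bound saturated, or sum constraint tight with $s_c\ge\beta l_c$) is exactly right, and both the slack and tight cases close as you claim. What your approach buys is modularity: the fairness structure enters only through one clean counting inequality, and the rest is a standard matching theorem; what the paper's approach buys is a self-contained, explicitly constructive argument that also establishes the stronger invariant on the hybrid sequences. One small detail you should make explicit: placing the case-(i) colors (those with $s_c=\beta u_c$) inside $H$ in the tight case needs $u_c\ge l_c$, which does hold here but requires the one-line observation that $|T|=\kappa$ forces $l_c\le|T\cap U_c|\le u_c$ for every color (the first inequality because $\kappa=\sum_c|T\cap U_c|\le\sum_c\max\{|T\cap U_c|,l_c\}\le\kappa$ forces equality termwise). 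Also note the lemma's quantifier ``$\forall i\in\{0,1,\dots,\beta\kappa\}$'' is a typo for $\{0,\dots,\beta\kappa-1\}$, which is the range your slots correctly cover.
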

It is worth noting that this lemma reveals an important fact about the fairness matroid: For each base set $T\in \mathcal{M}_{fair}(P,\kappa,\vec{l},\vec{u})$, and each subset $S\in \mathcal{M}_{\beta}$ that is a base set, we can find a mapping from $T$ to a sequence $E$ that contains $\beta$ copies of $T$ such that $S_i\cup \{e_i\}$ is always feasible for $\mathcal{M}_{\beta}$.  The proof of this lemma is deferred to the appendix. Notice that since $\mathcal{M}_\beta$ is a matroid, then for any subset $S_1\subseteq S_2$, if $S_2\in\mathcal{M}_\beta$ then $S_1\in\mathcal{M}_\beta$. Consequently, the above lemma holds for not only just base set of $\mathcal{M}_\beta$, but also for any subset of $\mathcal{M}_\beta$ by adding dummy variables to the end of the sequence $S$ if the number of elements in $S$ is less than $\beta\kappa$. Building upon this lemma, we propose three bicriteria algorithms in the following part.

\subsection{Discrete Bicriteria Algorithms for FSM}
\label{sec:discrete}
We now analyze two discrete bicriteria algorithms for FSM, \greedyalg and \threalglong. Let SMMC refer to the problem of monotone submodular maximization with a matroid constraint.
\greedyalg is based on the standard greedy algorithm which is well-known to produce a feasible solution with a $1/2$ approximation guarantee in $O(nk)$ time for SMMC \citep{fisher1978analysis}, where $k$ is the rank of the matroid. \greedyalg proceeds in a series of rounds, where at each round we select the element $x\in U$ with the highest marginal gain to $f$ that stays on the $1/\varepsilon$-extension of the fairness matroid corresponding to the instance of FSM, i.e. $S\cup\{x\}\in\mathcal{M}_{1/\varepsilon}$.
\threalglong is based on the threshold greedy algorithm \citep{badanidiyuru2014fast}, which is also a $1/2 - \varepsilon$ approximation for SMMC but requires only $O(n \log k)$ queries of $f$. \threalglong iteratively makes passes through the universe $U$ and adds all elements into its solution with marginal gains exceeding $\tau$ that are feasible with respect to the $1/\varepsilon$-extension of the fairness matroid, and this threshold is decreased by $1 - \varepsilon$ at each round until it falls below a stopping criterion.
Notice that these algorithms specifically use the $\beta$-extension of the fairness matroid, and therefore do not apply to the more general setting of submodular maximization with a matroid constraint.
Pseudocode for the algorithms \greedyalg and \threalglong are included in Appendix \ref{appdx:alg_for_FSM} as Algorithms \ref{alg:fairness-bi} and Algorithm \ref{alg:thres-fairness-bi} respectively.

We now present the theoretical guarantees of \greedyalg and \threalglong. The key benefit of these algorithms over existing ones for FSM is that by making $\varepsilon$ arbitrarily small and using \conv in Section \ref{sec:conv}, we have algorithms for FSC that are arbitrarily close to being feasible. In particular, if we use \greedyalg as a subroutine in \conv, we have a $(\frac{1}{\varepsilon}+1, 1-\varepsilon)$-bicriteria algorithms for FSC in $\mathcal{O}(n\log(n)\kappa/\epsilon)$ queries of $f$. If we use \threalglong, we get a similar approximation guarantee in $\mathcal{O}(n\log(n)\log(\kappa/\epsilon)/\epsilon^2)$ queries of $f$. The proofs of both of these theorems can be found in Section \ref{appdx:alg_for_FSM} of the appendix. 

\begin{restatable}{theorem}{thmgreedy}
    \label{thm:greedy}
    Suppose that \texttt{greedy-fairness-bi} is run for an instance of FSM, 
   then \texttt{greedy-fairness-bi}
   outputs a solution $S$ that satisfies a $(1-\varepsilon, \frac{1}{\varepsilon})$-bicriteria approximation guarantee in at most $O\left(n\kappa /\varepsilon\right)$ queries of $f$.
\end{restatable}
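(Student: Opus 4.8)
The plan is to combine a standard greedy-exchange argument with the $\beta$-fold feasibility guarantee of Lemma~\ref{lem:feasible_OPT}, specialized to $\beta = 1/\varepsilon$. First I would dispatch the easy half of the bicriteria guarantee: since \greedyalg only ever adds an element $x$ when $S \cup \{x\} \in \mathcal{M}_{1/\varepsilon}$, the returned set is by construction independent in $\mathcal{M}_{1/\varepsilon}$, which is exactly the $\beta = 1/\varepsilon$ condition of Definition~\ref{def:bicri-SM}. Moreover, because $rank(\mathcal{M}_{1/\varepsilon}) = \kappa/\varepsilon$ by Lemma~\ref{lem:assump_for_SM} and a greedy process over a matroid can always extend a non-base independent set, the algorithm runs for exactly $\beta\kappa = \kappa/\varepsilon$ rounds and returns a base $S = \{s_1,\dots,s_{\beta\kappa}\}$ of $\mathcal{M}_{1/\varepsilon}$; here I implicitly use that $1/\varepsilon \in \mathbb{N}_+$ so that the $\beta$-extension is defined.

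For the function-value guarantee, I would first observe that since $f$ is monotone we may take $OPT$ to be a base $T$ of the fairness matroid $\mathcal{M}_{fair}$ with $|T| = \kappa$. Applying Lemma~\ref{lem:feasible_OPT} to the base $S$ of $\mathcal{M}_{1/\varepsilon}$, listed in the order greedy selected its elements, together with $T$, yields a sequence $E = (e_1,\dots,e_{\beta\kappa})$ in which every element of $T$ appears exactly $\beta$ times and $S_i \cup \{e_{i+1}\} \in \mathcal{M}_{1/\varepsilon}$ for every $i$, where $S_i = \{s_1,\dots,s_i\}$. The crucial consequence is that at step $i+1$ the element $e_{i+1}$ was a feasible candidate, so $\Delta f(S_i, s_{i+1}) \geq \Delta f(S_i, e_{i+1})$; this holds trivially in the degenerate case $e_{i+1} \in S_i$, where the right-hand side is zero.

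I would then chain three inequalities. Telescoping the greedy gains gives $f(S) = \sum_{i=0}^{\beta\kappa - 1} \Delta f(S_i, s_{i+1})$; the greedy-dominance inequality replaces each $s_{i+1}$ by $e_{i+1}$; and submodularity applied along $S_i \subseteq S$ lets me push each marginal down to the common set $S$, i.e. $\Delta f(S_i, e_{i+1}) \geq \Delta f(S, e_{i+1})$. Since each $t \in T$ occurs $\beta$ times in $E$, summing yields $f(S) \geq \beta \sum_{t \in T}\Delta f(S,t) \geq \beta\bigl(f(S \cup T) - f(S)\bigr) \geq \beta\bigl(f(OPT) - f(S)\bigr)$, where the middle step is the standard subadditivity of submodular marginals and the last is monotonicity. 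Rearranging gives $f(S) \geq \frac{\beta}{1+\beta} f(OPT) = \frac{1}{1+\varepsilon} f(OPT) \geq (1-\varepsilon) f(OPT)$, which is the claimed approximation ratio. (As a sanity check, at $\beta = 1$ this recovers the familiar $\tfrac12$-bound for greedy on a matroid.) The query bound is then immediate: each of the $\kappa/\varepsilon$ rounds scans the $n$ elements of $U$ with one marginal-gain query apiece, for a total of $O(n\kappa/\varepsilon)$ queries to $f$.

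I expect the main obstacle to be the bookkeeping around Lemma~\ref{lem:feasible_OPT}: ensuring the ordering of $S$ fed to the lemma is exactly the greedy selection order, that the feasibility guarantee $S_i \cup \{e_{i+1}\} \in \mathcal{M}_{1/\varepsilon}$ genuinely makes $e_{i+1}$ an admissible candidate at step $i+1$, and handling repeated or already-selected $e_{i+1}$ so the dominance inequality is never vacuously violated. By comparison, the submodularity manipulations and the final arithmetic ($\frac{1}{1+\varepsilon} \geq 1-\varepsilon$) are routine.
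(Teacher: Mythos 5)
Your proposal is correct and follows essentially the same route as the paper's proof: both establish feasibility and the round count via Lemma~\ref{lem:assump_for_SM}, then invoke Lemma~\ref{lem:feasible_OPT} to obtain the exchange sequence $E$, chain greedy dominance with submodularity, telescope, and rearrange to get $f(S)\geq\frac{1}{1+\varepsilon}f(OPT)\geq(1-\varepsilon)f(OPT)$. Your extra bookkeeping (extending $OPT$ to a base $T$ of the fairness matroid and explicitly handling the case $e_{i+1}\in S_i$) only tightens details the paper glosses over.
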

\begin{restatable}{theorem}{thmthresholdfairnessbi}
\label{theorem:threshold-fairness-bi}
Suppose that \texttt{threshold-fairness-bi} is run for an instance of FSM with $\varepsilon \in (0,1)$. Then \texttt{threshold-fairness-bi} outputs a solution $S$ that satisfies a $(1 - 2\varepsilon, \frac{1}{\varepsilon})$-bicriteria approximation guarantee in at most $O\left(n/\varepsilon \log(\kappa /\varepsilon\right))$ queries of $f$.
\end{restatable}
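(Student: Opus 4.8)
The plan is to verify the three components of the $(1-2\varepsilon,\tfrac{1}{\varepsilon})$-guarantee separately: the fairness/feasibility bound with $\beta=\tfrac1\varepsilon$, the value bound $f(S)\ge(1-2\varepsilon)f(OPT)$, and the query complexity. Feasibility is immediate from the design of \threalglong: an element is inserted only when it keeps the running set inside $\mathcal{M}_{1/\varepsilon}$, so the output satisfies $S\in\mathcal{M}_{1/\varepsilon}$, which by Definition \ref{def:bicri-SM} is exactly $|S\cap U_c|\le\tfrac1\varepsilon u_c$ for all $c$ together with $\sum_c\max\{|S\cap U_c|,\tfrac1\varepsilon l_c\}\le\tfrac1\varepsilon\kappa$. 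For the value bound, by monotonicity of $f$ and Lemma \ref{lem:assump_for_SM} I may assume $OPT$ is a base of $\mathcal{M}_{fair}(P,\kappa,\vec l,\vec u)$ with $|OPT|=\kappa$. Write $S=(s_1,s_2,\dots)$ in insertion order and set $S_i=\{s_1,\dots,s_i\}$.

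The key structural tool is Lemma \ref{lem:feasible_OPT}. Padding $S$ with dummy elements (which leave $f$ unchanged) so that $|S|=\beta\kappa$ with $\beta=\tfrac1\varepsilon$, the lemma supplies a sequence $E=(e_1,\dots,e_{\beta\kappa})$ in which every element of $OPT$ occurs exactly $\beta$ times and $S_i\cup\{e_{i+1}\}\in\mathcal{M}_\beta$ for all $i$. This $E$ plays the role of $\beta$ charged copies of $OPT$ against which the greedy picks are compared. Since each $S_i\subseteq S$, submodularity and monotonicity give the lower bound
\[
\sum_{i=0}^{\beta\kappa-1}\Delta f(S_i,e_{i+1})\;\ge\;\beta\sum_{o\in OPT}\Delta f(S,o)\;\ge\;\beta\bigl(f(OPT)-f(S)\bigr).
\]

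The heart of the argument is the matching per-step upper bound on $\Delta f(S_i,e_{i+1})$. If $s_{i+1}$ was inserted at threshold $d_{i+1}$ then $\Delta f(S_i,s_{i+1})\ge d_{i+1}$. Because $S_i\cup\{e_{i+1}\}\in\mathcal{M}_\beta$ and matroids are downward closed, $e_{i+1}$ was feasible at every earlier point, so in the previous pass (threshold $d_{i+1}/(1-\varepsilon)$) it was examined and rejected; its marginal gain was therefore below $d_{i+1}/(1-\varepsilon)$ then, and submodularity yields $\Delta f(S_i,e_{i+1})\le d_{i+1}/(1-\varepsilon)\le\tfrac{1}{1-\varepsilon}\Delta f(S_i,s_{i+1})$ (the subcases $e_{i+1}\in S_i$ and $s_{i+1}$ inserted at the top threshold are dispatched by the trivial bound $\Delta f\le\max_e f(e)$). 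For the tail positions where $s_{i+1}$ is a dummy (present only when $S$ is not a base), the algorithm has already halted, so each such term is at most the stopping threshold $d_{\min}=\Theta(\varepsilon^2\max_e f(e)/\kappa)$, and their total is at most $\beta\kappa\,d_{\min}\le\varepsilon f(OPT)$ since $\max_e f(e)\le f(OPT)$. Telescoping the real positions gives $\sum_i\Delta f(S_i,s_{i+1})=f(S)$, hence
\[
\sum_{i=0}^{\beta\kappa-1}\Delta f(S_i,e_{i+1})\;\le\;\frac{1}{1-\varepsilon}f(S)+\varepsilon f(OPT).
\]
Combining the two displays with $\beta=\tfrac1\varepsilon$ and simplifying yields $f(S)\ge(1-\varepsilon)(1-\varepsilon^2)f(OPT)\ge(1-2\varepsilon)f(OPT)$ for $\varepsilon\in(0,1)$.

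Finally, the query complexity follows from counting passes: the threshold falls geometrically by the factor $(1-\varepsilon)$ from $\max_e f(e)$ to $d_{\min}=\Theta(\varepsilon^2\max_e f(e)/\kappa)$, giving $\log_{1/(1-\varepsilon)}(\kappa/\varepsilon^2)=O(\tfrac1\varepsilon\log(\kappa/\varepsilon))$ passes, each scanning $U$ once for $n$ queries, so the total is $O(\tfrac{n}{\varepsilon}\log(\kappa/\varepsilon))$. The main obstacle I anticipate is the per-step comparison $\Delta f(S_i,e_{i+1})\le\tfrac{1}{1-\varepsilon}\Delta f(S_i,s_{i+1})$: it forces the exchange sequence of Lemma \ref{lem:feasible_OPT} to interact cleanly with the decreasing-threshold schedule, requiring one to marshal matroid downward-closure (to certify that $e_{i+1}$ was eligible and thus rejected in the prior pass) together with submodularity, while separately bookkeeping the dummy and stopping-threshold positions so that their aggregate loss is only $\varepsilon f(OPT)$.
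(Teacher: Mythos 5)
Your proposal is correct and follows essentially the same route as the paper's proof: both rely on the exchange sequence from Lemma \ref{lem:feasible_OPT} with $\beta=1/\varepsilon$, bound each $\Delta f(S_i,e_{i+1})$ via the previous-pass-rejection argument (which the paper packages as Lemma \ref{lem:iterative_thres}), handle the dummy tail through the stopping threshold, telescope, and rearrange. The only discrepancy is that the paper's algorithm stops at threshold $\varepsilon d/\kappa$ rather than your assumed $\Theta(\varepsilon^2 d/\kappa)$, so the dummy contribution is bounded by roughly $d/(1-\varepsilon)\le f(OPT)/(1-\varepsilon)$ rather than $\varepsilon f(OPT)$; plugging this into your final arithmetic still yields exactly $f(S)\ge(1-2\varepsilon)f(OPT)$, so the conclusion (and the query count, which remains $O\left(\tfrac{n}{\varepsilon}\log\tfrac{\kappa}{\varepsilon}\right)$) is unaffected.
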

\subsection{Continuous Algorithms for FSM}
\label{sec:continuous}
A downside to the discrete greedy algorithms proposed in Section \ref{sec:alg_for_FSM} is that we are above our budget $\kappa$ by a factor of $1/\varepsilon$, which is weaker than the analogous guarantee of $\ln(1/\varepsilon)$ that the greedy algorithm gives for submodular maximization with a cardinality constraint without fairness.
We now introduce and analyze our continuous algorithm \contialglong (\contialg), which produces a fractional solution for FSM that achieves a $(1-O(\varepsilon), \ln(1/\varepsilon)+1)$-bicriteria approximation ratio in $O\left(n\kappa\ln^2(n)\right)$ time. \contialg is based on the continuous threshold greedy algorithm of \cite{badanidiyuru2014fast}. Compared to the discrete algorithms presented in Section \ref{sec:discrete}, \contialg improves the ratio on the cardinality of the solution from $O(1/\varepsilon)$ to $O(\ln(1/\varepsilon))$, and therefore has as strong of guarantees as the greedy algorithm without fairness. 
We can achieve a discrete solution with an arbitrarily small loss by employing rounding schemes, like swap rounding~\citep{swaprounding}, on the returned fractional solution $\mathbf{x}$.

\contialg iteratively takes a step of size $\varepsilon$ in the direction $\mathbf{1}_{B}$, where $\mathbf{1}_B$ is the indicator function of a set $B\subseteq U$, over $1/\varepsilon$ iterations. At each step, the set $B$ is determined by the subroutine \contisublong (\contisub). \contisub builds $B$ over a series of rounds corresponding to thresholds $w$, where $w$ begins as the max singleton marginal gain and the rounds exit once $w$ is sufficiently small. During each round, we iterate over the universe $U$, and if an element $u\in U$ can be added to $B$ while staying on the $\ln(1/\epsilon)+1$-extension of the fairness matroid, then we approximate the multilinear extension $f$ and add $u$ to $B$ if and only if the marginal gain is above $w$. 
Pseudocode for \contialg is provided in Algorithm \ref{alg:CTG}, and pseudocode for \contisublong{} is provided in Algorithm \ref{alg:CCTG_subroutine}. 
\begin{algorithm}[t]
\caption{\contialglong(\contialg)}\label{alg:CTG}
 \begin{algorithmic}[1]
 \State \textbf{Input:} $\varepsilon$, $\delta$, $\mathcal{M}\in 2^U$
 \State $\vect{x}\gets \vect{0}$
  \State $d:=\max_{s\in \mathcal{M}}{f}(s)$, 
 
 \For{$t=1$ to $1/\varepsilon$}
 \State $B\gets$\contisublong($\vect{x}$, $\varepsilon$, $\delta$, $d$, $\mathcal{M}$)
 \State $\vect{x}\gets \vect{x}+\varepsilon\cdot\vect{1}_{B}$
 \EndFor
 \State \textbf{return} $\vect{x}$
 \end{algorithmic}
\end{algorithm}

\begin{algorithm}[t]
\caption{\contisublong(\contisub)}\label{alg:CCTG_subroutine}
 \begin{algorithmic}[1]
 \State \textbf{Input:} $\vect{x}$, $\varepsilon$, $\delta$, $d$, $\mathcal{M}\in 2^U$
 \State $B\gets \emptyset$
 \State Denote $\mathcal{M}_{fair}(P,\kappa\ln(1/\varepsilon),\vec{l}\ln(1/\varepsilon),\vec{u}\ln(1/\varepsilon))$ as $\mathcal{M}_{\ln(1/\varepsilon)}$.
 \For{$w=d$; $w>\frac{\varepsilon d}{\kappa}$; $w=w(1-\varepsilon)$}
 \For{$u\in U$}
 \State $X={\Delta f}(S(\vect{x}+\varepsilon\vect{1}_B),u)$

\If{$B\cup\{u\}\in\mathcal{M}_{\ln(1/\varepsilon)+1}$}
\State $\hat{X}\gets $ average over  $\frac{3\kappa}{\varepsilon^2}\log{\frac{4n^4}{\varepsilon^3}}$ samples from $\mathcal{D}_X$
 \If{$\hat{X}\geq w$}
 \State $B\gets B\cup \{u\}$
 \EndIf
  \EndIf
 \EndFor

 \State $w=w(1-\varepsilon)$
 \EndFor
 \State \textbf{return} $B$ \end{algorithmic}
\end{algorithm}

\begin{restatable}{theorem}{thmcontinuous}
\label{thm:continuous}
     Suppose that Algorithm \ref{alg:CTG} is run for an instance of FSM, then with probability at least $1-\frac{1}{n^2}$, \contialglong
   outputs a solution $S$ that satisfies a $(1-7\varepsilon,\ln(\frac{1}{\varepsilon})+1)$-bicriteria approximation guarantee in at most $O\left(\frac{n\kappa}{\varepsilon^4}\ln^2(n/\varepsilon)\right)$ queries of $f$.
\end{restatable}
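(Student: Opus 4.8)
The plan is to prove the two halves of the bicriteria guarantee separately — feasibility ($\vect{x}\in\mathcal{P}(\mathcal{M}_\beta)$ with $\beta=\ln(1/\varepsilon)+1$) and the value bound $\vect{F}(\vect{x})\geq(1-7\varepsilon)f(OPT)$ — and then handle the failure probability and query count. Feasibility is immediate and I would dispatch it first: the returned iterate is $\vect{x}=\varepsilon\sum_{t=1}^{1/\varepsilon}\vect{1}_{B_t}$, every $B_t$ produced by \contisub lies in $\mathcal{M}_{\ln(1/\varepsilon)+1}$ since an element is added only when it keeps $B$ inside that matroid, and the $1/\varepsilon$ coefficients each equal $\varepsilon$ and sum to $1$. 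Hence $\vect{x}$ is a convex combination of indicator vectors of sets in $\mathcal{M}_\beta$, so $\vect{x}\in\mathcal{P}(\mathcal{M}_\beta)$, giving the cardinality factor $\beta=\ln(1/\varepsilon)+1$.

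The core of the proof — and the step I expect to be the main obstacle — is a per-iteration progress lemma. Writing $g_t=f(OPT)-\vect{F}(\vect{x}_t)$, I would establish a recursion of the form $g_t\leq(g_{t-1}+\varepsilon^2\beta d)/(1+\varepsilon(1-\varepsilon)\beta)$. Fix an iteration with current point $\vect{x}_{t-1}$ and let $B_t=(s_1,\dots,s_{\beta\kappa})$ be ordered by the time of addition, with prefixes $S_i=\{s_1,\dots,s_i\}$. By multilinearity the step gain telescopes exactly, $\vect{F}(\vect{x}_t)-\vect{F}(\vect{x}_{t-1})=\varepsilon\sum_i\partial_{s_{i+1}}\vect{F}(\vect{x}_{t-1}+\varepsilon\vect{1}_{S_i})$. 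I then invoke Lemma \ref{lem:feasible_OPT} with $S=B_t$ and $T=OPT$ to produce a sequence $E=(e_1,\dots,e_{\beta\kappa})$ consisting of $\beta$ copies of $OPT$ with $S_i\cup\{e_{i+1}\}\in\mathcal{M}_\beta$ for every $i$, so each $e_{i+1}$ was a feasible candidate when $s_{i+1}$ was chosen. The threshold structure of \contisub forces $\partial_{s_{i+1}}\vect{F}(\vect{x}_{t-1}+\varepsilon\vect{1}_{S_i})\geq(1-\varepsilon)\partial_{e_{i+1}}\vect{F}(\vect{x}_{t-1}+\varepsilon\vect{1}_{S_i})$ up to the cutoff $\varepsilon d/\kappa$ and the additive sampling slack. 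Summing, using submodularity (monotonicity of the partial derivatives under $S_i\subseteq B_t$) to move all evaluation points to $\vect{x}_t$, and collecting the $\beta$ copies of each $OPT$ element yields $\sum_i\partial_{s_{i+1}}\vect{F}(\cdots)\geq(1-\varepsilon)\beta(f(OPT)-\vect{F}(\vect{x}_t))-\varepsilon\beta d$, which is the claimed recursion. The hard part is the clean bookkeeping here: matching $B_t$ to $\beta$ copies of $OPT$ while handling copies whose exchange element $e_{i+1}$ already lies in $S_i$ (where the ``would-have-been-added'' argument does not directly apply), and simultaneously folding the additive sampling error into the multiplicative $(1-\varepsilon)$ factor.

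With the recursion in hand, iterating over the $1/\varepsilon$ steps from $g_0=f(OPT)$ gives roughly $g_{1/\varepsilon}\leq f(OPT)(1+\varepsilon(1-\varepsilon)\beta)^{-1/\varepsilon}+O(\varepsilon d)$. Using $\ln(1+x)\geq x-x^2/2$ with $x=\varepsilon(1-\varepsilon)\beta=O(\varepsilon\ln(1/\varepsilon))$, the denominator is at least $e^{(1-\varepsilon)\beta}=e^{\ln(1/\varepsilon)+1-O(\varepsilon\ln^2(1/\varepsilon))}\geq\frac{e}{\varepsilon}(1-O(\varepsilon\ln^2(1/\varepsilon)))$, so the multiplicative term is at most $\varepsilon f(OPT)$ for small $\varepsilon$. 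The additive contributions — the $(1-\varepsilon)$ threshold factor, the lower cutoff (which costs at most $\varepsilon d\leq\varepsilon f(OPT)$ since $d=\max_{s}f(s)\leq f(OPT)$ by optimality of $OPT$), the step discretization, and the sampling error — each contribute $O(\varepsilon)f(OPT)$ and collapse into the final constant, giving $\vect{F}(\vect{x})\geq(1-7\varepsilon)f(OPT)$.

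Finally, the two bookkeeping arguments. For the high-probability claim, each estimate $\hat X$ averages $\frac{3\kappa}{\varepsilon^2}\log\frac{4n^4}{\varepsilon^3}$ i.i.d.\ samples of $X\in[0,d]$, so Hoeffding bounds the additive error within the required tolerance with failure probability at most $O(\varepsilon^3/n^4)$ per estimate; a union bound over the at most $\frac{1}{\varepsilon}\cdot\frac{\ln(\kappa/\varepsilon)}{\varepsilon}\cdot n$ estimates keeps the total failure probability below $1/n^2$. The query complexity is the number of estimates times samples per estimate, $O\!\left(\frac{n\ln(\kappa/\varepsilon)}{\varepsilon^2}\cdot\frac{\kappa}{\varepsilon^2}\log\frac{n}{\varepsilon}\right)=O\!\left(\frac{n\kappa}{\varepsilon^4}\ln^2(n/\varepsilon)\right)$, matching the stated bound.
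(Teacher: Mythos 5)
Your proposal follows essentially the same route as the paper's proof: feasibility of $\vect{x}$ via convexity of the matroid polytope; a per-iteration progress inequality proved by ordering $B$ by insertion time, invoking Lemma \ref{lem:feasible_OPT} with $T=OPT$ to obtain a sequence of $\beta$ copies of $OPT$ that are feasible exchanges at every prefix, and comparing marginals via the threshold structure; then iterating the resulting recursion over the $1/\varepsilon$ outer steps. This is precisely the content of the paper's Lemma \ref{lem:Continuous_decreasing_threshold} and the proof of Theorem \ref{thm:continuous}, and your union-bound and query-count accounting also match the paper's.

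However, one step fails as written: the concentration argument. You claim Hoeffding applied to the $\frac{3\kappa}{\varepsilon^2}\log\frac{4n^4}{\varepsilon^3}$ samples of $X\in[0,d]$ bounds the additive error ``within the required tolerance'' with failure probability $O(\varepsilon^3/n^4)$. Your own recursion $g_t\leq (g_{t-1}+\varepsilon^2\beta d)/(1+\varepsilon(1-\varepsilon)\beta)$ requires a per-estimate additive tolerance of order $\varepsilon d/\kappa$, since the slack $\varepsilon^2\beta d$ must be spread over the $\beta\kappa$ elements of $B$. But for $[0,d]$-valued samples at this sample size and failure probability, Hoeffding only gives additive error of order $\varepsilon d/\sqrt{\kappa}$ --- off by a factor of $\sqrt{\kappa}$; forcing Hoeffding down to tolerance $\varepsilon d/\kappa$ would need $\Theta(\kappa^2/\varepsilon^2)$ samples per estimate and would inflate the query complexity to $O\left(\frac{n\kappa^2}{\varepsilon^4}\ln^2(n/\varepsilon)\right)$, breaking the stated bound. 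The paper avoids this by using the relative-plus-additive Chernoff bound (Lemma \ref{lem:chernoff}, from \cite{badanidiyuru2014fast}), which at the stated sample size yields $|\hat{X}-\mu|\leq \varepsilon\mu+\frac{\varepsilon}{\kappa}f(OPT)$: the relative term $\varepsilon\mu$ is absorbed into the $(1-\varepsilon)$ threshold factors (exactly the ``folding into the multiplicative factor'' you anticipate), and the additive term is controlled using $d\leq f(OPT)$. Substituting that bound for Hoeffding repairs the step, and the rest of your outline goes through as in the paper.
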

Then, by applying a converting theorem, we can obtain the algorithm for submodular cover that achieves an approximation ratio of $((1+\alpha)(\ln(\frac{1}{\varepsilon})+1),1-O(\varepsilon))$, which aligns with the best-known results for bicriteria submodular cover without the fairness constraint \cite{chen2024bicriteria, iyer2013submodular}. The detailed theoretical guarantee and proof of the algorithm can be found in Corollary \ref{coro:combine_convert_with_conti} in the appendix.


\section{Experiments} \label{sec:exp}
In this section, we evaluate several of our algorithms for FSC on instances of fair maximum coverage, where the objective is to identify a set of fixed nodes that optimally maximize coverage within a graph. The dataset utilized is a subset of the Twitch Gamers dataset~\citep{twitch}, comprising 5,000 vertices (users) who speak English, German, French, Spanish, Russian, or Chinese. We aim to develop a solution with a high $f$ value exceeding a given threshold $\tau$ while ensuring a fair balance between users who speak different languages. Additional discussion about the application as well as experimental setup including parameter settings are included in Appendix \ref{apdx:add-exp}. In addition, we include experiments on instances of fair image summarization in Appendix \ref{apdx:add-exp}.

We evaluate the performance of our discrete bicriteria algorithms \greedyalg and \threalglong for FSM as subroutines in our algorithm \conv. In addition, we consider the baseline algorithm, \texttt{greedy-bi}, which is the standard greedy algorithm for submodular cover without fairness. Figures \ref{fig:radar-greedy-bi}, \ref{fig:radar-greedy-fairness-bi} and \ref{fig:radar-threshold-fairness-bi} showcase the distribution of users speaking different languages in the solutions produced by these algorithms with $\tau = 2400$. Figures \ref{fig:max-cover-tau-f}, \ref{fig:max-cover-tau-cost} and \ref{fig:max-cover-tau-diff} present the performance of these algorithms ($f$ value, cost, and fairness difference) for varying values of $\tau$. As shown in Figure~\ref{fig:radar-greedy-bi}, with $\tau = 2400$, over $80 \%$ of the users in the solution returned by \texttt{greedy-bi} are English speakers, which indicates a lack of fairness in user language distribution. While the solutions produced by \greedyalg{} and \threalglong{} exhibit significantly fairer distributions across different languages, demonstrating the effectiveness of our proposed algorithms. Further, as the value of given $\tau$ increases, the magnitude of this difference also increases (see Figure \ref{fig:max-cover-tau-diff}). Figure \ref{fig:max-cover-tau-f} showcases that for all these algorithms the objective function value $f(S)$ scales almost linearly with the threshold $\tau$, which aligns with the theoretical guarantees of the approximation ratio. Additionally, as shown in Figure~\ref{fig:max-cover-tau-cost}, the cost of the solutions returned by our proposed algorithms is higher than that of the solution from \texttt{greedy-bi}. This is an expected trade-off, as our algorithms have to include more elements to maintain the approximation ratio while ensuring fairness. Overall, our proposed algorithms are efficient and effective in producing a fair solution.





\begin{figure}[t]
    \centering
    \begin{subfigure}{0.33\textwidth}
      \includegraphics[width=1\linewidth]{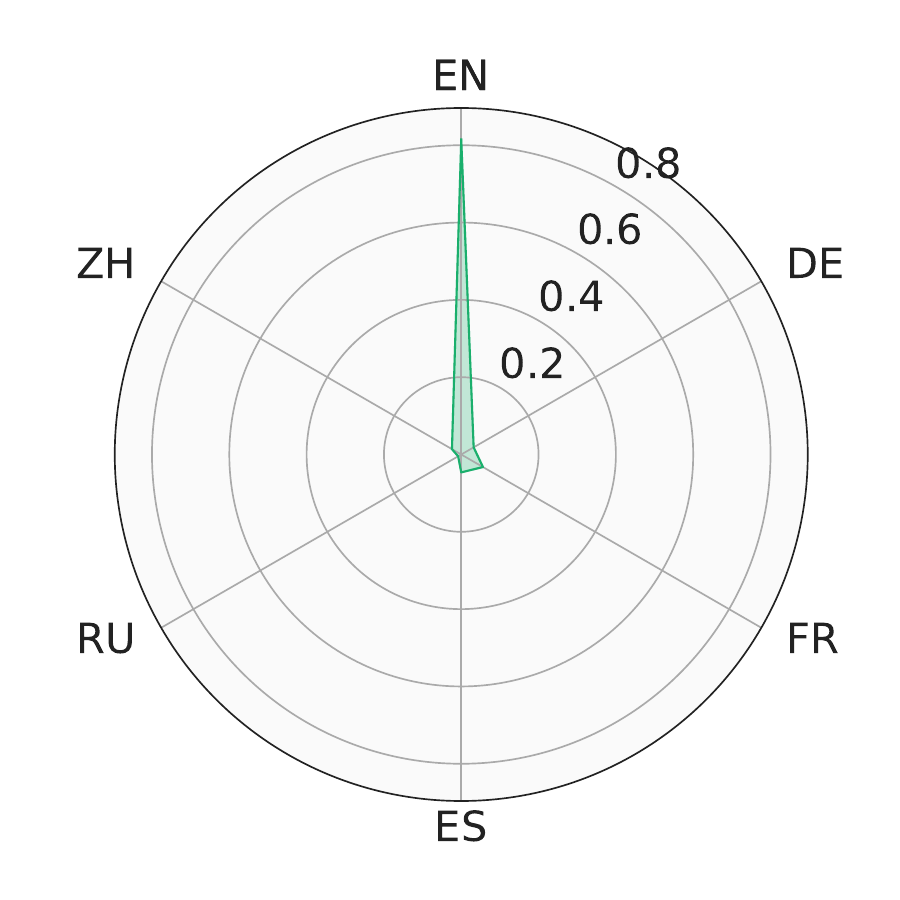}
      \caption{\texttt{greedy-bi}}
      \label{fig:radar-greedy-bi}
    \end{subfigure}%
    \begin{subfigure}{0.33\textwidth}
      \includegraphics[width=1\linewidth]{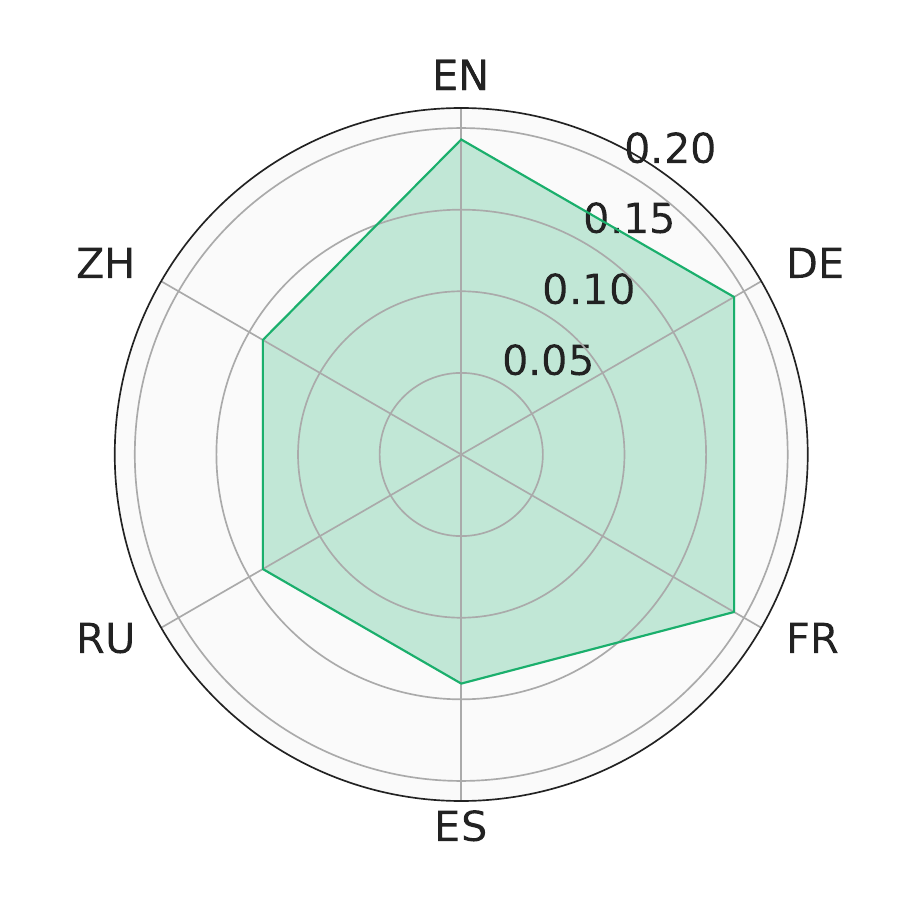}
      \caption{\texttt{greedy-fairness-bi}}
      \label{fig:radar-greedy-fairness-bi}
    \end{subfigure}
    \begin{subfigure}{0.33\textwidth}
      \includegraphics[width=1\linewidth]{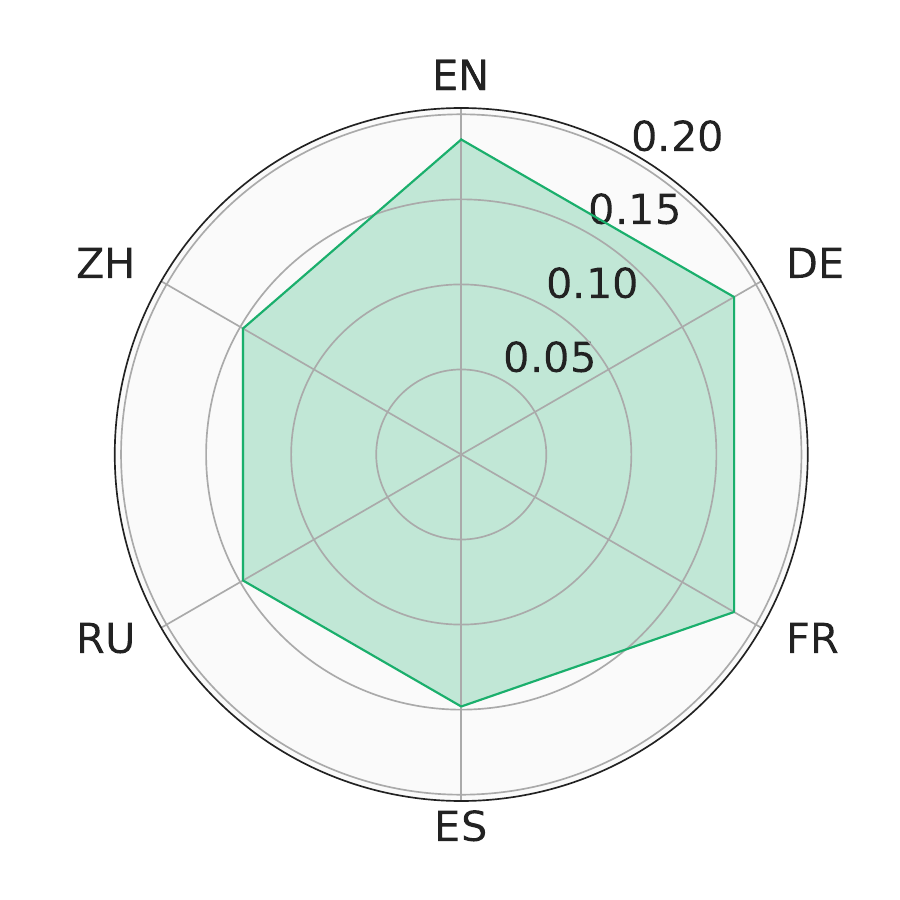}
      \caption{\texttt{threshold-fairness-bi}}
      \label{fig:radar-threshold-fairness-bi}
    \end{subfigure}

    \begin{subfigure}{0.33\textwidth}
      \includegraphics[width=1\linewidth]{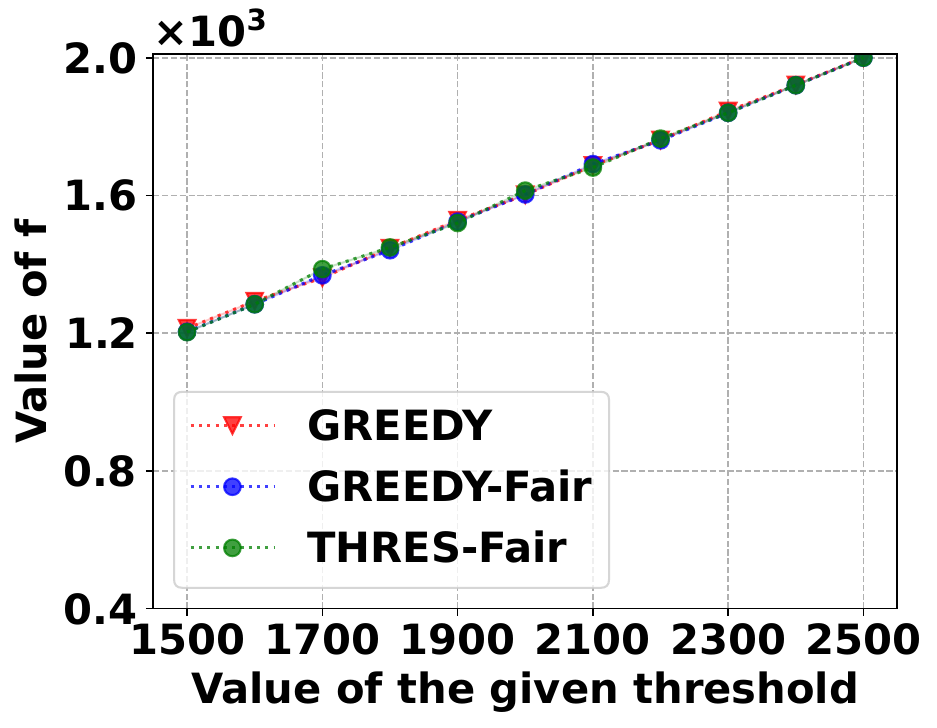}
      \caption{$f$}
      \label{fig:max-cover-tau-f}
    \end{subfigure}%
    \begin{subfigure}{0.33\textwidth}
      \includegraphics[width=1\linewidth]{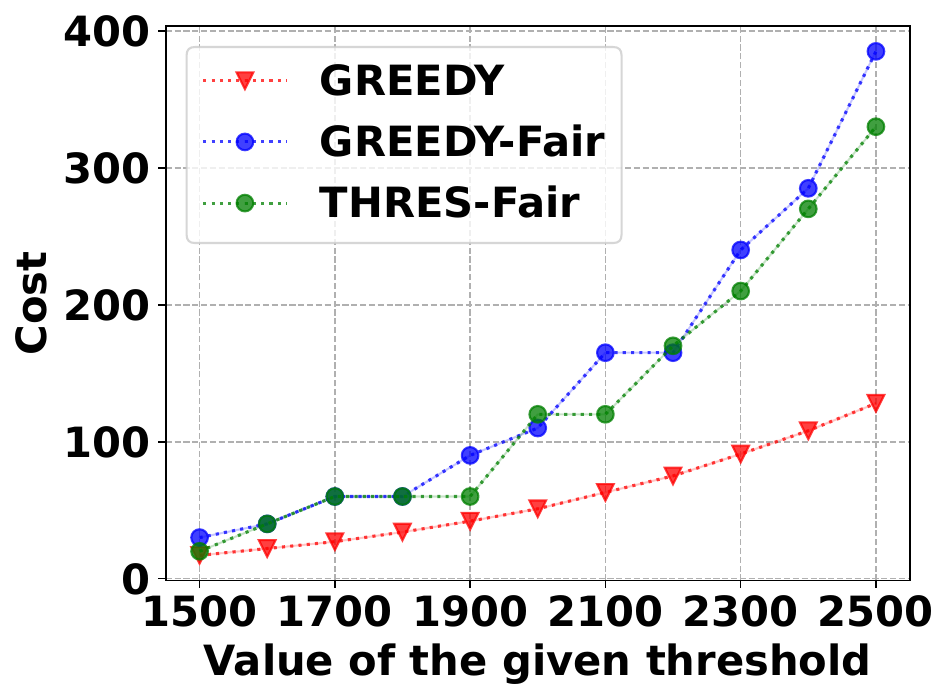}
      \caption{Cost}
      \label{fig:max-cover-tau-cost}
    \end{subfigure}
    \begin{subfigure}{0.33\textwidth}
      \includegraphics[width=1\linewidth]{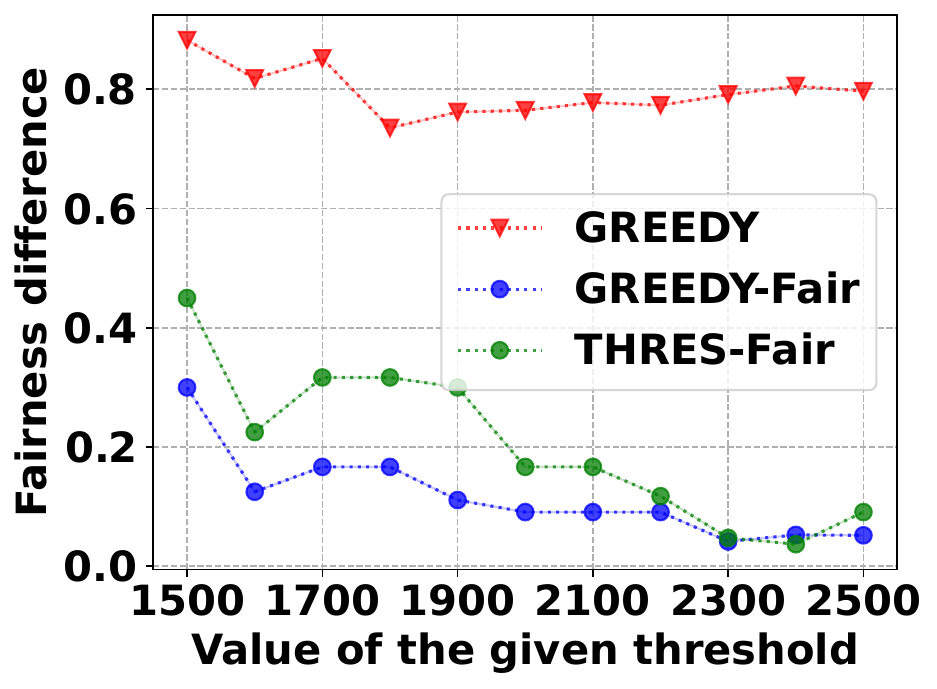}
      \caption{Fairness difference}
      \label{fig:max-cover-tau-diff}
    \end{subfigure}

    \caption{Performance comparison on the Twitch\_5000 dataset for Maximum Coverage. \ref{fig:radar-greedy-bi}, \ref{fig:radar-greedy-fairness-bi}, \ref{fig:radar-threshold-fairness-bi} illustrate the distribution of users speaking different languages in the solutions produced by various algorithms with $\tau = 2400$. $f$: the value of the objective submodular function. Cost: the size of the returned solution. Fairness difference: $(\max_c |S \cap U_c| - \min_c |S \cap U_c|) / |S|$.}
    \label{fig:image-sum-central-exp} 
\end{figure}

\section*{Acknowledgements}
Samson Zhou is supported in part by NSF CCF-2335411. 
Samson Zhou would like to thank the Simons Institute for the Theory of Computing, where part of this work was done. 

\def\shortbib{0}
\bibliography{main}

\appendix
%

\section{Omitted Lemma of Section \ref{sec:problem_setup}}\label{apdx:prelim}
\begin{lemma}[\citep{el2020fairness}]\label{lem:prelim}
\label{lem:fairness_to_matroid}
    If $f$ is monotone, then solving FSM is equivalent to the problem below.
      \begin{align*}
        &\max_{S\in U} f(S)\nonumber\\
        s.t.\qquad& |S\cap U_c|\leq u_c\qquad \forall c\in[N]\nonumber\\
        &\sum_{c\in [N]}\max\{|S\cap U_c|, l_c\}\leq k
    \end{align*}
\end{lemma}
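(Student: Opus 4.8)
The plan is to establish that the two optimization problems have the same optimal value and that optima transfer between them, via a containment of feasible regions in one direction and a value-preserving repair argument (exploiting monotonicity of $f$) in the other. Write $\mathcal{F}_A$ for the feasible region of the original FSM (both lower and upper color bounds plus $|S|\le k$) and $\mathcal{F}_B=\mathcal{M}_{fair}(\mathcal{P},k,\vec{l},\vec{u})$ for the feasible region of the matroid formulation. Since both problems maximize the same monotone $f$, it suffices to prove $\max_{S\in\mathcal{F}_A}f(S)=\max_{S\in\mathcal{F}_B}f(S)$ together with a map carrying an optimum of each region to an equally good point of the other.

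First I would show $\mathcal{F}_A\subseteq\mathcal{F}_B$, which gives $\max_{\mathcal{F}_A}f\le\max_{\mathcal{F}_B}f$ for free. Indeed, if $S\in\mathcal{F}_A$, then $l_c\le|S\cap U_c|$ forces $\max\{|S\cap U_c|,l_c\}=|S\cap U_c|$ for every color, so $\sum_{c\in[N]}\max\{|S\cap U_c|,l_c\}=\sum_{c\in[N]}|S\cap U_c|=|S|\le k$; combined with the retained upper bounds $|S\cap U_c|\le u_c$, this is exactly membership in $\mathcal{F}_B$.

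The substantive direction is $\max_{\mathcal{F}_B}f\le\max_{\mathcal{F}_A}f$. Given any $S\in\mathcal{F}_B$, I would repair it into a feasible point of $\mathcal{F}_A$ without decreasing $f$. Let $D=\{c:|S\cap U_c|<l_c\}$ be the colors violating their lower bound. For each $c\in D$, add $l_c-|S\cap U_c|$ arbitrary new elements of $U_c$ (which exist since feasibility requires $|U_c|\ge l_c$), producing a superset $S'\supseteq S$ with $|S'\cap U_c|=l_c$ for $c\in D$ and $|S'\cap U_c|=|S\cap U_c|$ otherwise. Now every lower bound holds, the upper bounds still hold (using $l_c\le u_c$), and the cardinality bookkeeping closes up: $|S'|=\sum_{c\in D}l_c+\sum_{c\notin D}|S\cap U_c|=\sum_{c\in[N]}\max\{|S\cap U_c|,l_c\}\le k$, so $S'\in\mathcal{F}_A$. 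Monotonicity of $f$ gives $f(S')\ge f(S)$, and optimizing over $S$ yields the inequality; the very same construction sends an optimum of $\mathcal{F}_B$ to an equally good optimum of $\mathcal{F}_A$.

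The main obstacle, and essentially the only place care is needed, is the cardinality accounting in the repair step: one must observe that $\sum_{c\in[N]}\max\{|S\cap U_c|,l_c\}$ equals exactly the size of the repaired set $S'$, because the $\max$ operator pre-reserves precisely $l_c$ slots for each deficient color while leaving the already-satisfied colors untouched. This is what lets a single matroid constraint encode the per-color lower bounds and the global budget $k$ simultaneously. A secondary point worth stating explicitly is the feasibility precondition $l_c\le u_c$ and $|U_c|\ge l_c$, which guarantees that the added elements exist and that filling up to $l_c$ never overshoots the upper bound $u_c$.
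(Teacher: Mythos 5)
Your proposal is correct: the containment $\mathcal{F}_A\subseteq\mathcal{F}_B$ plus the monotone repair step (padding each deficient color up to $l_c$, with the key accounting identity $|S'|=\sum_{c\in[N]}\max\{|S\cap U_c|,l_c\}$) is exactly the right argument, and your explicit preconditions $l_c\le u_c$ and $|U_c|\ge l_c$ are the implicit feasibility assumptions needed for the equivalence to hold. Note that the paper itself gives no proof of this lemma --- it is quoted from \citep{el2020fairness} --- so there is nothing internal to compare against; your argument is the standard one and fills that gap correctly.
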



\section{Appendix for Section \ref{sec:conv}}
In this section, we present missing discussions and proofs from Section \ref{sec:conv} in the main paper. We first present missing proofs of Theorem \ref{thm:convert} about algorithm \conv in Section \ref{appdx:conv}. Then we present the proof of Theorem \ref{thm:conv_continuous} about the converting algorithm \convc for continuous algorithms in Section \ref{appdx:conv_conti}. In addition, pseudocode for the algorithm \convc is presented in Algorithm \ref{alg:conv_c}. 

\subsection{Proof of Theorem \ref{thm:convert}}
\label{appdx:conv}

In this section, we present the missing proofs of the lemmas that are used in the proof of Theorem \ref{thm:convert}. In order to prove Theorem \ref{thm:convert}, we need the following two lemmas. Lemma \ref{lem:fairness_for_cover} guarantees that the solution set $S$ after the rounding step satisfies the fairness constraint for cover. Lemma \ref{lem:matroid_subset_relationship} implies the inclusion relationship of the fairness matroid with the same fairness ratios. 


\begin{lemma}
\label{lem:fairness_for_cover}
    For each guess $\kappa$ such that $\kappa\leq(1+\alpha)|OPT|$, the solution set $S$ in Algorithm \ref{alg:conver} satisfies 
\[\beta\lfloor \frac{p_c|S|}{\beta}\rfloor\leq |S\cap U_c|\leq  \beta\lceil \frac{q_c|S|}{\beta}\rceil,\qquad |S|=\beta\kappa.\]
\end{lemma}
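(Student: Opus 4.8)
The plan is to track the set $S$ through the two rounding phases (Lines \ref{line:rounding_starts}--\ref{line:rounding_ends} and Lines \ref{line:rounding2_starts}--\ref{line:rounding2_ends}) and verify the three claims separately: the lower bound, the upper bound, and $|S|=\beta\kappa$. First I would fix a guess $\kappa\le(1+\alpha)|OPT|$ and record what the FSM subroutine guarantees: by Definition \ref{def:bicri-SM} the set it returns before rounding lies in $\mathcal{M}_\beta$, so $|S\cap U_c|\le\beta\lceil q_c\kappa\rceil$ for every $c$ and $\sum_{c\in[N]}\max\{|S\cap U_c|,\beta\lfloor p_c\kappa\rfloor\}\le\beta\kappa$. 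Since the target size is $|S|=\beta\kappa$, I note the identities $\beta\lfloor p_c|S|/\beta\rfloor=\beta\lfloor p_c\kappa\rfloor$ and $\beta\lceil q_c|S|/\beta\rceil=\beta\lceil q_c\kappa\rceil$, so it suffices to establish $\beta\lfloor p_c\kappa\rfloor\le|S\cap U_c|\le\beta\lceil q_c\kappa\rceil$ together with $|S|=\beta\kappa$ after rounding.

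For the lower bound, the first rounding phase leaves each color with $|S\cap U_c|=\max\{|S\cap U_c|,\beta\lfloor p_c\kappa\rfloor\}\ge\beta\lfloor p_c\kappa\rfloor$ (adding one element at a time stops exactly at the integer threshold), and the second phase only inserts elements, so the lower bound survives to the final $S$. The same identity shows that immediately after the first phase $|S|=\sum_{c}\max\{|S\cap U_c|,\beta\lfloor p_c\kappa\rfloor\}\le\beta\kappa$ by the $\mathcal{M}_\beta$-membership, so this phase never overshoots the target size. The upper bound is equally direct: it holds before rounding; the first phase raises each $|S\cap U_c|$ to at most $\beta\lfloor p_c\kappa\rfloor\le\beta\lceil q_c\kappa\rceil$ (using $p_c\le q_c$); and the second phase adds to color $c$ only while $|S\cap U_c|<\beta\lceil q_c\kappa\rceil$. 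Hence $|S\cap U_c|\le\beta\lceil q_c\kappa\rceil$ throughout.

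The substantive part, and the step I expect to be the main obstacle, is showing the rounding can always be \emph{carried out}: each color must contain enough elements to meet its lower bound, and the colors jointly must contain enough elements to fill $S$ up to $\beta\kappa$ without violating any upper bound. For the first phase I would invoke Lemma \ref{lem:assump_for_SM}, which under the standing assumption and $\kappa\le(1+\alpha)|OPT|$ gives $rank(\mathcal{M}_\beta)=\beta\kappa$: taking any base $B$ with $|B|=\beta\kappa$, the chain $\sum_c|B\cap U_c|=\beta\kappa=\sum_c\max\{|B\cap U_c|,\beta\lfloor p_c\kappa\rfloor\}$ forces $|B\cap U_c|\ge\beta\lfloor p_c\kappa\rfloor$ termwise, whence $|U_c|\ge|B\cap U_c|\ge\beta\lfloor p_c\kappa\rfloor$ and the first phase is feasible. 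For the second phase I would prove the capacity bound $\sum_{c\in[N]}\min\{|U_c|,\beta\lceil q_c\kappa\rceil\}\ge\beta\kappa$ directly: dividing by $\beta\kappa$, using $\lceil q_c\kappa\rceil\ge q_c\kappa$ and $\kappa\le(1+\alpha)|OPT|$ to write $|U_c|/(\beta\kappa)\ge|U_c|/(\beta(1+\alpha)|OPT|)$, the left side is at least $\sum_c\min\{q_c,|U_c|/(\beta(1+\alpha)|OPT|)\}\ge1$ by the standing assumption. Finally I would argue that the second phase cannot terminate with $|S|<\beta\kappa$: otherwise every color would be saturated at $\min\{|U_c|,\beta\lceil q_c\kappa\rceil\}$, giving $|S|=\sum_c\min\{|U_c|,\beta\lceil q_c\kappa\rceil\}\ge\beta\kappa$, a contradiction. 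This yields $|S|=\beta\kappa$ and completes the argument.
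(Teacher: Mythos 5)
Your proof is correct and follows essentially the same route as the paper's: use the FSM bicriteria guarantee on the pre-rounding set, track the lower and upper bounds through the two rounding phases, and finally rewrite $\beta\lfloor p_c\kappa\rfloor\le|S\cap U_c|\le\beta\lceil q_c\kappa\rceil$ in terms of $|S|=\beta\kappa$. The only difference is that you explicitly verify that both rounding phases can be carried out (via the rank of $\mathcal{M}_\beta$ for the lower bounds, and the capacity bound $\sum_{c\in[N]}\min\{|U_c|,\beta\lceil q_c\kappa\rceil\}\ge\beta\kappa$ for filling up to $\beta\kappa$), details the paper merely asserts from the standing assumption $\sum_{c\in[N]}\min\{q_c,|U_c|/(\beta(1+\alpha)|OPT|)\}\ge1$.
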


\begin{proof}
    Here we denote the solution set returned by the bicriteria algorithm for FSM as $S'$, and the solution set after the rounding steps from Line \ref{line:rounding_starts} to Line \ref{line:rounding_ends} as $S''$. From the definition of bicriteria approximation algorithm for FSM, we can see that the solution set returned by the subroutine algorithm for FSM satisfies that
    \begin{align*}
        &|S'\cap U_c|\leq \beta \lceil q_c\kappa\rceil\\
        &\sum_{c\in[N]}\max\{|S'\cap U_c|,\beta \lfloor p_c\kappa\rfloor\}\leq \beta\kappa
    \end{align*}
    After the rounding steps for each group from Line \ref{line:rounding_starts} to Line \ref{line:rounding_ends}, the solution set satisfies that $ |S''\cap U_c|=\max\{\beta \lfloor p_c\kappa\rfloor,|S'\cap U_c|\}$ for any $c\in[N]$. It then follows that $\beta \lfloor p_c\kappa\rfloor\leq|S''\cap U_c|\leq \beta \lceil q_c\kappa\rceil$. Since that $\sum_{c\in[N]}\max\{|S'\cap U_c|,\beta \lfloor p_c\kappa\rfloor\}\leq \beta \kappa$, we have that 
    \begin{align*}
        |S''|=\sum_{c\in[N]}|S''\cap U_c|=\sum_{c\in [N]}\max\{|S'\cap U_c|,\beta \lfloor p_c\kappa\rfloor\}\leq \beta \kappa.
    \end{align*}
    From the assumption that $\sumN q_c\geq 1$ and $\sum_{c\in[N]}\min\{q_c, \frac{|U_c|}{\beta(1+\alpha)|OPT|)}\}\geq1$, 
after the second rounding phase from Line \ref{line:rounding2_starts} to Line \ref{line:rounding2_ends}, we have $|S|=\beta\kappa$ and that for each group $c$, $$\beta \lfloor p_c\kappa\rfloor\leq|S\cap U_c|\leq \beta \lceil q_c\kappa\rceil.$$ Since the solution set $S$ is of cardinality $\beta\kappa$, then we have 
    \begin{align*}
        \beta \lfloor \frac{p_c|S|}{\beta}\rfloor\leq|S\cap U_c|\leq \beta \lceil \frac{q_c|S|}{\beta}\rceil.
    \end{align*}
\end{proof}


\begin{lemma}
\label{lem:matroid_subset_relationship}
    For any positive integers $\kappa_1,\kappa_2$ such that $\kappa_1\leq\kappa_2$, we have that 
    \begin{align*}
        \mathcal{M}_{fair}(P,\kappa_1,\lceil \vec{p}\kappa_1\rceil,\lceil\vec{q}\kappa_1\rceil)\subseteq \mathcal{M}_{fair}(P,\kappa_2,\lfloor \vec{p}\kappa_2\rfloor,\lceil\vec{q}\kappa_2\rceil)
    \end{align*}
\end{lemma}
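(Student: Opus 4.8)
The plan is to verify directly that every set $S$ satisfying the defining inequalities of the left-hand matroid also satisfies those of the right-hand matroid. Recall that membership in $\mathcal{M}_{fair}(P,\kappa,\vec{l},\vec{u})$ amounts to two conditions: a per-color upper bound $|S\cap U_c|\leq u_c$ for all $c\in[N]$, together with the budget constraint $\sum_{c\in[N]}\max\{|S\cap U_c|,l_c\}\leq\kappa$. So fixing an arbitrary $S\in\mathcal{M}_{fair}(P,\kappa_1,\lceil\vec{p}\kappa_1\rceil,\lceil\vec{q}\kappa_1\rceil)$, I need to check these two conditions for the parameters $(\kappa_2,\lfloor\vec{p}\kappa_2\rfloor,\lceil\vec{q}\kappa_2\rceil)$.

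The upper-bound condition is immediate: since $\kappa_1\leq\kappa_2$ and $q_c\geq 0$, monotonicity of the ceiling gives $\lceil q_c\kappa_1\rceil\leq\lceil q_c\kappa_2\rceil$, so $|S\cap U_c|\leq\lceil q_c\kappa_1\rceil\leq\lceil q_c\kappa_2\rceil$ for each $c$.

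The budget condition is the main obstacle, because the lower bound changes from $\lceil p_c\kappa_1\rceil$ to $\lfloor p_c\kappa_2\rfloor$, and the latter may be strictly larger, so the budget sum can a priori grow when we pass to $\kappa_2$. The key tool I would use is the elementary pointwise inequality $\max\{a,c\}\leq\max\{a,b\}+(c-b)^+$, valid for all reals, where $(x)^+=\max\{x,0\}$. Applying it with $a=|S\cap U_c|$, $b=\lceil p_c\kappa_1\rceil$, and $c=\lfloor p_c\kappa_2\rfloor$ and then summing over $c$ yields
\[\sum_{c\in[N]}\max\{|S\cap U_c|,\lfloor p_c\kappa_2\rfloor\}\leq\sum_{c\in[N]}\max\{|S\cap U_c|,\lceil p_c\kappa_1\rceil\}+\sum_{c\in[N]}\bigl(\lfloor p_c\kappa_2\rfloor-\lceil p_c\kappa_1\rceil\bigr)^+.\]
The first sum on the right is at most $\kappa_1$ by the assumed membership of $S$ in the left-hand matroid.

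For the second sum I would bound each term by dropping the floor and ceiling in the favorable direction: $\lfloor p_c\kappa_2\rfloor-\lceil p_c\kappa_1\rceil\leq p_c\kappa_2-p_c\kappa_1=p_c(\kappa_2-\kappa_1)$, which is nonnegative since $\kappa_2\geq\kappa_1$, hence $(\lfloor p_c\kappa_2\rfloor-\lceil p_c\kappa_1\rceil)^+\leq p_c(\kappa_2-\kappa_1)$. Summing and invoking the standing FSC assumption $\sum_{c\in[N]}p_c\leq 1$ gives $\sum_{c\in[N]}(\lfloor p_c\kappa_2\rfloor-\lceil p_c\kappa_1\rceil)^+\leq(\kappa_2-\kappa_1)\sum_{c\in[N]}p_c\leq\kappa_2-\kappa_1$. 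Combining the two bounds, the total budget sum is at most $\kappa_1+(\kappa_2-\kappa_1)=\kappa_2$, which establishes the budget condition and completes the containment. I expect the only subtlety to be the justification of the pointwise max inequality and the confirmation that $\sum_{c\in[N]}p_c\leq 1$ is indeed available from the FSC definition; everything else is routine monotonicity of $\lceil\cdot\rceil$ and $\lfloor\cdot\rfloor$.
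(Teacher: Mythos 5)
Your proof is correct, but it takes a different route from the paper's. You handle the change in lower bounds additively: you bound each term via $\max\{a,c\}\leq\max\{a,b\}+(c-b)^+$, control the total increase $\sum_{c}\bigl(\lfloor p_c\kappa_2\rfloor-\lceil p_c\kappa_1\rceil\bigr)^+$ by $(\kappa_2-\kappa_1)\sum_c p_c$, and then invoke the standing FSC assumption $\sum_{c\in[N]}p_c\leq 1$ to absorb this into the enlarged budget $\kappa_2$. The paper instead normalizes: it divides the budget constraint by $\kappa$ and observes the termwise inequality
\begin{align*}
\max\Bigl\{\tfrac{|S\cap U_c|}{\kappa_2},\tfrac{\lfloor p_c\kappa_2\rfloor}{\kappa_2}\Bigr\}\leq\max\Bigl\{\tfrac{|S\cap U_c|}{\kappa_1},\tfrac{\lceil p_c\kappa_1\rceil}{\kappa_1}\Bigr\},
\end{align*}
which follows because the first argument shrinks when $\kappa_1\leq\kappa_2$ and the second satisfies $\lfloor p_c\kappa_2\rfloor/\kappa_2\leq p_c\leq\lceil p_c\kappa_1\rceil/\kappa_1$; summing and multiplying back by $\kappa_2$ gives the budget condition. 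The practical difference: the paper's scaling argument needs only $p_c\geq 0$, so the lemma holds for an arbitrary nonnegative lower-bound vector, whereas your argument genuinely uses $\sum_c p_c\leq 1$ — which is legitimately available here from the FSC problem definition, but makes your version of the lemma slightly less general. On the other hand, your additive decomposition is arguably more transparent about \emph{why} there is slack (the budget grows by $\kappa_2-\kappa_1$ while the lower bounds grow by at most $(\kappa_2-\kappa_1)\sum_c p_c$), and it would extend to settings where the two bounds are perturbed by any total amount at most $\kappa_2-\kappa_1$.
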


\begin{proof}
       The lemma is equivalent to prove that for any subset $A\in\mathcal{M}_{fair}(P,\kappa_1,\lceil \vec{p}\kappa_1\rceil,\lceil\vec{q}\kappa_1\rceil)$, we have that $A$ is also in $\mathcal{M}_{fair}(P,\kappa_2,\lfloor \vec{p}\kappa_2\rfloor,\lceil\vec{q}\kappa_2\rceil)$. Since $\kappa_1\leq\kappa_2$, $|A\cap U_c|\leq \lceil q_c\kappa_1\rceil\leq \lceil q_c\kappa_2\rceil$. For the second constraint, notice that $\sum_{c\in [N]}\max\{|A\cap U_c|,\lceil p_c \kappa_1\rceil\}\leq\kappa_1$ is equivalent to that $\sum_{c\in [N]}\max\{|A\cap U_c|/\kappa_1,\frac{\lceil p_c \kappa_1\rceil}{\kappa_1} \}\leq1$. It then follows that $$\sum_{c\in [N]}\max\{|A\cap U_c|/\kappa_2,\frac{\lfloor p_c \kappa_2\rfloor}{\kappa_2} \}\leq \sum_{c\in [N]}\max\{|A\cap U_c|/\kappa_1,\frac{\lceil p_c \kappa_1\rceil}{\kappa_1}  \}\leq1.$$ Therefore, $A\in \mathcal{M}_{fair}(P,\kappa_2,\lfloor \vec{p}\kappa_2\rfloor,\lceil\vec{q}\kappa_2\rceil)$.
\end{proof}
We now prove Theorem~\ref{thm:convert}. 
\thmconvert*
\begin{proof}
    Denote the optimal solution of the FSC as $OPT$. Since by Lemma \ref{lem:fairness_for_cover}, the fairness constraint for cover is always satisfied. When the guess of 
$OPT$ satisfies that  $|OPT|<\kappa\leq(1+\alpha)|OPT|$, by the definition of bicriteria approximation algorithm for FSM, it follows that
    \begin{align*}
        f(S)\geq \gamma\max_{X\in\mathcal{M}_{fair}(P,\kappa,\lfloor \vec{p}\kappa\rfloor,\lceil\vec{q}\kappa\rceil)}f(X).
    \end{align*}
 Since $\kappa> |OPT|$, by Lemma \ref{lem:matroid_subset_relationship}, we have that $\mathcal{M}_{fair}(P,|OPT|,\lceil \vec{p}|OPT|\rceil,\lceil\vec{q}|OPT|\rceil)\subseteq\mathcal{M}_{fair}(P,\kappa,\lfloor \vec{p}\kappa\rfloor,\lceil\vec{q}\kappa\rceil)$. Therefore, it follows that

    \begin{align*}
        \max_{X\in\mathcal{M}_{fair}(P,\kappa,\lfloor \vec{p}\kappa\rfloor,\lceil\vec{q}\kappa\rceil)}f(X)\geq\max_{X\in\mathcal{M}_{fair}(P,|OPT|,\lceil \vec{p}|OPT|\rceil,\lceil\vec{q}|OPT|\rceil)}f(X)
    \end{align*}
    Since $OPT$ is the optimal solution of FSC, we have that $\lceil p_c|OPT|\rceil\leq |OPT\cap U_c |\leq \lceil q_c|OPT|\rceil$. It implies that $OPT\in\mathcal{M}_{fair}(P,|OPT|,\lceil \vec{p}|OPT|\rceil,\lceil\vec{q}|OPT|\rceil)$. Therefore we can get $$\max_{X\in\mathcal{M}_{fair}(P,|OPT|,\lceil \vec{p}|OPT|\rceil,\lceil\vec{q}|OPT|\rceil)}f(X)\geq f(OPT)\geq\tau.$$ Then 
    \begin{align*}
        f(S)\geq\gamma\tau.
    \end{align*}
    
    This means that the algorithm stops by the time when $\kappa$ reaches the region of $(|OPT|,(1+\alpha)|OPT|]$, which implies that the output solution set satisfies $|S|=\beta\kappa\leq\beta(1+\alpha)|OPT|$. Since by Lemma \ref{lem:fairness_for_cover}, the fairness constraint is always satisfied, the output solution set satisfies a $((1+\alpha)\beta,\gamma)$-approximation ratio. The algorithm makes $O(\log_{1+\alpha} |OPT|)$ calls to the bicriteria algorithm for FSM with $\kappa=1+\alpha,(1+\alpha)^2,...,(1+\alpha)|OPT|$, so the query complexity is $O(\sum_{i=1}^{\frac{\log(|OPT|)}{\log(\alpha+1)}}\mathcal{T}(n,(1+\alpha)^i))$.
\end{proof}

\subsection{Converting theorem for  continuous algorithms}
\label{appdx:conv_conti}
In this section, we present and analyze the converting algorithm for the continuous algorithms, which is denoted as \convc. The algorithm description is in Algorithm \ref{alg:conv_c}. The main result of the algorithm is presented in Theorem \ref{thm:conv_continuous}, which we restate as follows.

\textbf{Theorem \ref{thm:conv_continuous}. }\textit{
   Any continuous algorithm with a  $(\gamma,\beta)$-bicriteria approximation ratio for FSM that returns a solution in time $\mathcal{T}(n,\kappa)$ with probability at least $1-\frac{\delta}{n}$ can be converted into an approximation algorithm for FSC such that with probability $1-\delta$, the algorithm satisfies a $((1+\alpha)\beta,\frac{(1-\frac{\varepsilon}{2})\gamma-\frac{\varepsilon}{3}}{1+\frac{\varepsilon}{2}+\frac{\varepsilon}{3\gamma}})$-bicriteria approximation ratio where $\frac{(1-\frac{\varepsilon}{2})\gamma-\frac{\varepsilon}{3}}{1+\frac{\varepsilon}{2}+\frac{\varepsilon}{3\gamma}} $ holds in expectation. The query complexity is at most $O(\sum_{i=1}^{\log_{1+\alpha}|OPT|}\mathcal{T}(n,(1+\alpha)^i)+\frac{n\log_{1+\alpha}|OPT|}{\varepsilon^2}\log\frac{n}{\delta})$.
}
\begin{proof}
Throughout the proof, we use $OPT$ to denote the optimal solution of the FSM. In addition, we denote the optimal solution of FSM under the total cardinality $\kappa$ as $OPT_\kappa$, i.e., $OPT_\kappa=\arg\max_{S\in\mathcal{M}_{fair}(P,\kappa,\vec{p}\kappa,\vec{q}\kappa)}f(S)$. First of all, notice that there are at most $\min\{n,\log_{1+\alpha}|OPT|+1\}$ number of guesses of $|OPT|$ before $\kappa$ reaches $|OPT|\leq\kappa\leq (1+\alpha)|OPT|$. By taking a union bound over all guess of $|OPT|$ we would obtain with probability at least $1-\frac{\delta}{2}$ and for each guess of $|OPT|$, the algorithm for FSM outputs a solution $\vect{x}$ with a bicriteria approximation ratio of $(\gamma,\beta)$. 

Since $\vect{F}(\vect{x})\leq n\max_{s\in\mathcal{M}_\beta}f(s)\leq n f(OPT_\kappa)$, by the Chernoff bound in Lemma \ref{lem:chernoff} and taking the union bound, it follows that with probability at least $1-\frac{\delta}{2}$, for each guess of $|OPT|$, the estimate of $\vect{F}(\vect{x})$ in Line \ref{line:estimateY} of Algorithm \ref{alg:conv_c} denoted as $Y$, satisfies that
\begin{align*}
    |Y-\vect{F}(\vect{x})|\leq \frac{\varepsilon}{3} f(OPT_\kappa)+\frac{\varepsilon}{2}\vect{F}(\vect{x}).
\end{align*}
By the definition of the bicriteria approximation ratio, it follows that 
$Y\geq\{(1-\frac{\varepsilon}{2})\gamma-\frac{\varepsilon}{3}\} f(OPT_\kappa)$.

Similar to the proof of Theorem \ref{alg:conver}, we can see that when $\kappa$, which is the guess of the size $OPT$ satisfies that $|OPT|\leq\kappa\leq (1+\alpha)|OPT|$, it follows that 
\begin{align*}
    f(OPT_\kappa)\geq\tau.
\end{align*}
Therefore, $Y\geq \{(1-\frac{\varepsilon}{2})\gamma-\frac{\varepsilon}{3}\} \tau$. It then follows that the algorithm stops before the guess of $|OPT|$ satisfies $|OPT|\leq\kappa\leq (1+\alpha)|OPT|$. The value of multi-linear extension of the output fractional solution then satisfies 
\begin{align*}
    (1+\frac{\varepsilon}{2})\vect{F}(\vect{x})+\frac{\varepsilon}{3} f(OPT_\kappa)\geq Y\geq\{(1-\frac{\varepsilon}{2})\gamma-\frac{\varepsilon}{3}\} \tau.
\end{align*}
Combining the above inequality with that $\vect{F}(\vect{x})\geq\gamma f(OPT_\kappa)$ Then
\begin{align*}
    \vect{F}(\vect{x})\geq\frac{(1-\frac{\varepsilon}{2})\gamma-\frac{\varepsilon}{3}}{1+\frac{\varepsilon}{2}+\frac{\varepsilon}{3\gamma}} \tau.
\end{align*}
Since $\vect{x}\in\mathcal{P}(\mathcal{M}_\beta)$, where $\mathcal{M}_\beta$ is the $\beta$ extension of the fairness matroid under the guess $\kappa$, then after the pipage rounding step, we would have that $S\in\mathcal{M}_\beta$, and the value of objective function satisfies $\mathbb{E} f(S)\geq\vect{F}(\vect{x})\geq\frac{(1-\frac{\varepsilon}{2})\gamma-\frac{\varepsilon}{3}}{1+\frac{\varepsilon}{2}+\frac{\varepsilon}{3\gamma}} \tau$. After the rounding steps from Line \ref{line:convc,rounding_starts} to Line \ref{line:convc_rounding2_ends} in Algorithm \ref{alg:conv_c}, we would get that the final solution set satisfies 
\begin{align}
    &\mathbb{E}f(S)\geq\frac{(1-\frac{\varepsilon}{2})\gamma-\frac{\varepsilon}{3}}{1+\frac{\varepsilon}{2}+\frac{\varepsilon}{3\gamma}} \tau\nonumber\\
    &\beta\lfloor \frac{p_c|S|}{\beta}\rfloor\leq |S\cap U_c|\leq \beta\lceil \frac{q_c|S|}{\beta}\rceil\nonumber\\
    &|S|\leq (1+\alpha)\beta|OPT|\nonumber.
\end{align}

\end{proof}
\begin{algorithm}[t!]
\caption{\convc}\label{alg:conv_c}
\textbf{Input}: An FSC instance with threshold $\tau$, fairness parameters $\vec{p}$, $\vec{q}$, partition $P$, a $(\gamma,\beta)$-bicriteria approximation algorithm for FSM, $\alpha>0$\\
\textbf{Output}: $S\subseteq U$
\begin{algorithmic}[1]
\State $\kappa\gets\lceil1+\alpha\rceil$, $S\gets\emptyset$.
\While{true}
\State $\vect{x}\gets(\gamma,\beta)$-bicriteria approximation for FSM with fairness matroid 
$\mathcal{M}_{fair}(P,\kappa,\vec{p}\kappa,\vec{q}\kappa)$
\label{line:convcmax}
\State $Y\gets$ average over $\frac{18n}{\varepsilon^2}\log(\frac{4n}{\delta})$ samples from $\vect{F}(\vect{x})$\label{line:estimateY}
\If{$Y\geq\{(1-\frac{\varepsilon}{2})\gamma-\frac{\varepsilon}{3}\} \tau$}
\State $S\gets$ pipage rounding of $\vect{x}$
 \For {$c\in[N]$} \label{line:convc,rounding_starts}
        \If{$|S\cap U_c| < p_c\beta\kappa$} 
        \State Add new elements from $U_c/S$ to $S$ 
        until $|S\cap U_c| \geq p_c\beta\kappa$
        \EndIf
\EndFor\label{line:convc,rounding_ends}
        \If{$|S| < \beta\kappa$} 
        \For {$c\in[N]$} \label{line:convc_rounding2_starts}
        \While{$|S| < \beta\kappa$ and $|S\cap U_c|< q_c \beta \kappa$}
        \State Add new elements in $U_c/S$ to $S$
        \EndWhile
        \EndFor\label{line:convc_rounding2_ends}
        \EndIf
        \EndIf
\State $\kappa\gets\lceil(1+\alpha)\kappa\rceil$
\EndWhile \label{line:convc_alg_convert,loop_ends}
\State \textbf{return} $S$
\end{algorithmic}
\end{algorithm}

\section{Appendix for Section \ref{sec:alg_for_FSM}}
\label{appdx:alg_for_FSM}
In this section, we present the missing content in Section \ref{sec:alg_for_FSM} in the main paper. 

\subsection{Appendix for Section \ref{sec:discrete}}
In this portion of appendix, we present the missing details and proofs in Section \ref{sec:discrete} in the main paper, which is about two discrete algorithms \greedyalg and \threalglong. We begin by presenting the proof of Lemma \ref{lem:feasible_OPT}, followed by proofs of the threshold greedy algorithm \threalglong. Finally, the pseudocode of \greedyalg and \threalglong are presented in Algorithm \ref{alg:fairness-bi} and Algorithm \ref{alg:thres-fairness-bi} respectively.

First of all, we prove Lemma \ref{lem:feasible_OPT}, which builds the relationship between the original fairness matroid and its $\beta$-extension for any $\beta\in \mathbb{N}_+$. 

\noindent\textbf{ Lemma
   \ref{lem:feasible_OPT}. }\textit{For any $\beta\in\mathbb{N}_+$ and any fairness matroid $\mathcal{M}_{fair}(P,\kappa,\vec{l},\vec{u})$, denote $\mathcal{M}_{\beta}$ as the $\beta$-extended fairness matroid of $\mathcal{M}_{fair}(P,\kappa,\vec{l},\vec{u})$. Then for any set $S\in\mathcal{M}_{\beta}$ with $|S|=\beta \kappa$, $T\in \mathcal{M}_{fair}(P,\kappa,\vec{l},\vec{u})$ with $|T|=\kappa$, and any permutation of $S=(s_1,s_2,...,s_{\beta\kappa})$, there exist a sequence $E=(e_1,e_2,...,e_{\beta\kappa})$ such that each element in $T$ appears $\beta$ times in $E$ and that 
   \begin{align*}
        S_i\cup \{e_{i+1}\}\in \mathcal{M}_\beta , \qquad\forall i\in\{0,1,...,\beta\kappa\}
    \end{align*}
    where $S_i=(s_1,s_2,...,s_i)$ and $S_0=\emptyset$.
}
\begin{proof}
    Before proving the lemma, we define some notations here. For any sequence of any length $m$ denoted as $A=(a_1,a_2,...,a_m)$, we define the number of element $x$ in the sequence as $|A^x|$, i.e., $|A^x|:=|\{i:a_i=x\}|$. In addition, we define the number of elements of group $c$ in the sequence as $|A^c|$, i.e., $|A^c|=|\{i:a_i\in U_c\}|=\sum_{x\in U_c}|A^x|$. For the sequence $E$, we denote the sequence containing $i$-th element to the last elements as $E_i$, i.e., $E_i=(e_{i}, e_{i+1},...,e_{\beta\kappa})$. Now we prove a stronger claim which would imply the results in the lemma.
    \begin{claim}
        For any $\beta\in\mathbb{N}_+$, denote $\mathcal{M}_{\beta}$ as the $\beta$-extension of the fairness matroid. Then for any set $S\in\mathcal{M}_{\beta}$, there exists a sequence $E=(e_1,...,e_{\beta \kappa})$ such that for each $i\in\{0,1,...,\beta\kappa\}$, the sequence $F_i=(S_{i},E_{i+1})=(s_1,s_2,...,s_{i},e_{i+1}, ...,e_{\beta \kappa})$ satisfies that 
\begin{align*}
    &|F_i^c|\leq u_c \beta\qquad \forall c\in[N]\\
    &\sum_{c\in[N]}\max\{|F_i^c|,l_c\beta\}\leq \beta \kappa.  
\end{align*}
Here for each $e\in E$, we have $e\in T$.  Besides, we have that for any element $x\in T$,
\begin{align*}
    |F_i^x|\leq \beta.
\end{align*}
\end{claim}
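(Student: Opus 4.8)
The plan is to prove the Claim by lifting to an expanded ground set on which $\mathcal{M}_\beta$ becomes an ordinary matroid, and then running a sequential base-exchange argument there. Concretely, I would introduce $\hat U = U \times [\beta]$, giving every element $\beta$ identical copies, declare $(u,j)$ to lie in group $c$ whenever $u \in U_c$ (so $\hat U_c = U_c \times [\beta]$ under the partition $\hat P$), and define the fairness matroid $\hat{\mathcal{M}} = \mathcal{M}_{fair}(\hat P, \beta\kappa, \beta\vec{l}, \beta\vec{u})$ on $\hat U$, which is a matroid by Lemma \ref{lem:prelim}. Since independence in a fairness matroid depends only on the group-count vector, a multiset on $U$ using at most $\beta$ copies of each element satisfies the two constraints of the Claim if and only if the corresponding copy-set $\hat F$ in $\hat U$ is independent in $\hat{\mathcal{M}}$; in particular the group count $|F^c|$ equals $|\hat F \cap \hat U_c|$.

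Next I would identify the two relevant bases. Set $\hat S = \{(s,1) : s \in S\}$ and $\hat T = T \times [\beta]$. Both are independent in $\hat{\mathcal{M}}$: for $\hat S$ the group counts coincide with those of $S$, so this is exactly the hypothesis $S \in \mathcal{M}_\beta$; for $\hat T$ we have $|\hat T \cap \hat U_c| = \beta\,|T \cap U_c|$ and $\sum_c \max\{\beta|T\cap U_c|, \beta l_c\} = \beta\sum_c \max\{|T\cap U_c|, l_c\} \le \beta\kappa$, which is the hypothesis $T \in \mathcal{M}_{fair}(P,\kappa,\vec{l},\vec{u})$ scaled by $\beta$. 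Both sets have size $\beta\kappa$, and the total constraint forces $rank(\hat{\mathcal{M}}) \le \beta\kappa$, so each is a base.

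The core step is the sequential exchange. For the base $\hat S$ ordered as $((s_1,1),\ldots,(s_{\beta\kappa},1))$ and the base $\hat T$, I would construct an ordering $(\hat e_1,\ldots,\hat e_{\beta\kappa})$ of $\hat T$ such that $\hat F_i := \{(s_1,1),\ldots,(s_i,1)\} \cup \{\hat e_{i+1},\ldots,\hat e_{\beta\kappa}\}$ is a base for every $i$, proving this by forward induction on $i$. Given that $\hat F_{i-1}$ is a base, if $(s_i,1) \in \hat F_{i-1}$ then it lies in the tail (it differs from the earlier $(s_1,1),\ldots,(s_{i-1},1)$ since $S$ has distinct elements), and taking $\hat e_i = (s_i,1)$ leaves $\hat F_i = \hat F_{i-1}$ a base; otherwise $\hat F_{i-1} + (s_i,1)$ contains a unique circuit $C$, and since $\{(s_1,1),\ldots,(s_i,1)\} \subseteq \hat S$ is independent, $C$ cannot be contained in the head, so $C$ meets the tail and deleting any $\hat e_i \in C$ from the tail yields the next base. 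The removed elements, read in order, give the ordering of $\hat T$, hence of $E$ after projecting copies back to $U$; note every such $\hat e_i$ projects into $T$, so $e \in T$ for each $e \in E$.

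Finally I would translate back. Because $\hat T$ uses all $\beta$ copies of each $t \in T$, every $t$ appears exactly $\beta$ times in $E$. For each $i$, independence of $\hat F_i$ gives $|F_i^c| \le \beta u_c$ and $\sum_c \max\{|F_i^c|, \beta l_c\} \le \beta\kappa$, the two required constraints, while $|F_i^x| \le \beta$ is immediate since $\hat U$ contains only $\beta$ copies of $x$. The Lemma then follows from the Claim: $S_i \cup \{e_{i+1}\}$, as a set, has group counts dominated by those of the prefix of $F_i$, placing it in $\mathcal{M}_\beta$. I expect the main obstacle to be the sequential-exchange step — getting the circuit bookkeeping right so that the promoted head $\{(s_1,1),\ldots,(s_i,1)\}$ is never disturbed — together with verifying that the multiset-to-copy-set correspondence is faithful; the counting and the final translation are routine.
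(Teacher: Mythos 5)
Your proposal is correct, but it takes a genuinely different route from the paper's own proof. The paper proves the claim by a direct backward induction on $i$ (from $F_{\beta\kappa}=S$ down to $F_0=E$), exhibiting the next element $e_i\in T$ via a two-case analysis: either some group $c_0$ has count below $l_{c_0}\beta$, in which case any $x\in T\cap U_{c_0}$ occurring fewer than $\beta$ times works (using $|T\cap U_{c_0}|\ge l_{c_0}$), or every group sits at or above its scaled lower bound, in which case a pigeonhole count against $|T\cap U_c|\beta$ produces a usable element of $T$; the two fairness inequalities are then re-verified by hand in each case. You instead lift to the copy-expanded ground set $\hat U=U\times[\beta]$, recognize $\hat S$ and $\hat T=T\times[\beta]$ as bases of the lifted fairness matroid $\mathcal{M}_{fair}(\hat{P},\beta\kappa,\beta\vec{l},\beta\vec{u})$, and run the classical sequential base-exchange argument there, projecting back at the end. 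The trade-off is what each proof relies on: the paper's argument is purely arithmetic on group counts and never uses that the fairness constraint system is a matroid, whereas yours outsources the combinatorial core to standard matroid theory (fundamental circuits and base exchange) and therefore needs the matroid property of the lifted fairness system---available from the result of \cite{el2020fairness} stated in Section \ref{sec:problem_setup}, with $\sum_c \beta l_c\le\beta\kappa$ guaranteed by Lemma \ref{lem:assump_for_SM}. In return, your route is more modular and slightly stronger: each hybrid $\hat F_i$ is a base rather than merely independent, each element of $T$ appears exactly $\beta$ times in $E$ automatically (which is what Lemma \ref{lem:feasible_OPT} ultimately needs), and the argument generalizes verbatim to any matroid whose independence depends only on group counts. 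The bookkeeping you flagged as the main risk does go through: the head $\{(s_1,1),\ldots,(s_i,1)\}$ is always an independent subset of $\hat S$, so the fundamental circuit must meet the tail; head and tail remain disjoint throughout, so $\hat F_i$ consists of $\beta\kappa$ distinct elements of $\hat U$, making the translations $|F_i^c|=|\hat F_i\cap\hat U_c|$ and $|F_i^x|\le\beta$ immediate.
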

    We prove the claim by induction. First, when $i=\beta\kappa$, $F_i=S$. Since $S\in\mathcal{M}_\beta$, the claim holds. Suppose the result in the claim holds for $i$, and we prove the claim for $i-1$. There are two cases.
    \begin{itemize}
        \item Case 1. There exists some group $c_0$ such that $|(S_{i-1},E_{i+1})^{c_0}|\leq l_{c_0} \beta-1$. Since $|T|=\kappa$, $|T\cap U_c|\geq l_c$ for each $c\in[N]$. Therefore, in this case, there exists at least one element $x\in U_{c_0}\cap T$ such that $|(S_{i-1},E_{i+1})^{x}|< \beta$. Then choose $e_i=x$ and $E_i=(x, E_{i+1})$, the results in the claim will be satisfied.
        \item Case 2. For all group $c\in[N]$, $|(S_{i-1},E_{i+1})^{c}|\geq l_c \beta$. Since the sequence $(S_{i-1},E_{i+1})$ is of length $\beta \kappa-1$, we have that $$|(S_{i-1},E_{i+1})|<\beta \kappa\leq|T| \beta.$$ Therefore, there exists at least one group $c_1$ such that $|(S_{i-1},E_{i+1})^{c_1}|< |T\cap U_{c_1}| \beta$. (Otherwise $\sum_{c\in[N]}|(S_{i-1},E_{i+1})^{c}|\geq\sum_{c\in[N]}|T\cap U_c| \beta=\beta \kappa$, which breaks the assumption.) 
        From $|(S_{i-1},E_{i+1})^{c_1}|< |T\cap U_{c_1}| \beta$, we have that there exists at least one element $x\in T\cap U_{c_1}$ such that $$|(S_{i-1},E_{i+1})^{x}|\leq \beta-1.$$ Then we set the $i$-th element in $E$ to be $x$, then $E_{i}=(x,E_{i+1})$. It follows that $|(S_{i-1},E_{i})^{x}|\leq \beta$. For each element $x'\in T/\{x\}$ , $|(S_{i-1},E_{i})^{x'}|=|(S_{i-1},E_{i+1})^{x'}|\leq \beta$. 
        Since $e_i=x\in U_{c_1}$. For group $c\neq c_1$, $|(S_{i-1},E_{i})^{c}|=|(S_{i-1},E_{i+1})^{c}|\leq u_c\beta$ by the assumption that the claim holds for iteration $i$. For group $c_{1}$, $|(S_{i-1},E_{i})^{c_1}|=|(S_{i-1},E_{i})^{c}|+1\leq u_c\beta$. Since for all group  $c\in[N]$, $|(S_{i-1},E_{i+1})^{c}|\geq l_c\beta$, it follows that $|(S_{i-1},E_{i})^{c}|\geq l_c\beta$. Thus
        \begin{align*}
        \sum_{c\in[N]}\max\{|(S_{i-1},E_{i})^c|,l_c\beta\}&=\sum_{c\in[N]}|(S_{i-1},E_{i})^c|\\
        &=|(S_{i-1},E_{i})|= \beta\kappa.
        \end{align*} 
        Thus we prove the claim for iteration $i-1$ under the assumption that the claim holds for $i$. By induction, the claim holds for all $i$. For $i=0$, $(S_{0},E_{0})=E$. From the construction of $E$ we have that  $|E|=\beta\kappa$, and that $|E^o|=\beta$ for all $o\in T$. Since for each group $c$, we have $|S_{i}\cup \{e_{i+1}\}\cap U_c|\leq |(S_{i},E_{i})^c|$. From the result in the claim, we can prove that $S_i\cup \{e_{i+1}\}\in\mathcal{M}_{\beta}$.
    \end{itemize}

\end{proof}

\subsubsection{Proof of Theorem~\ref{thm:greedy}}
\thmgreedy*
\begin{proof}
    Denote the optimal solution of $\max_{S\in\mathcal{M}_{fair}(P,\kappa,\vec{l},\vec{u})}f(S)$ as $OPT$, i.e., $OPT=\arg\max_{S\in\mathcal{M}_{fair}(P,\kappa,\vec{l},\vec{u})}f(S)$.  Since by Lemma \ref{lem:assump_for_SM}, we have $\mathcal{M}_{1/\varepsilon}=\mathcal{M}_{fair}(P,\kappa /\varepsilon,\vec{l}/\varepsilon,\vec{u}/\varepsilon)$ is a matroid of rank $\kappa /\varepsilon$, then the Algorithm \ref{alg:fairness-bi} ends after $\kappa /\varepsilon$ steps and the output solution set satisfies $|S|=\kappa /\varepsilon$. Since $S\in\mathcal{M}_{1/\varepsilon}$, then 
    \begin{align*}
        &|S\cap U_c|\leq u_c/\varepsilon \qquad\forall c\in[N]\\
        &\max_{c\in[N]}\{|S\cap U_c|,l_c/\varepsilon\}\leq \kappa /\varepsilon.
    \end{align*}

    Then it remains to prove that $f(S)\geq (1-\varepsilon)f(OPT)$. From Lemma \ref{lem:feasible_OPT}, we know that there exists a sequence $E$ that contains $1/\varepsilon$ copies of $OPT$ and that at each step $i$, $S_i\cup\{e_{i+1}\}\in\mathcal{M}_{1/\varepsilon}$. Then by the greedy selection strategy, we have 
    \begin{align*}
        f(S_{i+1})-f(S_i)\geq f(S_i\cup\{e_{i+1}\})-f(S_i).
    \end{align*}
    Thus by submodularity, we have 
    \begin{align*}
        f(S_{i+1})-f(S_i)\geq f(S_i\cup\{e_{i+1}\})-f(S_i)\geq\Delta f(S,e_{i+1}).
    \end{align*}
    Summing over all $i$, we would get
    \begin{align*}
        \sum_{i=0}^{\frac{k}{\varepsilon}-1}f(S_{i+1})-f(S_i)\geq\sum_{i=0}^{\frac{k}{\varepsilon}-1}\Delta  f(S,e_{i+1}).
    \end{align*}
    Since the sequence $E$ contains $1/\varepsilon$ copies of each element in $OPT$, then $\sum_{i=0}^{\frac{k}{\varepsilon}-1} \Delta f(S,e_{i+1})=1/\varepsilon\sum_{o\in OPT}\Delta f(S,o)$. Since $ \sum_{i=0}^{\frac{k}{\varepsilon}-1}f(S_{i+1})-f(S_i)=f(S)-f(\emptyset)$ and that $f$ is nonnegative,
    \begin{align*}
        f(S)\geq\sum_{i=0}^{\frac{k}{\varepsilon}-1}\Delta  f(S,e_{i+1})\geq1/\varepsilon\sum_{o\in OPT}\Delta f(S,o)\geq\frac{f(OPT)-f(S)}{\varepsilon }.
    \end{align*}
   Thus we have
    \begin{align*}
        f(S) \geq \frac{1}{1 + \varepsilon} f(OPT) \geq (1 - \varepsilon) f(OPT).
    \end{align*}
\end{proof}

\subsubsection{Proof of Theorem \ref{theorem:threshold-fairness-bi}}
Before we present the proof of the theorem, first we present the proof of the following lemma. Let us denote the solution set after the $i$-th element in \texttt{threshold-fairness-bi} as $S_i$. By Lemma \ref{lem:feasible_OPT}, we know that we can construct a sequence $E=(e_1,e_2,..,e_{\kappa/\varepsilon})$ that contains $1/\varepsilon$ copies of $OPT$ and that $S_i\cup \{e_{i+1}\}\in\mathcal{M}_{1/\varepsilon}$. Then we have the following lemma.
\begin{lemma}
\label{lem:iterative_thres}
    For any $0\leq i<\kappa/\varepsilon$, it follows that
    \begin{align*}
        \Delta f(S_{i}, s_{i+1}) \geq (1-\varepsilon) \Delta f(S_{i}, e_{i+1})-\varepsilon d/\kappa.
    \end{align*}
\end{lemma}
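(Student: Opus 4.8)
The plan is to compare, for each index $i$, the marginal gain of the element $s_{i+1}$ that \threalglong{} actually selects against the marginal gain of the comparison element $e_{i+1}$ supplied by the sequence $E$ of Lemma~\ref{lem:feasible_OPT}, exploiting the defining property of the threshold greedy: an element is accepted exactly when it is feasible and its marginal gain clears the current threshold. Let $w$ be the threshold at which $s_{i+1}$ is added; then by the acceptance rule $\Delta f(S_i, s_{i+1}) \ge w$. The goal is to show that $e_{i+1}$'s marginal gain was forced below roughly $w/(1-\varepsilon)$ by the time the algorithm reached threshold $w$.

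First I would dispose of the easy case $e_{i+1} \in S_i$, where $\Delta f(S_i, e_{i+1}) = 0$ and the claimed inequality holds trivially since its right-hand side is nonpositive. For $e_{i+1} \notin S_i$, the key step is to locate where $e_{i+1}$ was rejected. Since \threalglong{} scans all of $U$ at every threshold and thresholds are processed in decreasing order, $e_{i+1}$ was examined at the threshold $w/(1-\varepsilon)$ immediately preceding $w$ and, because $e_{i+1}\notin S_i$, it was not accepted there. I must rule out that this rejection was due to infeasibility: letting $S'$ denote the partial solution at the moment $e_{i+1}$ was examined, monotone growth of the solution gives $S' \subseteq S_i$, while Lemma~\ref{lem:feasible_OPT} guarantees $S_i \cup \{e_{i+1}\} \in \mathcal{M}_{1/\varepsilon}$; downward-closedness of the matroid then yields $S' \cup \{e_{i+1}\} \in \mathcal{M}_{1/\varepsilon}$, so $e_{i+1}$ was feasible and the rejection must have been because its gain fell short of the threshold, i.e. $\Delta f(S', e_{i+1}) < w/(1-\varepsilon)$. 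Submodularity together with $S' \subseteq S_i$ upgrades this to $\Delta f(S_i, e_{i+1}) < w/(1-\varepsilon)$, and rearranging gives $(1-\varepsilon)\Delta f(S_i, e_{i+1}) < w \le \Delta f(S_i, s_{i+1})$, which is in fact stronger than the stated bound.

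Two boundary situations remain, and handling them cleanly is the main obstacle. When $s_{i+1}$ is added at the very first threshold $w = d$ there is no predecessor threshold, but here I can bound directly $\Delta f(S_i, e_{i+1}) \le f(\{e_{i+1}\}) \le d = w$ by submodularity, which again suffices. The genuinely delicate case is when the algorithm has already terminated before reaching index $i+1$, so that $s_{i+1}$ should be read as a dummy selection with $\Delta f(S_i, s_{i+1}) = 0$ and $S_i$ is the final solution; this is exactly where the correction term $-\varepsilon d/\kappa$ is needed. Here I would use that termination occurs only once the threshold would drop to or below $\varepsilon d/\kappa$, so the last processed threshold $w_{\mathrm{last}}$ satisfies $(1-\varepsilon)w_{\mathrm{last}} \le \varepsilon d/\kappa$; since $e_{i+1}$ is feasible (again via Lemma~\ref{lem:feasible_OPT}) yet was not accepted at $w_{\mathrm{last}}$, its final marginal gain obeys $\Delta f(S_i, e_{i+1}) < w_{\mathrm{last}}$, whence $(1-\varepsilon)\Delta f(S_i, e_{i+1}) < \varepsilon d/\kappa$ and therefore $\Delta f(S_i, s_{i+1}) = 0 \ge (1-\varepsilon)\Delta f(S_i, e_{i+1}) - \varepsilon d/\kappa$. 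The main care points throughout are the bookkeeping that establishes $S' \subseteq S_i$ from monotone growth of the solution and the decreasing order of thresholds, and the repeated use of Lemma~\ref{lem:feasible_OPT} together with matroid downward-closure to certify that no rejection of $e_{i+1}$ is ever caused by infeasibility rather than by an insufficient marginal gain.
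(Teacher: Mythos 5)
Your proposal is correct and follows essentially the same route as the paper's proof: a case analysis on whether $s_{i+1}$ is a real selection or a dummy (and whether $e_{i+1}\in S_i$), comparing the acceptance threshold for $s_{i+1}$ against the threshold at which $e_{i+1}$ must have been rejected, with Lemma~\ref{lem:feasible_OPT} and submodularity transferring the bound to $S_i$. In fact, your write-up is more careful than the paper's terse argument, since you explicitly justify via matroid downward-closure that $e_{i+1}$'s rejection cannot be due to infeasibility, and you handle the first-threshold edge case $w=d$ that the paper leaves implicit.
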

\begin{proof}
        First, we consider the case if $s_{i+1}$ is added to the solution set and is not a dummy variable, it follows that $\Delta f(S_{i}, s_{i+1}) \geq \tau$. Since $S_i\cup\{e_{i+1}\}\in\mathcal{M}_{1/\varepsilon}$, then if $e_{i+1}\notin S_i$, by submodularity we have $\Delta f(S_{i}, e_{i+1})\leq\tau/(1-\varepsilon) $. If $e_{i+1}\in S_i$, then $\Delta f(S_{i}, e_{i+1})=0\leq\tau/(1-\varepsilon) $. 
        Next, we consider the case if $s_{i+1}$ is a dummy variable, then $\Delta f(S_i,s_{i+1})=0$. If $e_{i+1}\in S_i$, then $\Delta f(S_{i}, e_{i+1})=0$ and the above inequality in the lemma holds. If $e_{i+1}\notin S_i$, since $S_{\kappa_1}\cup\{e_{i+1}\}\in\mathcal{M}_{1/\varepsilon}$, then $\Delta f(S_{i}, e_{i+1})\leq\varepsilon d/\kappa$. Therefore, we have that
   \begin{align*}
        \Delta f(S_{i}, s_{i+1}) \geq (1-\varepsilon) \Delta f(S_{i}, e_{i+1})-\varepsilon d/\kappa.
    \end{align*}
\end{proof}

Next, we present the proof of Theorem \ref{theorem:threshold-fairness-bi}. 
\thmthresholdfairnessbi*
\begin{proof}
   First, notice that the Algorithm \ref{alg:thres-fairness-bi} ends in at most $(1/ \varepsilon) \log(\kappa /\varepsilon)$ number of iterations. Therefore, there are at most $n/\varepsilon\log(\kappa/\varepsilon)$ number of queries to $f$. Next, we prove the bicriteria approximation ratio of \texttt{threshold-fairness-bi}. From the description of Algorithm \ref{alg:thres-fairness-bi}, we have that the output solution set $S\in\mathcal{M}_{1/\varepsilon}$, then 
   \begin{align*}
        &|S\cap U_c|\leq u_c/\varepsilon \qquad\forall c\in[N]\\
        &\max_{c\in[N]}\{|S\cap U_c|,l_c/\varepsilon\}\leq \kappa /\varepsilon.
    \end{align*}
    It remains to prove that $f(S)\geq(1-2\varepsilon)f(OPT)$ where $OPT$ is defined as the optimal solution of FSM, i.e., $OPT=\arg\max_{S\in\mathcal{M}_{fair}(P,\kappa,\vec{l},\vec{u})}f(S)$.  For simplicity, we assume the returned solution has size $|S| = k_1$. As discussed in the proof of Theorem \ref{thm:greedy}, Rank$(\mathcal{M}_{1/\varepsilon}) = \kappa /\varepsilon$. Here we denote the solution set as $S=(s_1,s_2,...,s_{\kappa/\varepsilon})$, and we define $S_i$ as $S_i=(s_1,...,s_i)$. Here $s_i$ is the $i$-th element added to the solution set. 
   In the case when the threshold $\tau$ drops below $\varepsilon d / \kappa$ at the termination and $\kappa_1\leq\kappa/\varepsilon$, we can add dummy elements to $S$ such that $|S| = \kappa / \varepsilon$. By Lemma \ref{lem:iterative_thres}, we have that there is a sequence $E=(e_1,e_2,...,e_{k/\varepsilon})$ that contains $1/\varepsilon$ copies of $OPT$ and 
    
    \begin{align*}
        \Delta f(S_{i}, s_{i+1}) \geq (1-\varepsilon) \Delta f(S_{i}, e_{i+1})-\varepsilon d/\kappa.
    \end{align*}
    Thus by submodularity, we have 
    \begin{align*}
        f(S_{i+1})-f(S_i)\geq (1-\varepsilon) \{f(S_i\cup\{e_{i+1}\})-f(S_i)\}-\varepsilon d/\kappa \geq (1-\varepsilon) \Delta f(S,e_{i+1})-\varepsilon d/\kappa.
    \end{align*}
    Summing over all $i$, we would get
    \begin{align*}
        \sum_{i=0}^{k/\varepsilon-1}f(S_{i+1})-f(S_i) \geq (1-\varepsilon) \sum_{i=0}^{\kappa/\varepsilon-1}\Delta  f(S,e_{i+1})-d.
    \end{align*}
    Since the sequence $E$ contains $1/\varepsilon$ copies of each element in $OPT$, then $\sum_{i=0}^{\frac{k}{\varepsilon}-1} \Delta f(S,e_{i+1})=1/\varepsilon\sum_{o\in OPT}\Delta f(S,o)$. Since $ \sum_{i=0}^{\kappa/\varepsilon-1}f(S_{i+1})-f(S_i)=f(S)-f(\emptyset)$ and that $f$ is nonnegative,
    \begin{align*}
        f(S) &\geq (1-\varepsilon) \sum_{i=0}^{\kappa/\varepsilon-1} \Delta f(S,e_{i+1})-d\\
        &\geq \frac{(1-\varepsilon)}{\varepsilon} \sum_{o\in OPT} \Delta f(S,o) - d\\
        &\geq \frac{1}{\varepsilon} \{(1-\varepsilon)\{f(OPT)-f(S)\} - \varepsilon f(OPT)\}\\
        &\geq \frac{1}{\varepsilon} \{(1-2\varepsilon)f(OPT)-(1-\varepsilon)f(S)\} .
    \end{align*}
   By re-arranging the above equation, we have that
    \begin{align*}
        f(S) \geq (1 - 2\varepsilon)  f(OPT).
    \end{align*}
\end{proof}

\begin{algorithm}[t]
    \caption{\texttt{greedy-fairness-bi}}\label{alg:fairness-bi}
    \begin{algorithmic}[1]
        \State \textbf{Input: }$\varepsilon$, fairness matroid $\mathcal{M}_{fair}(P,\kappa,\vec{l},\vec{u})$
        \State \textbf{Output: }$S\in U$
        \State $S\gets\emptyset$
        \State Denote $\mathcal{M}_{fair}(P,\kappa /\varepsilon,\vec{l}/\varepsilon,\vec{u}/\varepsilon)$ as $\mathcal{M}_{1/\varepsilon}$.
        \While{$\exists x$ s.t. $S\cup\{x\}\in\mathcal{M}_{1/\varepsilon}$}
        \State $V \gets\{x\in U|S \cup \{x\} \in \mathcal{M}_{1/\varepsilon}\} $
        \State $u \gets\arg\max_{x\in V} \Delta f(S,x)$
        \State $S \gets S \cup \{u\}$
        \EndWhile
        \textbf{return }$S$
    \end{algorithmic}
\end{algorithm}

\begin{algorithm}[t]
    \caption{\texttt{threshold-fairness-bi}}\label{alg:thres-fairness-bi}
    \begin{algorithmic}[1]
        \State \textbf{Input: }$\varepsilon$, fairness matroid $\mathcal{M}_{fair}(P,\kappa,\vec{l},\vec{u})$
        \State \textbf{Output: }$S\in U$
        \State $S\gets\emptyset$
        \State Denote $\mathcal{M}_{fair}(P,\kappa /\varepsilon,\vec{l}/\varepsilon,\vec{u}/\varepsilon)$ as $\mathcal{M}_{1/\varepsilon}$
        \State $d \gets \max_{\{x\} \in M_{1/\varepsilon}} f(\{x\})$
        \For{$\tau=d; \tau \geq  \varepsilon d/k; \tau\gets \tau(1-\varepsilon)$}\label{line:for_loop_starts_thres}
        \For{$x \in U$}
        \If{$S \cup \{x\} \in \mathcal{M}_{1/\varepsilon}$ and $\Delta f(S, x) \geq \tau$}
        \State $S\gets S\cup \{x\}$
        \EndIf
        \If{$|S|= \kappa /\varepsilon$}
        \State \textbf{return }$S$
        \EndIf
        \EndFor
        \EndFor\label{line:for_loop_ends_thres}
        \textbf{return }$S$
    \end{algorithmic}
\end{algorithm}

\subsection{Appendix for Section \ref{sec:continuous}}
\label{appdx:continuous}
In this section, we provide the omitted content from Section \ref{sec:continuous} of the main paper. Specifically, we present the proof of Lemma \ref{lem:Continuous_decreasing_threshold}, which offers the theoretical guarantee for the subroutine algorithm \contisublong of the continuous algorithm \contialg. The statement of the Lemma is as follows.
\begin{lemma}
     \label{lem:Continuous_decreasing_threshold}
     During each call of \contisublong, the output coordinate set $B$ satisfies that
     \begin{align*}
         \vect{F}(\vect{x}+\varepsilon\vect{1}_B)-\vect{F}(\vect{x})&\geq\varepsilon\{\ln(1/\varepsilon)+1\}((1-6\varepsilon)f(OPT)-\vect{F}(\vect{x}+\varepsilon\vect{1}_B)).
     \end{align*}
     
 \end{lemma}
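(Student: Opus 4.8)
The plan is to exploit two structural facts about the multilinear extension. First, because $\vect{F}$ is linear in each coordinate, the partial derivative $\partial_u\vect{F}(\vect{z})$ does not depend on $z_u$, and consequently adding an element $u$ to the current coordinate set $B$ changes $\vect{F}$ by \emph{exactly} $\varepsilon\,\partial_u\vect{F}(\vect{x}+\varepsilon\vect{1}_B)$; no concavity slack is incurred. Second, $\partial_u\vect{F}(\vect{z})=\mE[\Delta f(S(\vect{z}),u)]$ is precisely the quantity $\contisub$ estimates. So, writing $\vect{y}:=\vect{x}+\varepsilon\vect{1}_B$ and ordering $B=(b_1,\dots,b_m)$ by insertion with prefixes $B_i$, the total gain is \emph{exactly} $\vect{F}(\vect{y})-\vect{F}(\vect{x})=\varepsilon\sum_i \partial_{b_{i+1}}\vect{F}(\vect{x}+\varepsilon\vect{1}_{B_i})$. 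My first step is a concentration guarantee: by a Chernoff bound on the $\frac{3\kappa}{\varepsilon^2}\log\frac{4n^4}{\varepsilon^3}$ samples together with a union bound over all estimates across the run, with probability $1-n^{-2}$ every estimate $\hat X$ is within an additive $O(\varepsilon d/\kappa)$ of the true marginal, so the threshold test $\hat X\ge w$ can be read as a statement about true marginals.

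Next I set up the exchange argument using Lemma~\ref{lem:feasible_OPT} with $\beta=\ln(1/\varepsilon)+1$ (applied at the appropriate integer level of the extension), padding $B$ with dummies to length $\beta\kappa$. This yields a sequence $E=(e_1,\dots,e_{\beta\kappa})$ containing $\beta$ copies of each element of $OPT$ and satisfying $B_i\cup\{e_{i+1}\}\in\mathcal{M}_\beta$ for every prefix; by downward closure of the matroid, $e_{i+1}$ is a feasible candidate at every threshold round during which $B=B_i$ or a subset thereof. Hence each $e_{i+1}$ is actually examined by the inner loop and, if not selected, was rejected at the previous threshold $w/(1-\varepsilon)$, so its estimate there (at a prefix $B_j\subseteq B_i$) was below $w/(1-\varepsilon)$. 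Since marginals only decrease as $B$ grows, and $b_{i+1}$ was added at threshold $w$ with estimate $\ge w$, the concentration step gives the core per-position inequality
\[
\partial_{b_{i+1}}\vect{F}(\vect{x}+\varepsilon\vect{1}_{B_i}) \ \ge\ (1-\varepsilon)\,\partial_{e_{i+1}}\vect{F}(\vect{x}+\varepsilon\vect{1}_{B_i}) - O(\varepsilon d/\kappa).
\]

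I then aggregate. Summing the exact gain identity and applying the per-position inequality, then using submodularity to pass from the prefix $B_i$ to the endpoint (marginals at $\vect{x}+\varepsilon\vect{1}_{B_i}$ dominate those at $\vect{y}$), gives $\vect{F}(\vect{y})-\vect{F}(\vect{x})\ge \varepsilon(1-\varepsilon)\sum_i\partial_{e_{i+1}}\vect{F}(\vect{y}) - O(\varepsilon^2\beta d)$. Because $E$ carries $\beta$ copies of $OPT$ and, by submodularity and monotonicity of $f$,
\[
\sum_{o\in OPT}\partial_{o}\vect{F}(\vect{y}) \ \ge\ \vect{F}(\vect{y}\vee\vect{1}_{OPT})-\vect{F}(\vect{y}) \ \ge\ f(OPT)-\vect{F}(\vect{y}),
\]
the right side becomes $\varepsilon(1-\varepsilon)\beta\big(f(OPT)-\vect{F}(\vect{y})\big)$ minus accumulated error. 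Using $d\le f(OPT)$, all additive errors (the $O(\varepsilon d/\kappa)$ per position over $\beta\kappa$ positions, the $(1-\varepsilon)$ threshold factor, the final-threshold floor $\varepsilon d/\kappa$, and the concentration slack) scale like $\varepsilon\beta\cdot O(\varepsilon)f(OPT)$ and are absorbed by replacing $f(OPT)$ with $(1-6\varepsilon)f(OPT)$, after noting $-(1-\varepsilon)\vect{F}(\vect{y})\ge-\vect{F}(\vect{y})$; this delivers exactly the claimed bound with multiplier $\varepsilon(\ln(1/\varepsilon)+1)$.

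The main obstacle I anticipate is the bookkeeping that couples the matroid exchange with the decreasing-threshold dynamics, rather than the concentration or the submodularity estimates, which are routine. In particular I must carefully handle the positions where $e_{i+1}$ is a duplicated $OPT$-copy already contained in $B_i$ (so no rejection bound is available and the per-position inequality must be argued by charging or by a slightly different accounting), the first threshold round where no previous rejection exists, and $OPT$-elements whose marginal never exceeds the stopping threshold $\varepsilon d/\kappa$; each of these must be shown to contribute only to the $O(\varepsilon)$ slack. A secondary point is that Lemma~\ref{lem:feasible_OPT} is stated for integer $\beta$, so I would invoke it at the integer extension matching the feasibility test $B\cup\{u\}\in\mathcal{M}_{\ln(1/\varepsilon)+1}$ and verify that every prefix of $E$ remains certified feasible, which is precisely what causes the factor $\ln(1/\varepsilon)+1$ (rather than a constant) to appear.
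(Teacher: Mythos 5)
Your plan reproduces the paper's own argument almost step for step: the same exchange sequence $E$ from Lemma~\ref{lem:feasible_OPT} with $\beta=\ln(1/\varepsilon)+1$, the same relative-plus-additive Chernoff bound with a union bound over all estimates, the same per-position comparison between the estimate that admitted $b_{i+1}$ at threshold $w$ and the rejection of $e_{i+1}$ at threshold $w/(1-\varepsilon)$ (or at the stopping threshold), and the same aggregation via telescoping, decreasing marginals, and $\sum_{o\in OPT}\mE \Delta f(S(\vect{y}),o)\ge f(OPT)-\vect{F}(\vect{y})$. In two places you are actually more careful than the paper: you note that the rejection estimate lives at an earlier prefix $B_j\subseteq B_i$, so submodularity is needed to transport it to $B_i$, and you explicitly flag the positions where $e_{i+1}$ is a copy of an $OPT$-element already contained in $B_i$. (One small inaccuracy: with the stated sample size the guarantee is relative $(1\pm\varepsilon)$ error \emph{plus} additive $\varepsilon d/\kappa$, not purely additive $O(\varepsilon d/\kappa)$; your final accounting absorbs this, so it is minor.)

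However, the duplicated-copy case you deferred is not bookkeeping that can be ``charged to the $O(\varepsilon)$ slack'' --- it is a genuine gap, and it is the same gap the paper's proof silently steps over when it asserts $\hat{X}(B_{i-1},e_i)\le w/(1-\varepsilon)$ for \emph{every} $i$. In the discrete setting a copy $e_{i+1}\in B_i$ is harmless because $\Delta f(B_i,e_{i+1})=0$; in the continuous setting it is not, because that element's coordinate has only been raised by $\varepsilon$, so $\mE \Delta f(S(\vect{x}+\varepsilon\vect{1}_{B_i}),e_{i+1})$ can still be of order $f(OPT)/\kappa$ (indeed up to $(1-\varepsilon)f(OPT)$ when $\kappa=1$), while no rejection estimate for it exists and the greedy has nothing of comparable value left to add. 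Concretely, take one color class, $\kappa=1$, $f(S)=\min\{|S\cap\{o\}|,1\}$, and $\varepsilon=e^{-(\beta-1)}$ with $\beta=\ln(1/\varepsilon)+1=4$: starting from $\vect{x}=\vect{0}$, \contisub returns $B=\{o\}$, so the left side of the claimed inequality is $\vect{F}(\varepsilon\vect{1}_B)-\vect{F}(\vect{0})=\varepsilon$, while the right side is $\varepsilon\beta\bigl((1-6\varepsilon)f(OPT)-\varepsilon\bigr)=\varepsilon\beta(1-7\varepsilon)>2\varepsilon$. Here $E=(o,o,\dots,o)$ and every position $i\ge 2$ is exactly the case you deferred; each such position loses order $f(OPT)/\kappa$, not $O(\varepsilon)f(OPT)/\kappa$, so the per-position inequality --- and with it the stated bound --- fails precisely there. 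Closing this requires changing the accounting itself (e.g., measuring progress against the residual coordinates $1-x_o$ of the $OPT$-elements, in the spirit of measured continuous greedy), not a charging argument within the present scheme.
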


 \begin{proof}
     For notation simplicity, we denote the rank of the matroid $\mathcal{M}_{\ln(\frac{1}{\varepsilon})+1}$ as $m$, i.e., $m:=(\ln(\frac{1}{\varepsilon})+1)\kappa$. Here we denote the output solution set as  $B=\{b_1,b_2,...,b_{m}\}$ where $b_i$ is the $i$-th element that is added to set $B$. Here if $|B|<m$, then for any $i>|B|$, $b_i$ is defined as a dummy variable. In addition, we define $B_i=\{b_1,b_2,...,b_i\}$. Since $\mathcal{M}_{\ln(\frac{1}{\varepsilon})+1}$ is an $\ln(\frac{1}{\varepsilon})+1$-extension of the original fairness matroid, then by Lemma \ref{lem:feasible_OPT}, there exists a sequence $E=(e_1,e_2,...,e_m)$ such that $E$ contains $\ln(1/\varepsilon)+1$ copies of the optimal solution $OPT=\{o_1,o_2,...,o_\kappa\}$ such that $B_{i-1}\cup \{e_i\}\in\mathcal{M}_{\ln(1/\varepsilon)+1}$ for each $i\in[m]$. 

     Notice that by Lemma \ref{lem:chernoff}, we have that with probability at least $1-\frac{\varepsilon^3}{2n^4}$, for any fixed $\vect{x}+\vect{1}_B$ and fixed element $u$, the empirical mean $\hat{X}(B_i,u)$, which is the average over $\frac{3\kappa}{\varepsilon^2}\log{\frac{4n^4}{\varepsilon^3}}$ samples of the random variable $X={\Delta f}(S(\vect{x}+\varepsilon\vect{1}_{B_i}),u)$ satisfies that
     \begin{align}
     \label{ineq:chernoff}
         |\hat{X}(B_i,u)-\mE \Delta f(S(\vect{x}+\varepsilon\vect{1}_{B_i}),u)|\leq \frac{\varepsilon}{\kappa}f(OPT)+\varepsilon\mE \Delta f(S(\vect{x}+\varepsilon\vect{1}_{B_i}),u).
     \end{align}
     Since during the execution of \contialg, there are at most $\frac{n}{\varepsilon^2}\log(\kappa/\varepsilon)$ such estimations, by applying the union bound, we have that with probability at least $1-\frac{1}{2n^2}$, the inequality (\ref{ineq:chernoff}) holds for each $\vect{x}$, $B$ and 
  $u$ during \contialg.
 From the description in Algorithm \ref{alg:CCTG_subroutine}, we can see that $\hat{X}(B_{i-1},b_i)\geq w$. For the element $e_i$, we have that $\hat{X}(B_{i-1},e_i)\leq \frac{w}{1-\varepsilon}$ or at the last iteration, we have that $\hat{X}(B_{i-1},e_i)\leq \frac{\varepsilon d}{\kappa}$. Therefore, we have that $\hat{X}(B_{i-1},b_i)\geq (1-\varepsilon)\hat{X}(B_{i-1},e_i)-\frac{\varepsilon d}{\kappa}$. Since $OPT=\max_{S\in\mathcal{M}_{fair}(P,\kappa,\vec{p}\kappa,\vec{q}\kappa)}f(S)\geq d$, it then follows that
\begin{align*}
    (1+\varepsilon)\mE \Delta f(S(\vect{x}+\varepsilon\vect{1}_{B_{i-1}}),b_i)\geq(1-\varepsilon)^2\mE \Delta f(S(\vect{x}+\varepsilon\vect{1}_{B_{i-1}}),e_i)-\frac{3\varepsilon}{\kappa}f(OPT).
\end{align*}
By rearranging the above inequality and simple calculations, we have 

     \begin{align*}
    \mE \Delta f(S(\vect{x}+\varepsilon\vect{1}_{B_{i-1}}),b_i)\geq(1-3\varepsilon)\mE \Delta f(S(\vect{x}+\varepsilon\vect{1}_{B_{i-1}}),e_i)-\frac{3\varepsilon}{\kappa}f(OPT).
\end{align*}
     By the construction of set $B$, we would get
     \begin{align*}
    \vect{F}(\vect{x}+\varepsilon\vect{1}_B)-\vect{F}(\vect{x})&=\sum_{i=1}^m\vect{F}(\vect{x}+\varepsilon\vect{1}_{B_i})-\vect{F}(\vect{x}+\varepsilon\vect{1}_{B_{i-1}})\\
    &=\sum_{i=1}^m \varepsilon\cdot\frac{\partial \vect{F}}{\partial b_i}\big|_{x=\vect{x}+\vect{1}_{B_{i-1}}}\\
    &\geq \varepsilon\sum_{i=1}^m\mE \Delta f(S(\vect{x}+\varepsilon\vect{1}_{B_{i-1}}),b_i)\\
    &\geq\varepsilon\sum_{i=1}^m (1-3\varepsilon)\mE \Delta f(S(\vect{x}+\varepsilon\vect{1}_{B_{i-1}}),e_i)-\frac{3\varepsilon f(OPT)}{\kappa}\\
    &=\varepsilon(1-3\varepsilon)\sum_{i=1}^m \mE \Delta f(S(\vect{x}+\varepsilon\vect{1}_{B_{i-1}}),e_i)\\
    &\qquad-3\varepsilon^2(\ln(\frac{1}{\varepsilon})+1) f(OPT)\\
    &\geq\varepsilon(1-3\varepsilon)\sum_{i=1}^m \mE \Delta f(S(\vect{x}+\varepsilon\vect{1}_{B}),e_i)\\
    &\qquad-3\varepsilon^2(\ln(\frac{1}{\varepsilon})+1) f(OPT),
\end{align*}
where the last inequality results from the submodularity of $f$. From Lemma \ref{lem:feasible_OPT}, we have that
\begin{align*}
    \vect{F}(\vect{x}+\varepsilon\vect{1}_B)-\vect{F}(\vect{x})
    &\geq\varepsilon(1-3\varepsilon)(\ln(\frac{1}{\varepsilon})+1)\sum_{i=1}^\kappa \mE \Delta f(S(\vect{x}+\varepsilon\vect{1}_{B}),o_i)\\
    &\qquad-3\varepsilon^2(\ln(\frac{1}{\varepsilon})+1) f(OPT)\\
    &\geq\varepsilon(1-3\varepsilon)(\ln(\frac{1}{\varepsilon})+1)\{f(OPT)-\vect{F}(\vect{x}+\varepsilon\vect{1}_B)\}\\
    &\qquad-3\varepsilon^2(\ln(\frac{1}{\varepsilon})+1) f(OPT)\\
    &\geq\varepsilon\{\ln(1/\varepsilon)+1\}((1-6\varepsilon)f(OPT)-\vect{F}(\vect{x}+\varepsilon\vect{1}_B)).
\end{align*}
 \end{proof}

\subsubsection{Proof of Theorem~\ref{thm:continuous}}
\thmcontinuous*
\begin{proof}
First of all, from the description of the subroutine algorithm \contisub in Algorithm \ref{alg:CCTG_subroutine}, we can see that there are at most $\log(\kappa/\varepsilon)/\varepsilon$ number of iterations in the outer for loop. Therefore, the subroutine algorithm \contisub takes at most $O(\frac{n\kappa\ln (n/\varepsilon)\ln(\kappa/\varepsilon)}{\varepsilon^3})$. Since there are at most $\frac{1}{\varepsilon}$ calls to \contisub, we can prove the sample complexity. 

Next, we prove the bicriteria approximation ratio. By Definition \ref{def:bicri-SM_conti}, it is equivalent to prove that $\vect{x}\in \mathcal{P}(\mathcal{M}_{\ln(1/\varepsilon)+1}) $ and
 $\vect{F}(\vect{x})\geq (1-7\varepsilon)f(OPT)$. Denote $B^{(t)}$ to be the output set of the $t$-th call to the subroutine algorithm \contisub.  Then it follows that the output solution set $\vect{x}$ of \contialg can be denoted as $\vect{x}=\sum_{t=1}^{1/\varepsilon}\varepsilon\vect{1}_{B^{(t)}}$. Since $B^{(t)}\in\mathcal{M}_{\ln(1/\varepsilon)+1}$, we have that $\vect{1}_{B^{(t)}}\in \mathcal{P}(\mathcal{M}_{\ln(1/\varepsilon)+1})$. 
By the fact that $\mathcal{P}(\mathcal{M}_{\ln(1/\varepsilon)+1})$ is convex, we have that $\vect{x}\in\mathcal{P}(\mathcal{M}_{\ln(1/\varepsilon)+1})$. Denote the fractional solution $\vect{x}$ after $t$-th step as $\vect{x}_t$, then by Lemma \ref{lem:Continuous_decreasing_threshold}, we have
    \begin{align*}
        \vect{F}(\vect{x}_{t+1})-\vect{F}(\vect{x}_{t})&\geq\varepsilon\{\ln(1/\varepsilon)+1\}((1-6\varepsilon)f(OPT)-\vect{F}(\vect{x}_{t+1})).
    \end{align*} 
    For notation simplicity, we define $L=\varepsilon\{\ln(1/\varepsilon)+1\}$. It then follows that
    \begin{align*}
        \vect{F}(\vect{x}_{t+1})&\geq\frac{\vect{F}(\vect{x}_{t})+L(1-6\varepsilon)f(OPT)}{1+L}
    \end{align*}
     Since there are $1/\varepsilon$ iterations in \contialg, the output $\vect{x}$ satisfies that $\vect{x}=\vect{x}_{1/\varepsilon}$. By applying induction to the above inequality, we would get
    \begin{align*}
        \vect{F}(\vect{x}_{1/\varepsilon})
        &\geq(1-(1+L)^{-1/\varepsilon})\{(1-6\varepsilon)f(OPT)\}\\
        &\geq(1-e^{\frac{1}{\varepsilon}(\frac{L^2}{2}-L)})\{(1-6\varepsilon)f(OPT)\}\\
        &\geq(1-\varepsilon)(1-6\varepsilon)f(OPT)\\
        &\geq (1-7\varepsilon)f(OPT).
    \end{align*}
\end{proof}
Observe that compared to \greedyalg and \threalglong, our proposed algorithm \contialg demonstrates an enhanced approximation ratio for the cardinality of the output solution set, improving from $O(1/\varepsilon)$ to $O(\ln(1/\varepsilon))$. This improvement is achieved while maintaining the same order of function value violation, specifically $f(S) \geq (1 - O(\varepsilon)) f(OPT)$. However, this enhancement requires an increased number of queries.  This suggests the potential for attaining comparable approximation ratios for the submodular cover problem under specific types of matroid-type constraints. 
\begin{corollary}\label{coro:combine_convert_with_conti}
    Using the Algorithm \ref{alg:CCTG_subroutine} as the subroutine for the converting algorithm in Algorithm \ref{alg:conver}, we obtain an algorithm that achieves an approximation ratio of $((1+\alpha)\ln(\frac{1}{\varepsilon})+1,1-10\varepsilon)$ in at most $O\left(\frac{n|OPT|\log^2\frac{n}{\varepsilon}}{\varepsilon^4\alpha}\right)$ with high probability.
\end{corollary}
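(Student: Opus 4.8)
The plan is to derive the corollary by composing the two continuous results already in hand: the bicriteria guarantee for the continuous FSM algorithm \contialg{} (Theorem~\ref{thm:continuous}) and the continuous conversion guarantee (Theorem~\ref{thm:conv_continuous}). Concretely, I would instantiate the $(\gamma,\beta)$-bicriteria FSM subroutine used by \convc{} with \contialg{}, so that $\gamma = 1-7\varepsilon$ and $\beta = \ln(1/\varepsilon)+1$, and then read off the resulting FSC guarantee from Theorem~\ref{thm:conv_continuous}. Substituting $\beta = \ln(1/\varepsilon)+1$ into the cardinality factor $(1+\alpha)\beta$ immediately yields the claimed $(1+\alpha)(\ln(1/\varepsilon)+1)$ blow-up over $|OPT|$.

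The substantive step is to simplify the function-value factor $\frac{(1-\frac{\varepsilon}{2})\gamma - \frac{\varepsilon}{3}}{1 + \frac{\varepsilon}{2} + \frac{\varepsilon}{3\gamma}}$ at $\gamma = 1-7\varepsilon$ and show it is at least $1 - 10\varepsilon$ for all sufficiently small $\varepsilon$. I would lower-bound the numerator by expanding $(1-\frac{\varepsilon}{2})(1-7\varepsilon) - \frac{\varepsilon}{3} = 1 - \frac{47}{6}\varepsilon + \frac{7}{2}\varepsilon^2 \geq 1 - \frac{47}{6}\varepsilon$, keeping the favorable quadratic term, and upper-bound the denominator $D = 1 + \frac{\varepsilon}{2} + \frac{\varepsilon}{3(1-7\varepsilon)}$ using $\frac{1}{1-7\varepsilon} \leq 1 + 14\varepsilon$ (valid for $\varepsilon \leq \tfrac{1}{14}$) to get $D \leq 1 + \tfrac{5}{6}\varepsilon + O(\varepsilon^2) \leq 1 + \varepsilon$. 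Combining these via $\frac{1 - (47/6)\varepsilon}{1+\varepsilon} \geq (1-\tfrac{47}{6}\varepsilon)(1-\varepsilon) \geq 1 - \tfrac{53}{6}\varepsilon \geq 1 - 10\varepsilon$ closes the bound with room to spare (the true constant is near $8.8$).

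Next I would handle running time and success probability. For the query complexity I would sum the cost $O\!\left(\frac{n\kappa}{\varepsilon^4}\ln^2(n/\varepsilon)\right)$ of running \contialg{} at each guess $\kappa = (1+\alpha)^i$ over the $O(\log_{1+\alpha}|OPT|)$ guesses; since the $\kappa$'s form a geometric progression, $\sum_i (1+\alpha)^i = O\!\big(\tfrac{(1+\alpha)|OPT|}{\alpha}\big)$, giving a total of $O\!\left(\frac{n|OPT|\log^2(n/\varepsilon)}{\varepsilon^4\alpha}\right)$, and I would verify that the additive estimation cost $\frac{n\log_{1+\alpha}|OPT|}{\varepsilon^2}\log\frac{n}{\delta}$ from Theorem~\ref{thm:conv_continuous} is dominated by this term. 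For the probability, Theorem~\ref{thm:conv_continuous} requires each FSM subroutine call to succeed with probability $1-\delta/n$; since Theorem~\ref{thm:continuous} guarantees success with probability $1 - 1/n^2$, choosing $\delta = 1/n$ meets this requirement, and the conversion theorem then returns overall success probability $1-\delta = 1 - 1/n$, i.e. with high probability.

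The main obstacle I anticipate is the bookkeeping in the function-value simplification: one must track the signs of the $O(\varepsilon^2)$ remainders carefully (retaining the numerator's positive quadratic term while safely upper-bounding the denominator's) and pin down the exact threshold on $\varepsilon$ below which the clean $1 - 10\varepsilon$ bound holds. A secondary point worth stating explicitly is that the primitives being composed are the continuous ones—\contialg{} as the subroutine and \convc{} as the converter—since the discrete converter \conv{} cannot test $\vect{F}(\vect{x}) \geq \gamma\tau$ directly and therefore is not the object at play here; once this is fixed, the remaining arguments are routine substitutions into the two established theorems.
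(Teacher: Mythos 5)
Your proposal is correct and follows essentially the same route as the paper: instantiate Theorem \ref{thm:conv_continuous} with the guarantees $\gamma = 1-7\varepsilon$, $\beta = \ln(1/\varepsilon)+1$ from Theorem \ref{thm:continuous}, and obtain the query complexity by summing the geometric series of per-guess costs $O\left(\frac{n(1+\alpha)^i \log^2(n/\varepsilon)}{\varepsilon^4}\right)$ plus the dominated estimation term. You in fact supply details the paper leaves implicit—the arithmetic showing $\frac{(1-\varepsilon/2)(1-7\varepsilon)-\varepsilon/3}{1+\varepsilon/2+\frac{\varepsilon}{3(1-7\varepsilon)}} \geq 1-10\varepsilon$, the choice $\delta = 1/n$ to meet the per-call success requirement, and the observation that the converter in play must be \convc (Algorithm \ref{alg:conv_c}) rather than \conv, the corollary's reference to Algorithm \ref{alg:conver} notwithstanding.
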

\begin{proof}
    The result of the approximation ratio can be obtained by combining Theorem \ref{thm:conv_continuous} and Theorem \ref{thm:continuous} together. Here we provide proof of the sample complexity. Notice that for each guesses of OPT of cardinality $\kappa_g$, the algorithm \contialg that runs with the input $\mathcal{M}_{fair}(P,\kappa_g,\vec{p}\kappa_g,\vec{u}\kappa_g)$ uses at most $O(\frac{n\kappa_g\log^2(\frac{n}{\varepsilon})}{\varepsilon^4})$. From the result in Theorem \ref{thm:conv_continuous} and Theorem \ref{thm:continuous}, the total number of sample complexity would be 
    \begin{align*}
        \sum_{i=1}^{\frac{\log(|OPT|)}{\log(\alpha+1)}}O\left(\frac{n(1+\alpha)^{i}\log^2(\frac{n}{\varepsilon})}{\varepsilon^4}\right)+O\left(\frac{n\log_{1+\alpha}|OPT|}{\epsilon^2}\log n\right)=O\left(\frac{n|OPT|\log^2\frac{n}{\varepsilon}}{\varepsilon^4\alpha}\right).
    \end{align*}
\end{proof}

\section{Proof of Technical Lemmas}


 \begin{lemma}[Relative $+$ Additive Chernoff Bound (Lemma 2.3 in \cite{badanidiyuru2014fast})]
\label{lem:chernoff}
    Let $X_1,...,X_N$ be independent random variables such that for each $i$, $X_i\in[0,R]$ and $\mathbb{E}[X_i]=\mu$ for all $i$. Let $\widehat{X}_N=\frac{1}{N}\sum_{i=1}^NX_i$. Then 
    \begin{align*}
        P(|\widehat{X}_N-\mu|> \alpha\mu+\varepsilon)\leq 2\exp\{-\frac{N\alpha\varepsilon}{3R}\}.
    \end{align*}
\end{lemma}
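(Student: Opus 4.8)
The plan is to obtain this relative-plus-additive concentration bound from the classical multiplicative Chernoff inequalities for bounded independent variables, after a normalization that strips out the range $R$. First I would set $Y_i = X_i/R \in [0,1]$, $p = \mu/R$, and $\widehat{Y}_N = \widehat{X}_N/R$; then the event $\{|\widehat{X}_N - \mu| > \alpha\mu + \varepsilon\}$ is identical to $\{|\widehat{Y}_N - p| > \alpha p + \varepsilon'\}$ with $\varepsilon' := \varepsilon/R$, and the target exponent $N\alpha\varepsilon/(3R)$ becomes $N\alpha\varepsilon'/3$. Thus it suffices to prove the claim for $[0,1]$-valued summands, which removes $R$ from the remainder of the argument.

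Writing $T = \sum_{i=1}^N Y_i$ with mean $\Lambda := Np$, I would split the two-sided event by a union bound into an upper tail $P(T \ge (1+\delta)\Lambda)$ and a lower tail $P(T \le (1-\delta)\Lambda)$, where $\delta := \alpha + \varepsilon'/p$ is chosen precisely so that the absolute deviation $\delta\Lambda = N(\alpha p + \varepsilon')$ matches the quantity appearing in the event; this split is exactly where the leading factor $2$ comes from. For the lower tail I would apply the standard bound $P(T \le (1-\delta)\Lambda) \le \exp(-\delta^2\Lambda/2)$, noting it becomes vacuous once $\delta \ge 1$ since $T \ge 0$, and for the upper tail the standard bound $P(T \ge (1+\delta)\Lambda) \le \exp(-\min\{\delta,\delta^2\}\Lambda/3)$.

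It then remains to show that each exponent dominates $N\alpha\varepsilon'/3$; rewriting $N\alpha\varepsilon' = \alpha\Lambda(\delta - \alpha)$, this reduces to the purely algebraic inequality $\min\{\delta,\delta^2\} \ge \alpha(\delta-\alpha)$. In the sub-linear regime $\delta \le 1$ I would use $\delta^2 = (\alpha + \varepsilon'/p)^2 \ge 4\alpha(\varepsilon'/p) = 4\alpha(\delta-\alpha)$ by AM--GM, which is more than enough; in the regime $\delta \ge 1$ I would use $\min = \delta$ together with $\alpha(\delta - \alpha) \le \alpha\delta \le \delta$ under the normalization $\alpha \le 1$ that holds in every application (indeed the lemma is invoked with relative factor $\alpha = \varepsilon$). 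I expect this final casework — pinning the combined additive/relative deviation onto the single parameter $\delta$ and verifying that the constant $1/3$ survives in both tail regimes — to be the only real obstacle; everything else is textbook Chernoff machinery, and one could alternatively run the same argument directly from the moment generating function estimate $\mathbb{E}[e^{tY_i}] \le \exp(p(e^{t}-1))$ and optimizing over $t \ge 0$.
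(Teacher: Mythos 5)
You should first know that the paper itself contains no proof of this lemma: it is quoted verbatim from Badanidiyuru and Vondr\'ak (Lemma 2.3 of \cite{badanidiyuru2014fast}) and used as a black box, so there is no internal argument to compare yours to; your derivation is precisely the standard Chernoff machinery that the citation stands in for. Your proof is correct as far as it goes: the normalization by $R$, the union-bound split with $\delta=\alpha+\varepsilon'/p$, the standard tails $\exp(-\delta^{2}\Lambda/2)$ (lower, vacuous for $\delta\ge 1$) and $\exp(-\min\{\delta,\delta^{2}\}\,\Lambda/3)$ (upper), and the algebraic reduction to $\min\{\delta,\delta^{2}\}\ge\alpha(\delta-\alpha)$ are all sound, and your AM--GM step disposes of the regime $\delta\le 1$ with room to spare.

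The point that deserves emphasis is the caveat you flag in the regime $\delta>1$: the hypothesis $\alpha\le 1$ is not a harmless normalization you may assume for free --- it is genuinely necessary, because the lemma as literally stated (with no restriction on $\alpha$) is false. Take $X_i$ Bernoulli with $p=1/N$ and $R=1$, set $\varepsilon=100/N$, and let $\alpha=e^{30}$. As $N\to\infty$ the sum $T=N\widehat{X}_N$ is essentially Poisson of mean $1$, so the upper-tail event $T>\alpha+101$ has probability roughly $e^{-1}/m!\approx\exp(-m(\ln m-1))\approx\exp(-29\alpha)$ with $m\approx\alpha$, whereas the claimed bound is $2\exp(-N\alpha\varepsilon/3)=2\exp(-(100/3)\alpha)$, which is far smaller; the inequality fails. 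So no proof of the unrestricted statement can exist, and the right fix is exactly what you did: make $\alpha\le 1$ (or some comparable restriction tying $\alpha$ to $\varepsilon/\mu$) an explicit hypothesis. This costs nothing downstream, since every invocation in this paper uses $\alpha=\varepsilon$ or $\alpha=\varepsilon/2$ with $\varepsilon\in(0,1)$.
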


\section{\texttt{greedy-bi} Algorithm}\label{apdx:greedy}

\begin{algorithm}[H]
    \caption{\texttt{greedy-bi}}\label{alg:greedy-bi}
    \begin{algorithmic}[1]
        \State \textbf{Input: }$\varepsilon$, $\tau$
        \State \textbf{Output: }$S\in U$
        \State $S\gets\emptyset$
        \While{$f(S) \leq (1 - \varepsilon) \tau$}
        \State $u \gets\arg\max_{x\in U} \Delta f(S,x)$
        \State $S \gets S \cup \{u\}$
        \EndWhile
        \textbf{return }$S$
    \end{algorithmic}
\end{algorithm}

\section{Additional Experiments}\label{apdx:add-exp}

\subsection{Implementation}

\textbf{Maximum Coverage.} 
In maximum coverage problems, the objective is to identify a set of fixed nodes that optimally maximize coverage within a network or graph. Given a graph $G = (V, E)$, where $V$ and $E$ respectively represent the set of vertices and nodes in the graph. Define a function $N : V \rightarrow 2^V$ as $N(v) = \{u, (u,v) \in E\}$, which represents the collection of neighbors of node $v$. Then the objective of this maximization problem can be defined as the monotone submodular function $f(S) = |\cup_{v \in S} N(v)|$. The dataset utilized in our maximum coverage experiments is the Twitch\_5000 of Twitch Gamers, which is a uniformly sampled subgraph of the Twitch Gamers dataset~\citep{twitch}, comprising 5,000 vertices (users) who speak English, German, French, Spanish, Russian, or Chinese. We aim to develop a solution with a high $f$ value exceeding a given threshold $\tau$ while ensuring a fair balance between users who speak different languages.

\paragraph{Set Covering}
For the datasets annotated with tags, the objective of set covering is to extract a diverse subset that maximizes the objective function $f(S) = |\cup_{x \in S} t(x)|$, where the function $t$ maps an element $x$ in a set $N$ to its corresponding tags $t(x)$. The dataset employed in our set covering experiments is a subset of Corel5k~\cite{corel}. For each item in the dataset, we randomly added a category from $\{0,2,3,4,5\}$ with a probability of $0.5$. Any item not assigned a random category was assigned category $1$. By ensuring a balanced distribution of solutions across categories, we aim to extract a representative set with a high $f$ value that surpasses a given threshold $\tau$.

\textbf{Experimental setup for max coverage.}
We implement our proposed algorithm \conv leveraging two subroutines provided in Appendix~\ref{appdx:alg_for_FSM}: \greedyalg{} (Algorithm~\ref{alg:fairness-bi}) and \threalglong{} (Algorithm~\ref{alg:thres-fairness-bi}). We compare our approach to the greedy baseline, \texttt{greedy-bi}, provided in Appendix~\ref{apdx:greedy}. 

To ensure a fair comparison based on the quality of the solutions, we use different default values for the parameter $\varepsilon$ in each algorithm. This is because each algorithm has a varying approximation ratio. Specifically, we set $\varepsilon = 0.1, \alpha=0.2, u_c = 1.1 / C, l_c = 0.9 / C$ for \texttt{greedy-bi} and \greedyalg{} (where $C$ is the number of groups). For $\threalglong$, we use $\varepsilon = 0.1$ while keeping the other parameters the same.

All the experiments are conducted on a single machine equipped with a 13th Gen Intel(R) Core(TM) i7-13700 CPU, 32GB of RAM, and Ubuntu 22.04.3 LTS. Each experiment with one set of parameters can be done in 120 seconds.

\subsection{Experiments Setup for set covering. } 

We implement our proposed algorithm \conv leveraging two subroutines provided in Appendix~\ref{appdx:alg_for_FSM}: \greedyalg{} (Algorithm~\ref{alg:fairness-bi}) and \threalglong{} (Algorithm~\ref{alg:thres-fairness-bi}). We compare our approach to the greedy baseline, \texttt{greedy-bi}, provided in Appendix~\ref{apdx:greedy}. 

To ensure a fair comparison based on the quality of the solutions, we use different default values for the parameter $\varepsilon$ in each algorithm. This is because each algorithm has a varying approximation ratio. Specifically, we set $\varepsilon = 0.1, \alpha=0.2, u_c = 1.1 / C, l_c = 0.9 / C$ for \texttt{greedy-bi} and \greedyalg{} (where $C$ is the number of groups). For $\threalglong$, we use $\varepsilon = 0.1$ while keeping the other parameters the same.

All the experiments are conducted on a single machine equipped with a 13th Gen Intel(R) Core(TM) i7-13700 CPU, 32GB of RAM, and Ubuntu 22.04.3 LTS. Each experiment with one set of parameters can be done in 30 seconds.

\begin{figure}[htbp!]
    \centering
    \begin{subfigure}{0.33\textwidth}
      \includegraphics[width=1\linewidth]{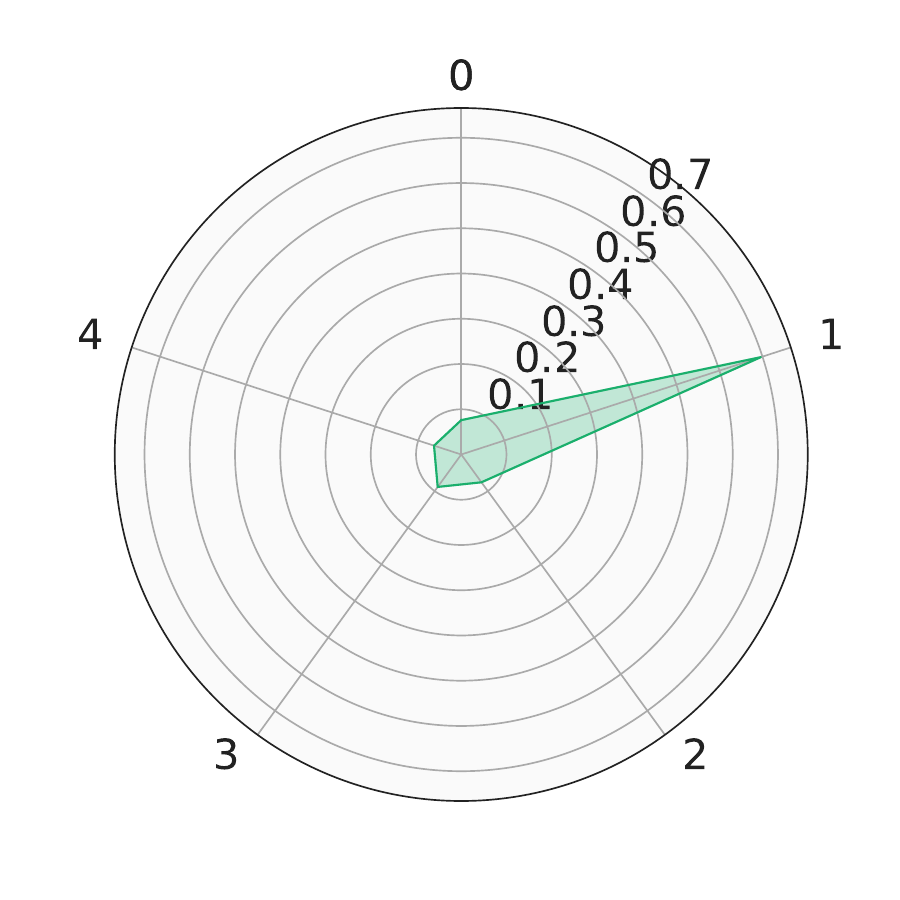}
      \caption{\texttt{greedy-bi}}
      \label{fig:radar-greedy-bi-set}
    \end{subfigure}%
    \begin{subfigure}{0.33\textwidth}
      \includegraphics[width=1\linewidth]{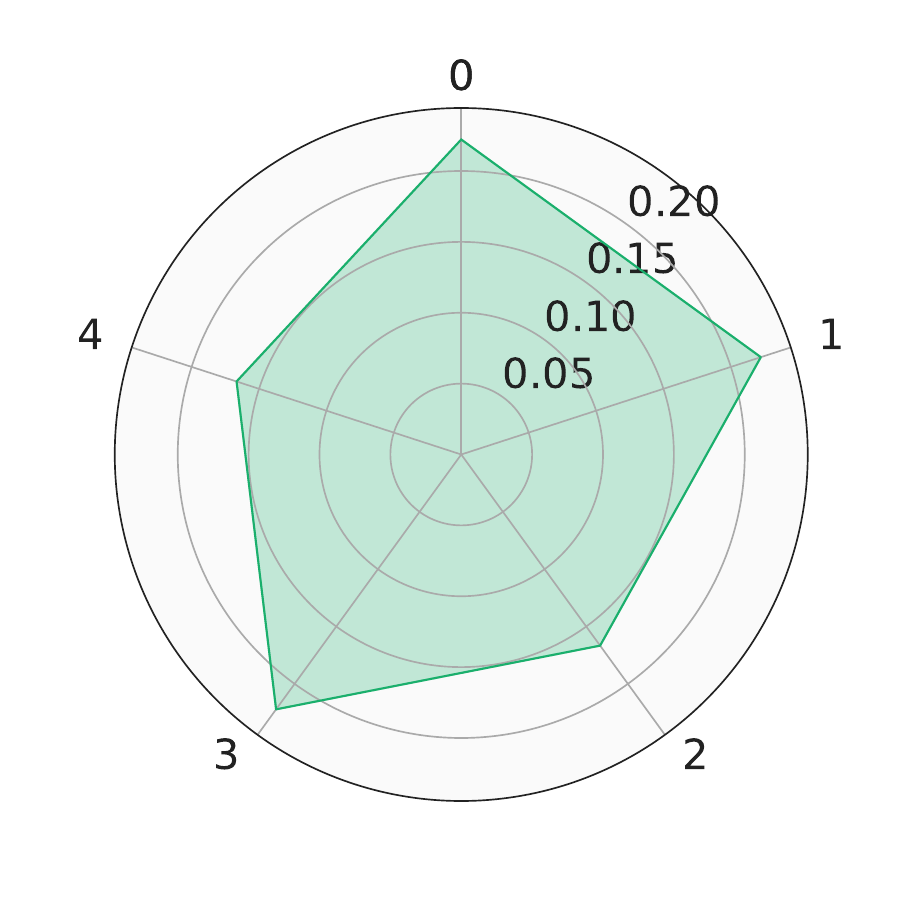}
      \caption{\texttt{greedy-fairness-bi}}
      \label{fig:radar-greedy-fairness-bi-set}
    \end{subfigure}
    \begin{subfigure}{0.33\textwidth}
      \includegraphics[width=1\linewidth]{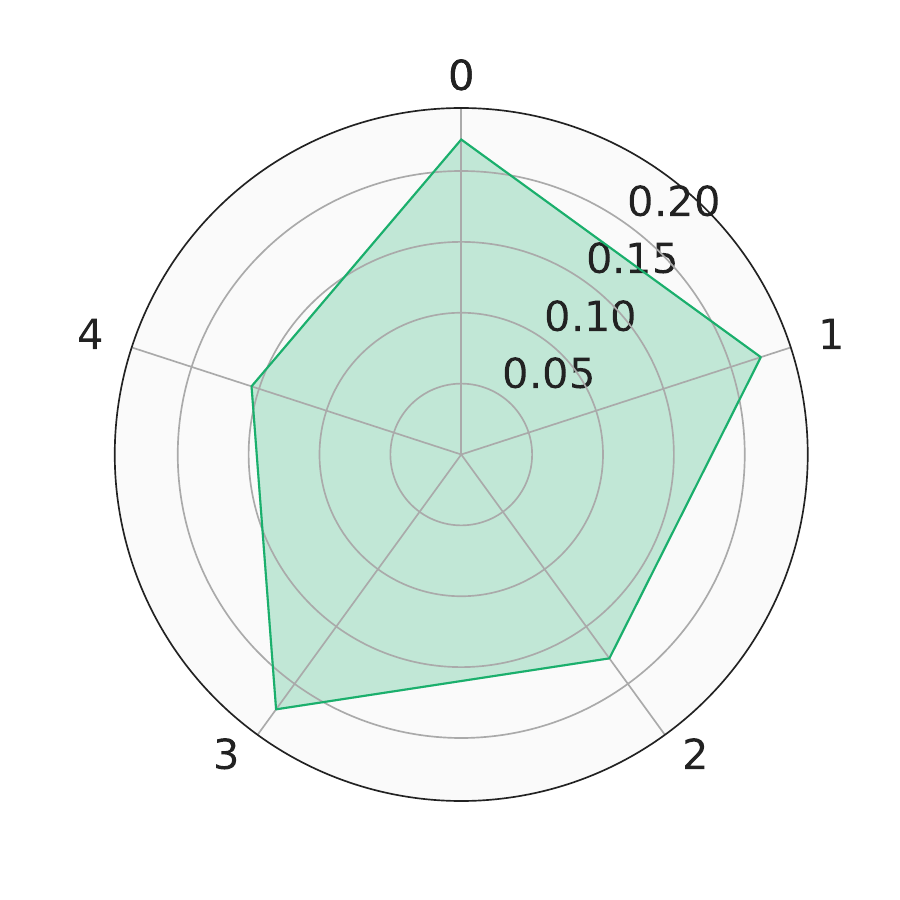}
      \caption{\texttt{threshold-fairness-bi}}
      \label{fig:radar-threshold-fairness-bi-set}
    \end{subfigure}

    \begin{subfigure}{0.33\textwidth}
      \includegraphics[width=1\linewidth]{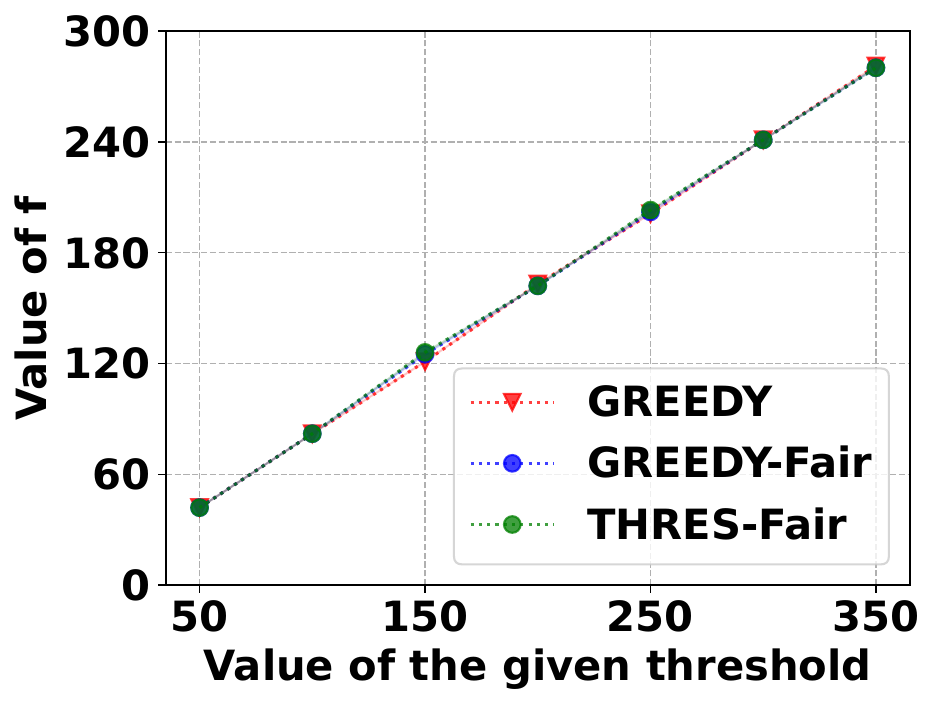}
      \caption{$f$}
      \label{fig:set-cover-tau-f}
    \end{subfigure}%
    \begin{subfigure}{0.33\textwidth}
      \includegraphics[width=1\linewidth]{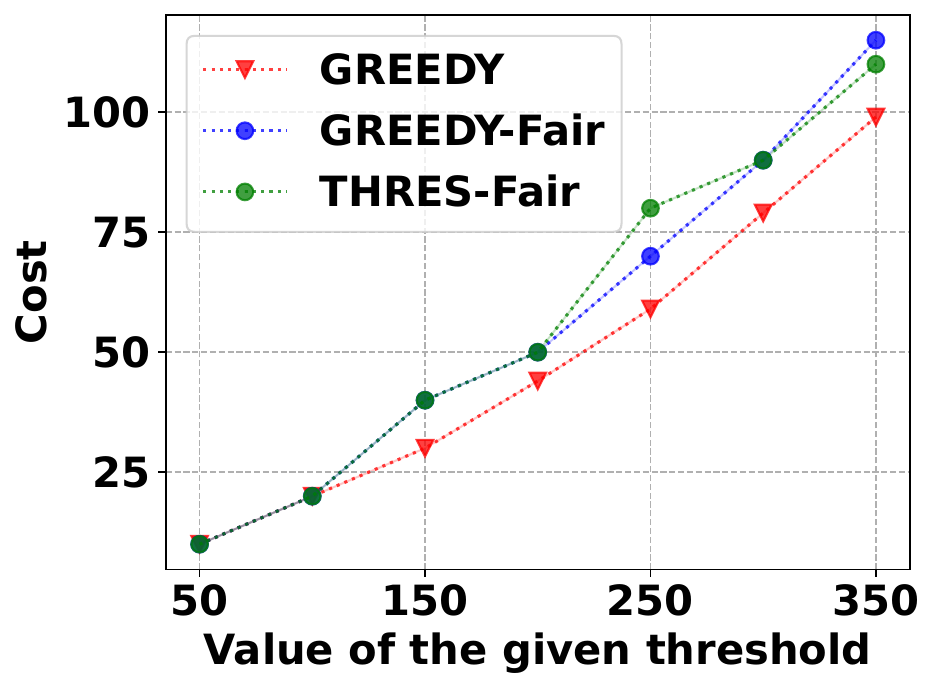}
      \caption{Cost}
      \label{fig:set-cover-tau-cost}
    \end{subfigure}
    \begin{subfigure}{0.33\textwidth}
      \includegraphics[width=1\linewidth]{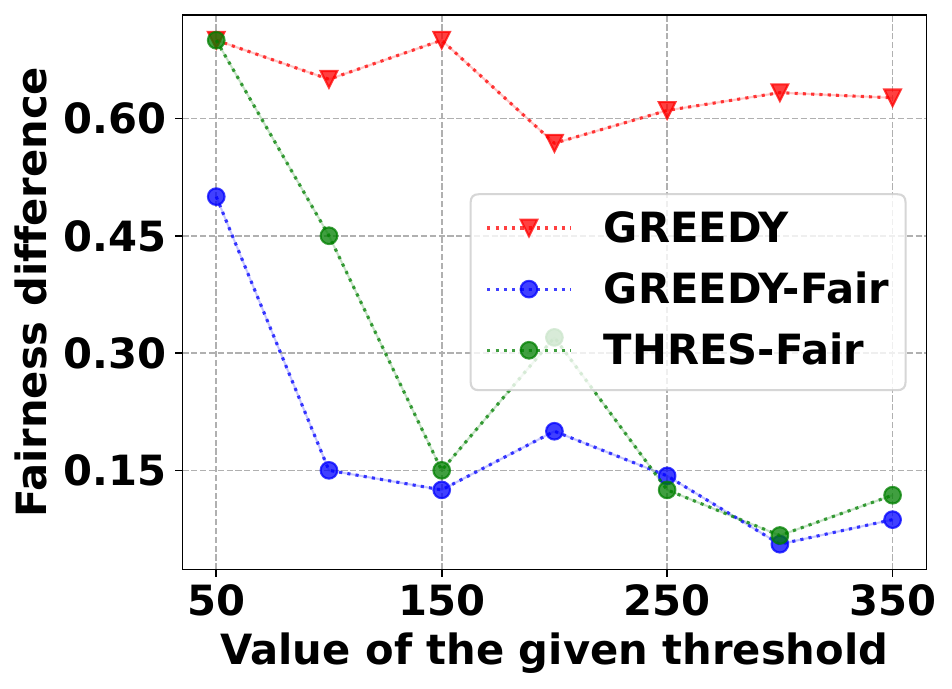}
      \caption{Fairness difference}
      \label{fig:set-cover-tau-diff}
    \end{subfigure}

    \caption{Performance comparison on the Corel dataset for Set Covering. \ref{fig:radar-greedy-bi-set}, \ref{fig:radar-greedy-fairness-bi-set}, \ref{fig:radar-threshold-fairness-bi-set} illustrate the distribution of images across various categories in the solutions produced by different algorithms with $\tau = 300$. $f$: the value of the objective submodular function. Cost: the size of the returned solution. Fairness difference: $(\max_c |S \cap U_c| - \min_c |S \cap U_c|) / |S|$}
    \label{fig:image-sum-central-exp}
\end{figure}

\subsection{Results}

Figures \ref{fig:radar-greedy-bi-set}, \ref{fig:radar-greedy-fairness-bi-set} and \ref{fig:radar-threshold-fairness-bi-set} showcase the distribution of images across various categories in the solutions produced by these algorithms with $\tau = 300$. Figures \ref{fig:set-cover-tau-f}, \ref{fig:set-cover-tau-cost} and \ref{fig:set-cover-tau-diff} present the performance of these algorithms ($f$ value, cost, and fairness difference) for varying values of $\tau$. As shown in Figure~\ref{fig:radar-greedy-bi-set}, with $\tau = 300$, over $70 \%$ of the pictures in the solution returned by \texttt{greedy-bi} are labeled as category `1'. While the solutions produced by \greedyalg{} and \threalglong{} exhibit way fairer distributions across various categories as shown in Figures \ref{fig:radar-greedy-bi-set} and \ref{fig:radar-greedy-fairness-bi-set}). Similarly, as the value of given $\tau$ increases, the magnitude of this difference also increases (see Figure \ref{fig:set-cover-tau-diff}). Figure \ref{fig:set-cover-tau-f} showcases that for all these algorithms the objective function value $f(S)$ scales almost linearly with the threshold $\tau$, which aligns with the theoretical guarantees of the approximation ratio.  

Notably, unlike the results presented in Section \ref{sec:exp}, our proposed algorithms achieve comparable costs to the \texttt{greedy-bi} solution (as shown in Figure~\ref{fig:set-cover-tau-cost}) on the Corel5k dataset. This is likely because the Corel5k dataset is less biased and the marginal gains for adding different elements are more uniform, compared to the Twitch Gamer dataset.

\section{Limitations}\label{apdx:limit}

As the first work focused on the fair submodular cover problems, we demonstrate the effectiveness of our proposed unified paradigm, \conv{}, by integrating it with four existing algorithms. While, a wider range of submodular maximization algorithms could potentially benefit from \conv{} under fairness constraints, leading to more empirically valuable methods for real-world applications. We consider this work a springboard for further exploration. Beyond the applications of social network analysis (maximum coverage) and image summarization (set covering) discussed in Section~\ref{sec:exp} and Appendix~\ref{apdx:add-exp}, our framework readily lends itself to scenarios like video summarization~\citep{gygli2015video}, movie recommendation~\citep{ohsaka2021approximation}, and others under fairness constraints.

\end{document}